\DeclareMathAlphabet{\pazocal}{OMS}{zplm}{m}{n}
\definecolor{PalePurp}{rgb}{0.66,0.57,0.66}
\algrenewcommand\algorithmicensure{\textbf{Output:}}
\def\argmax{\mathop\text{argmax}}
\def\argmin{\mathop\text{argmin}}
\def\maximize{\mathop\text{maximize\,}}
\def\minimize{\mathop\text{minimize\,}}
\begin{document}

\title{Robustness and risk management via distributional dynamic programming}

\author{\name Mastane Achab \email mastane.achab@gmail.com \\
       \addr Universitat Pompeu Fabra\\
       Barcelona, Spain
       \AND
       \name Gergely Neu \email gergely.neu@gmail.com \\
       \addr Universitat Pompeu Fabra\\
       Barcelona, Spain}


\maketitle

\begin{abstract}
In dynamic programming (DP) and reinforcement learning (RL), an agent learns to act optimally in terms of expected long-term return by sequentially interacting with its environment modeled by a Markov decision process (MDP).
More generally in distributional reinforcement learning (DRL), the focus is on the whole distribution of the return, not just its expectation.
Although DRL-based methods produced state-of-the-art performance in RL with function approximation, they involve additional quantities (compared to the non-distributional setting) that are still not well understood.
As a first contribution, we introduce a new class of distributional operators, together with a practical DP algorithm for policy evaluation, that come
with a \emph{robust MDP} interpretation.
Indeed, our approach reformulates through an augmented state space where each state is split into a \emph{worst-case
substate} and a \emph{best-case substate}, whose values are maximized by \emph{safe} and \emph{risky} policies
respectively.
Finally, we derive distributional operators and DP algorithms solving a new control task:
How to distinguish safe from risky optimal actions in order to break ties in the space of optimal policies?
\end{abstract}

\begin{keywords}
  robust Markov decision process, distributional reinforcement learning, average value-at-risk, coherent risk measure, linear programming
\end{keywords}

\section{Introduction}

This paper is concerned with \emph{robust} sequential decision making in
an uncertain environment, modeled by a Markov decision process (MDP).
In the classical setting, the decision maker is looking for a strategy (or ``policy'') that is optimal in terms of a \emph{risk-neutral} objective function.
Typically, this objective is the \emph{expected value} of some random cumulative return, accounting for both immediate and future rewards.
The dynamic programming (DP) approach comes with practical
algorithms to evaluate and optimize such objective functions, given the knowledge of the environment's dynamics (see
\citealp{puterman2014markov}).
DP is a popular framework for many applications ranging from computer programming to economics and inventory management:
we refer to \cite{bertsekas2000dynamic} for an overview.
Reinforcement learning (RL) aims at solving the same problem as DP when the dynamical model is unknown: the learner
only observes trajectories sampled from the MDP (see \citealp{sutton2018reinforcement}).

An inherent feature of standard dynamic programming procedures and most RL algorithms is their risk-neutrality,
meaning that they do not differentiate between strategies with the same expected return but different levels of risk
(for instance, different variances).
As a vanilla example, getting $0$ with probability (w.p.) $1$ is \emph{safer} than getting $+1$ w.p. $1/2$ and $-1$ w.p. $1/2$, though both scenarios are equivalent in expectation.
For this reason, standard DP and RL methods need to be adjusted to take risk into account, for instance by considering
alternative objective functions such as mean minus variance in
\cite{mannor2011mean}, \emph{conditional value-at-risk} (``CVaR'' in short) in \cite{osogami2012robustness} and \cite{chow2015risk}, or Chernoff functionals in \cite{moldovan2012risk}.

A more general approach, first proposed by \cite{morimura2010nonparametric} and \cite{morimura2012parametric}, is to
handle the whole distribution of the long-term return in a dynamic programming framework, not just its expectation or
some risk measure. This approach is fittingly called \emph{distributional} and is the main focus of our work.
Recently, \cite{bellemare2017distributional} introduced the distributional reinforcement learning (DRL) framework
and proposed the \textsc{C51} algorithm, achieving state-of-the-art performance in playing video games on the Atari 2600 benchmark \citep{bellemare2013arcade}.
\cite{rowland2018analysis} analyzed \textsc{C51} with the Cram\'er distance
and \cite{bellemare2019distributional} described another DRL algorithm that approximates distributions using the same
metric.
Many other DRL algorithms were proposed such as \textsc{QR-DQN} \citep{dabney2018distributional} and \textsc{IQN} \citep{dabney2018implicit} both based on quantile regression or \textsc{ER-DQN} in \cite{rowland2019distributional} based on expectile estimation.
Most of these DRL approaches rely on summarizing distributions by $N\ge 1$ \emph{atoms} $Q_1(x,a),\dots,Q_N(x,a)$
instead of the single state-action value function $Q(x,a)$ in classic RL or DP.
Although there are empirical evidence of the regularizing effect of learning several atoms in a function approximation setup \citep{lyle2019comparative}, finding an intuitive explanation for these quantities is still an open problem to the best of our knowledge.
Hence, it seems natural to ask the following question: ``Is there any meaningful interpretation of these atoms?''.
The answer provided by this paper is ``Yes, a \emph{robust MDP} interpretation!''.

The robust MDP framework is a seemingly unrelated way of dealing with uncertainties in sequential decision making,
with the main idea being the optimization of a worst-case objective function subject to an uncertainty set over the
true environment parameters (\citealp{iyengar2005robust}, \citealp{nilim2005robust}). In this work, we show that the
usual notion of robustness in MDPs can be directly derived from the \emph{distributional Bellman operator}
combined with a carefully chosen projection to a family of distributions involving only two atoms: this is our main
contribution. Additionally, we show that once all optimal policies have been identified and isolated from suboptimal
ones (by solving classical DP), our methodology allows a further discrimination among the space of optimal policies, by
distinguishing safe from risky optimal actions.

The rest of the paper is organized as follows. After providing the necessary technical
background in Section~\ref{sec:background}, we introduce our framework for robust distributional dynamic programming in
Section~\ref{sec:avar} and give an interpretation of the resulting value functions from the perspective of risk-measure
theory in Section~\ref{sec:interpretations}. Finally, in Section~\ref{sec:safe_risky}, we propose dynamic programming
methods for tiebreaking in the space of optimal policies to favor safe or risky policies, and provide some numerical
illustration to our results in Section~\ref{sec:experiments}. The paper is concluded with Section~\ref{sec:conclusion}.

\paragraph{Notations.}
Throughout the paper, we denote by $\mathbf{1}$ the all-ones vector (the dimensionality will always be clear from the context).
We let $\mathcal{P}_b(\mathbb{R})$ be the set of probability measures on $\mathbb{R}$ with bounded support, and $\mathcal{P}(\pazocal{E})$ the set of probability mass functions on any countable set $\pazocal{E}$, whose cardinality is denoted by $|\pazocal{E}|$.
The support of any discrete distribution $q\in \mathcal{P}(\pazocal{E})$ is: $ \text{Support}(q) = \{ y \in \pazocal{E} : q(y) > 0 \} $.
The cumulative distribution function (CDF) of a real-valued random variable $Z$ is the mapping $F(z)=\mathbb{P}(Z\le z)$
($\forall z\in\mathbb{R}$), and we denote its generalized inverse distribution function (a.k.a. quantile function) by
$F^{-1}: \tau\in (0, 1)\mapsto \inf\{z\in\mathbb{R}, F(z)\ge \tau\}$\footnote{We will often express the expectation of $Z$ with $F^{-1}$: Lemma \ref{lem:quantile_expectation} (in Appendix A) recalls the classic formula $\mathbb{E}[Z]=\int_{\tau=0}^1 F^{-1}(\tau)d\tau $.}.
For any probability measure $\nu\in\mathcal{P}_b(\mathbb{R})$
and measurable function $f:\mathbb{R}\rightarrow \mathbb{R}$, the \emph{pushforward measure} $\nu\circ f^{-1}$ is defined for any Borel set $A\subseteq \mathbb{R}$ by $\nu~\circ~f^{-1}(A) = \nu(\{ z \in \mathbb{R} : f(z) \in A \})$.
In this work, we will only encounter the affine case $f_{r_0,\gamma}(z)= r_0 + \gamma z$ (with $r_0\in\mathbb{R}, \gamma\in [0,1)$) for which
$\nu\circ f_{r_0,\gamma}^{-1}\in\mathcal{P}_b(\mathbb{R})$ and
$\nu\circ f_{r_0,\gamma}^{-1}(A) = \nu(\{ \frac{z-r_0}{\gamma} : z \in A \})$ if $\gamma\neq 0$,
or $\nu\circ f_{r_0,\gamma}^{-1} = \delta_{r_0}$ is the Dirac measure at $r_0$ if $\gamma=0$.
Lastly, for two probability measures $\nu,\nu'$ in $\mathcal{P}_b(\mathbb{R})$, $\nu\ll\nu'$ means that $\nu$ is absolutely continuous with respect to $\nu'$ (i.e. $\nu'(A)=0 \Rightarrow \nu(A)=0$)
and $\frac{d\nu}{d\nu'}$ is the Radon-Nikodym derivative.

\section{Background}\label{sec:background}
This section presents basic technical background on Markov decision processes, robust MDPs, and distributional dynamic
programming with $2$-Wasserstein projections.

\subsection{Markov decision process}
In this article, we study one of the most fundamental models for sequential decision-making problems: discounted Markov
decision processes (MDPs) with finite state and action spaces. Here we only describe the most essential elements of
this framework and refer to the classic textbook of \citet{puterman2014markov} for details.
A Markov decision process is described by the tuple $(\pazocal{X}, \pazocal{A}, P, r, \gamma)$ with
finite state space $\pazocal{X}$,
finite action space $\pazocal{A}$,
transition kernel $P: \pazocal{X}\times\pazocal{A} \rightarrow \mathcal{P}(\pazocal{X})$,
reward function $r: \pazocal{X}\times\pazocal{A}\times\pazocal{X} \rightarrow \mathbb{R}$
and discount factor $0 \le \gamma < 1$.
An MDP describes a sequential process where in each round of interaction, the decision-making agent chooses an action
$a\in\pazocal{A}$ while the environment occupies some state $x\in\pazocal{X}$, then the next state $X_1$ is sampled from
the distribution $P(\cdot | x, a)\in\mathcal{P}(\pazocal{X})$ and the agent gets the reward $r(x, a, X_1)$.
A \emph{stationary Markovian policy} $\pi: \pazocal{X}\rightarrow\mathcal{P}(\pazocal{A})$ maps any state $x$ to a distribution over the actions $\pi(\cdot|x)\in\mathcal{P}(\pazocal{A})$.
We denote by $\Pi$ the set of stationary Markovian policies.
The two major classes of problems in an MDP are the following.

\paragraph{The policy evaluation task:} the goal is to assess the quality of a policy $\pi$ in terms of expected return,
 through its state-action value function $Q^\pi$ defined for all $(x,a) \in \pazocal{X}\times\pazocal{A}$ as follows,
 \begin{equation*}
Q^\pi(x,a) = \mathbb{E}\left[ \sum_{t=0}^\infty \gamma^t r(X_t,A_t,X_{t+1}) \, \bigg| \, X_0=x, A_0=a \right] \,,
\end{equation*}
 where $X_{t+1}\sim P(\cdot|X_t,A_t)$ and $A_{t+1}\sim \pi(\cdot|X_{t+1})$.
 The Bellman equation verified by $Q^\pi$ is
 \begin{equation*}
   Q^\pi(x,a) = \sum_{(x',a')\in\pazocal{X}\times\pazocal{A}} P(x'|x,a)\pi(a'|x') \left( r(x,a,x') + \gamma Q^\pi(x',a') \right) .
 \end{equation*}
 The corresponding value function is $V^\pi(x)=\sum_a \pi(a|x) Q^\pi(x,a)$.

\paragraph{The control task:} find an optimal policy $\pi^*$ that simultaneously maximizes the values across all
states,
 \begin{equation*}
   V^{\pi^*}(x) = \sup_{\pi\in\Pi} V^\pi(x) =: V^*(x)
   \quad \text{ and } \quad
   Q^{\pi^*}(x,a) = \sup_{\pi\in\Pi} Q^\pi(x,a) =: Q^*(x,a) \,,
 \end{equation*}
 where $Q^*$ satisfies the Bellman optimality equation
 \begin{equation*}
   Q^*(x,a) = \sum_{x'} P(x'|x,a) \left( r(x,a,x') + \gamma \max_{a'} Q^*(x',a') \right) .
 \end{equation*}
 Moreover $V^*(x)=\max_a Q^*(x,a)$, where the maximizing actions
 include the support of any optimal policy, in any state:
 \begin{equation*}
  \pi^* \text{ is optimal } \quad \text{ if and only if } \quad \forall x,\, \text{Support}(\pi^*(\cdot | x)) \subseteq \pazocal{A}^*(x) := \argmax_{a} Q^*(x,a) \,.
 \end{equation*}

Equivalently, $Q^\pi$ (resp.~$Q^*$) can be seen as the unique fixed point of the \emph{Bellman operator} (resp.~\emph{Bellman optimality operator}),
which is a $\gamma$-contraction\footnote{A function mapping a metric space to itself is called a $\gamma$-contraction if
it is Lipschitz continuous with Lipschitz constant $\gamma<1$.} in supremum norm\footnote{The supremum norm of any
function $h=(h_1,\dots,h_k):\pazocal{X}\times\pazocal{A}\rightarrow \mathbb{R}^k$ is $||h||_\infty =
\sup_{(x,a,i)\in\pazocal{X}\times\pazocal{A}\times\{1,\dots,k\}} |h_i(x,a)|$.} \citep{bertsekas1996neuro}.
The dynamic programming (DP) approach consists in recursively applying these contractive operators
until convergence to their fixed points (see \citealp{bellman57dynamic}).
In Section~\ref{subsec:DBO}, we recall how the same idea can be extended to entire probability distributions, not just
expected values.

\begin{figure}
\centering
\includegraphics[width=0.8\linewidth]{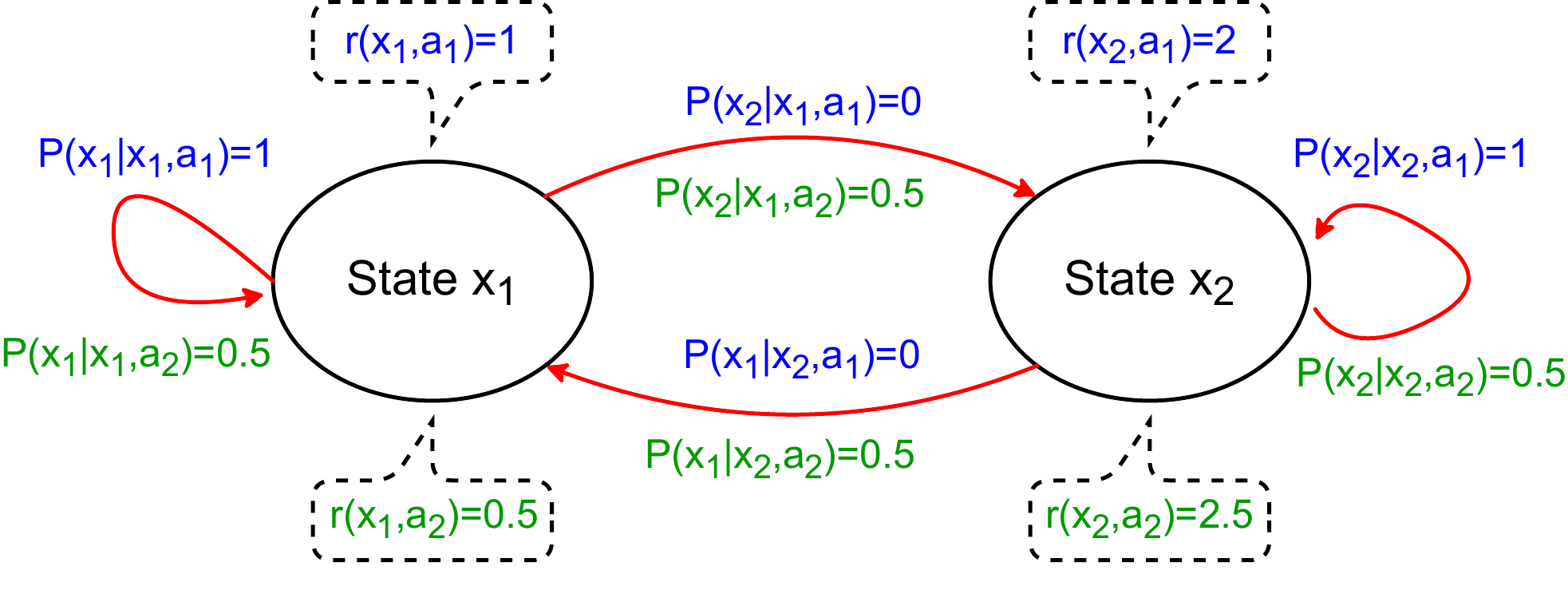}
\caption{Markov decision process with two states $\pazocal{X}=\{x_1,x_2\}$ and two actions $\pazocal{A}=\{a_1,a_2\}$. In this example, the reward function does not depend on the next state: $r(x,a,\cdot)\equiv r(x,a)$.}
\label{fig:ex_mdp}
\end{figure}

\subsection{Robust MDPs}
\label{subsec:robust_mdps}

Sometimes, there may be uncertainty in the transition probabilities,
for instance when they are estimated from noisy observations, or when the state transitions can be influenced by
an external adversary.
The robust MDP framework models this situation with an \emph{uncertainty set} $\Upsilon$ consisting of a family of
candidate transition kernels $\boldsymbol{P}$.
The worst-case value function of a policy $\pi$ with respect to this set is defined for each state $s$ as
\begin{equation}
  \label{eq:worst}
  V_{\text{worst}}^\pi(s) = \inf_{\boldsymbol{P}\in\Upsilon} V_{\boldsymbol{P}}^\pi(s) \,,
\end{equation}
where $V_{\boldsymbol{P}}^\pi$ denotes the value function in the MDP with kernel $\boldsymbol{P}$.
Then, given an initial state $s_0$, the goal is to find a robust optimal policy $\pi^*$ satisfying
\begin{equation*}
  V_{\text{worst}}^{\pi^*}(s_0)\ge V_{\text{worst}}^\pi(s_0) \quad \text{for all } \pi \, .
\end{equation*}
This setting has been extensively studied in the literature under various assumptions for
the uncertainty set.
In \cite{iyengar2005robust} and \cite{nilim2005robust}, the uncertainty set $\Upsilon$
is a Cartesian product over state-action pairs:
\begin{equation*}
  \Upsilon = \bigtimes_{s,a} \Upsilon_{s,a} = \{ \boldsymbol{P}=(\boldsymbol{P}(\cdot|s,a))_{s,a} : \boldsymbol{P}(\cdot|s,a) \in \Upsilon_{s,a} \} .
\end{equation*}
In other words, each component $\boldsymbol{P}(\cdot|s,a)$ can be chosen independently
among the set $\Upsilon_{s,a}$ in the infimum in Eq. (\ref{eq:worst}).
This is the $(s,a)$-rectangularity assumption, under which there exists a robust optimal policy that is stationary, Markovian and deterministic, that can be computed by robust dynamic programming.
Similarly, the $s$-rectangularity assumption is considered in \cite{wiesemann2013robust}:
\begin{equation*}
  \Upsilon = \bigtimes_{s} \Upsilon_{s} = \{ \boldsymbol{P}=(\boldsymbol{P}(\cdot|s,\cdot))_{s} : \boldsymbol{P}(\cdot|s,\cdot) \in \Upsilon_{s} \} .
\end{equation*}
Under this weaker assumption, there is a stationary Markovian robust optimal policy, but (unfortunately) it may not be deterministic.
More general assumptions have also been studied.
Factor matrix uncertainty sets along with the so-called ``$r$-rectangularity'' structure
are investigated in \cite{goyal2018robust}. This alternative hypothesis is proved to generalize
the $(s,a)$-rectangulariy condition, and to produce as well a deterministic robust optimal policy.
In \cite{mannor2016robust}, the authors consider a generalization
of $s$-rectangularity, called $k$-rectangularity.
In section \ref{sec:interpretations}, we show that our distributional approach reformulates as a robust MDP outside of any of the aforementioned rectangulariy assumptions.
Further in section \ref{sec:safe_risky}, our setting leads to a deterministic robust optimal policy.
But first, we need to recall the definition of the distributional Bellman operator, which is the main
tool to handle distributions in MDPs.

\subsection{The distributional Bellman operator}
\label{subsec:DBO}

The \emph{distributional Bellman operator} (DBO) was introduced in \cite{morimura2010nonparametric} and \cite{morimura2012parametric} for CDFs, in \cite{bellemare2017distributional} with random variables;
here we recall its formulation based on \emph{pushforward measures} from \cite{rowland2018analysis}.
On a high level, the DBO takes as input a \emph{distribution function}
$\mu \in \mathcal{P}_b(\mathbb{R})^{\pazocal{X}\times\pazocal{A}}$
that models the collection of return distributions indexed by
state-action pairs, and returns another distribution function $\pazocal{T}^\pi \mu$ corresponding to the distribution
of returns after being pushed through the transition dynamics. The more formal definition is the following:
\begin{definition}{\textsc{(Distributional Bellman operator)}.}
  \label{def:DBO}
  Let $\pi\in\Pi$. The distributional Bellman operator $\pazocal{T}^\pi : \mathcal{P}_b(\mathbb{R})^{\pazocal{X}\times\pazocal{A}} \rightarrow \mathcal{P}_b(\mathbb{R})^{\pazocal{X}\times\pazocal{A}}$ is defined for any distribution function $\mu=~(\mu^{(x,a)})_{x,a}$ by
  \begin{equation*}
    (\pazocal{T}^\pi \mu)^{(x,a)} = \sum_{(x',a')\in\pazocal{X}\times\pazocal{A}} P(x'|x,a)\pi(a'|x') \cdot \mu^{(x',a')}\circ f_{r(x,a,x'),\gamma}^{-1} \quad , \quad
    \text{ for all } (x,a)\in\pazocal{X}\times\pazocal{A} \,.
  \end{equation*}
\end{definition}

In particular, the DBO is mixture-linear: for all $\mu_1,\mu_2$ and $0\le \lambda\le 1$,
\begin{equation*}
  \pazocal{T}^\pi (\lambda \mu_1+(1-\lambda)\mu_2) = \lambda \pazocal{T}^\pi \mu_1 + (1-\lambda) \pazocal{T}^\pi \mu_2 .
\end{equation*}
It was proved in \cite{bellemare2017distributional} that $\pazocal{T}^\pi$ is a $\gamma$-contraction
in the maximal $p$-Wasserstein metric\footnote{We recall that the $p$-Wasserstein distance ($p\ge 1$) between two
probability distributions $\nu_1,\nu_2$ on $\mathbb{R}$ with CDFs $F_1,F_2$ is defined as
$ W_p(\nu_1, \nu_2) = \left( \int_{\tau=0}^1 \left| F_1^{-1}(\tau) - F_2^{-1}(\tau) \right|^p d\tau\right)^\frac{1}{p}
$, and for $p=\infty$ as $ W_\infty(\nu_1, \nu_2) = \sup_{\tau\in(0, 1)}
|F_1^{-1}(\tau) - F_2^{-1}(\tau)| $.}
\begin{equation*}
\widetilde{W}_p(\mu_1, \mu_2) = \max_{(x,a)\in\pazocal{X}\times\pazocal{A}} W_p(\mu_1^{(x, a)}, \mu_2^{(x, a)})
\end{equation*}
at any order $p\in [1, +\infty]$.
Akin to the ``non-distributional'' case, we know from Banach's fixed point theorem that
iterating the distributional Bellman operation, namely $\pazocal{T}^\pi \circ \dots \circ \pazocal{T}^\pi \mu$ (starting
from an arbitrary initial $\mu$), defines a sequence that converges exponentially fast to the unique fixed point
$\mu_\pi = \pazocal{T}^\pi \mu_\pi$. This equality is called the \emph{distributional Bellman
equation}.
The fixed point $\mu_\pi=(\mu_\pi^{(x,a)})_{x,a}$ is the collection of the probability laws of the returns:
\begin{equation}
  \label{eq:law_return}
\text{for all } (x,a), \quad  \mu_\pi^{(x,a)} = \text{Law}\left( \sum_{t=0}^\infty \gamma^t r(X_t,A_t,X_{t+1}) \, \bigg|\, X_0=x,A_0=a \,;\,\pi \right) \,.
\end{equation}
Nevertheless, it may be hard in practice to compute $\pazocal{T}^\pi \mu$,
as it requires to deal with general distributions.
In the example below, we focus on a basic family of probability distributions: the \emph{atomic distributions}, over which the DBO is stable.
\begin{example}{\textsc{(``Atomic fission'')}.}
  \label{ex:atomic}
  Let $\mu=(\mu^{(x,a)})_{x,a}$ be an atomic distribution function with
  \begin{equation*}
    \mu^{(x,a)} = \sum_{i=1}^N \alpha_i(x,a) \delta_{Q_i(x,a)} \,,
  \end{equation*}
  where $N\ge 1$, $\alpha_1(x,a),\dots,\alpha_N(x,a) \ge 0$, $\alpha_1(x,a)+\dots+\alpha_N(x,a)=1$.
  Then for any $(x,a)$,
  \begin{equation*}
    (\pazocal{T}^\pi \mu)^{(x,a)} = \sum_{(x',a')\in\pazocal{X}\times\pazocal{A}} P(x'|x,a)\pi(a'|x') \sum_{i=1}^N \alpha_i(x',a') \delta_{r(x,a,x') + \gamma Q_i(x',a')} \,,
  \end{equation*}
  which is still an atomic distribution, but with up to $|\pazocal{X}||\pazocal{A}|$ times more particles.
\end{example}

\begin{figure}

\centering
\begin{subfigure}{.49\textwidth}
  \centering
  \includegraphics[width=1.0\linewidth]{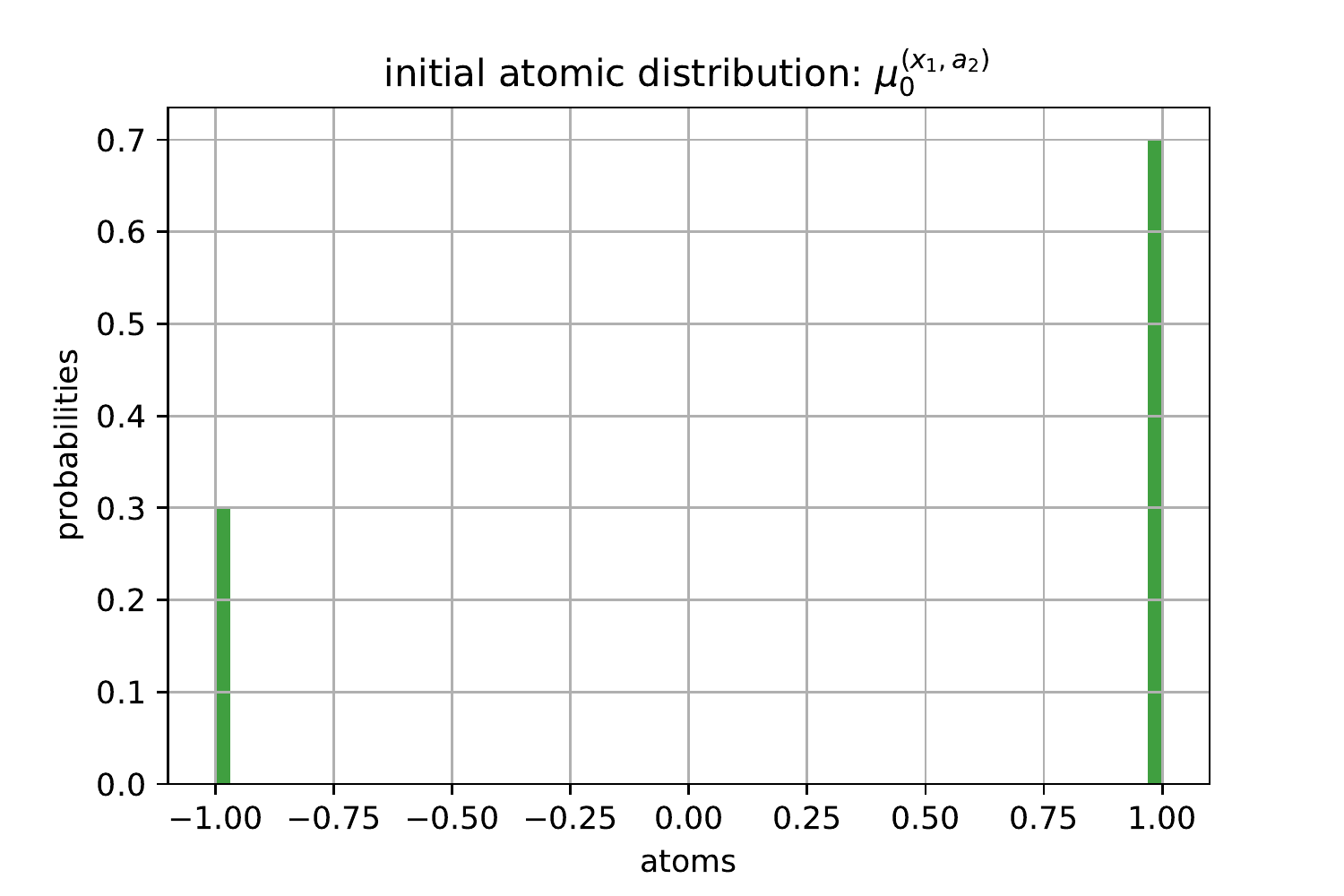}
\end{subfigure}
\begin{subfigure}{.49\textwidth}
  \centering
  \includegraphics[width=1.0\linewidth]{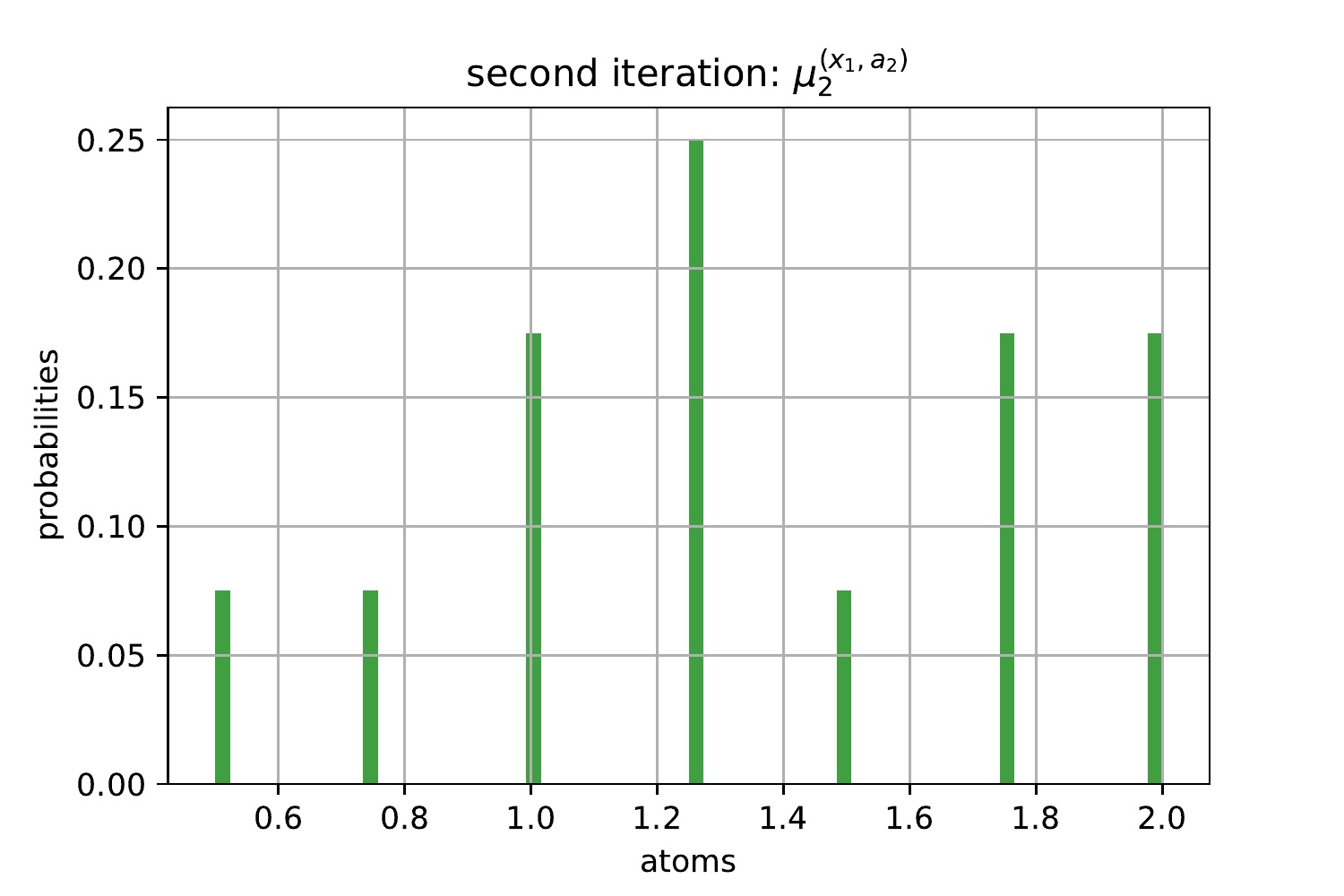}
\end{subfigure}
\begin{subfigure}{.49\textwidth}
  \centering
  \includegraphics[width=1.0\linewidth]{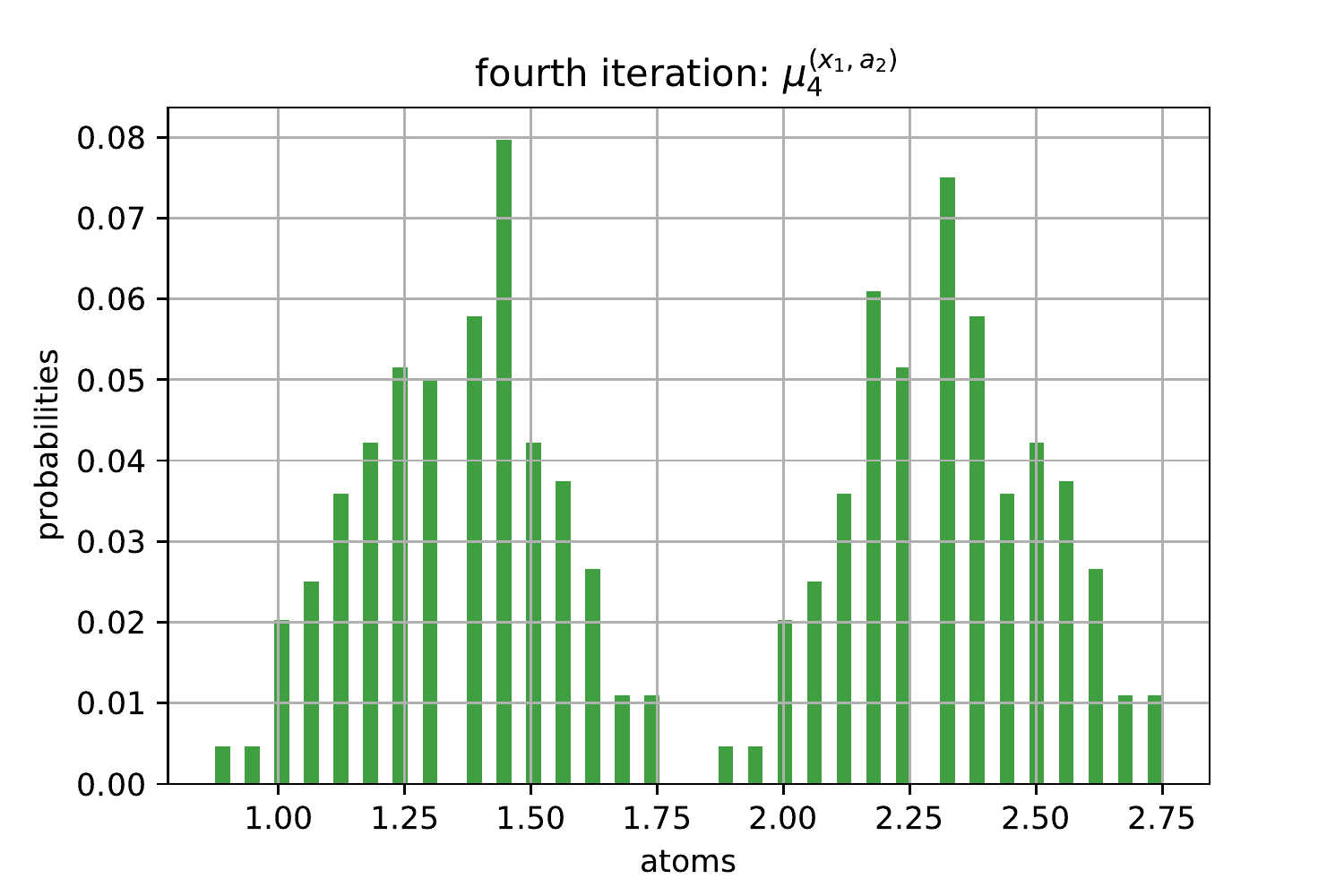}
\end{subfigure}

\caption{Illustration of Example \ref{ex:atomic} in the MDP in Figure \ref{fig:ex_mdp}, with discount factor $\gamma=\frac{1}{2}$,
for the stochastic policy $\pi(a|x)=\frac{1}{2}$ for any $x\in\{x_1,x_2\},a\in\{a_1,a_2\}$.
For $k\ge 0$, the distribution function $\mu_k = (\pazocal{T}^\pi)^k \mu_0$ is obtained by applying $k$ times the DBO
to the initial atomic distributions $\mu_0^{(x,a)}=0.3\delta_{-1} + 0.7\delta_{1}$ (for all $x,a$).}
\label{fig:atomic}
\end{figure}

Motivated by Example \ref{ex:atomic}, several distributional RL algorithms are based on atomic distributions: they apply
$\pazocal{T}^\pi$---which multiplies the number of particles by a factor up to $|\pazocal{X}||\pazocal{A}|$---followed by a
projection to bring the number of particles back to a fixed budget $N$.

\noindent \textbf{Projected operators.}
In \cite{dabney2018distributional}, every distribution $(\pazocal{T}^\pi \mu)^{(x,a)}$ with CDF $F_{x,a}$
is projected to a discrete distribution with uniform probabilities over $N\ge 1$ evenly spread quantiles:
\begin{equation*}
  \frac{1}{N} \sum_{i=1}^N \delta_{Q_i(x,a)} \quad \text{ with } \quad Q_i(x,a) = F_{x,a}^{-1}\left( \frac{2i-1}{2N} \right) \,.
\end{equation*}
This specific choice comes from minimizing the \emph{$\mathit{1}$-Wasserstein distance} between $(\pazocal{T}^\pi \mu)^{(x,a)}$ and such average of Dirac measures.
Notice that in the monoatomic case $N=1$, this approach summarizes an entire distribution by a single scalar: the median
$F_{x,a}^{-1}(1/2)$. Interestingly, this suggests that these projected operators are not an appropriate generalization
of the classical Bellman operators that involve the \emph{expectation} of the return instead of the median.

This issue can be addressed by using the \emph{$\mathit{2}$-Wasserstein distance} for projection:
\begin{equation*}
  W_2((\pazocal{T}^\pi \mu)^{(x,a)} , \delta_{Q(x,a)} )^2 = \int_{\tau=0}^1 (F_{x,a}^{-1}(\tau)-Q(x,a))^2 d\tau \,.
\end{equation*}
Indeed, this $W_2$-error is a quadratic function in $Q(x,a)$, whose minimum is attained at the mean value
$Q(x,a)=\int_{\tau=0}^1 F_{x,a}^{-1}(\tau)d\tau = \mathbb{E}_{Z\sim (\pazocal{T}^\pi \mu)^{(x,a)}}[Z]$ (by Lemma
\ref{lem:quantile_expectation}). Thus, combining the distributional Bellman operators with a 2-Wasserstein projection
correctly recovers the classical DP operators in the special monoatomic case $N=1$.
More formally, it is easy to check that averaging after applying the DBO to a collection $\delta_Q=(\delta_{Q(x,a)})_{x,a}$ of Dirac measures,
is nothing but the usual policy evaluation update:
\begin{equation*}
\mathbb{E}_{Z\sim (\pazocal{T}^\pi \delta_Q)^{(x,a)}}\left[ Z \right] = \sum_{x',a'} P(x'|x,a)\pi(a'|x') \left( r(x,a,x') + \gamma Q(x',a') \right) .
\end{equation*}
This simple observation motivated $W_2$-projections in \cite{achab2020ranking} (see chapter VII therein),
who derived multiatomic variants of the Temporal-Difference and Q-learning algorithms.
As shall be seen in the next section, this choice leads to a natural extension of non-distributional DP with closed-form updates
even for more than one atom.

\section{Risk-sensitive distributional dynamic programming with 2-Wasserstein projections}
\label{sec:avar}
We now turn to describing our main contribution: a framework for risk-sensitive dynamic programming using
$W_2$-projected distributional Bellman operators, arising as a special case of the distributional DP framework
described in the previous section using projections onto the set of diatomic
distributions with fixed non-uniform weights. As we will show, projection to this set corresponds to calculating the
well-studied coherent risk measure of \emph{average value-at-risk} or ``AVaR''\footnote{AVaR and CVaR are two different
names for the same quantity \citep{chun2012conditional}.} (see \citealp{rockafellar2000optimization},
\citealp{rockafellar2002conditional} and \citealp{acerbi2002coherence}). We start with the formal definitions below.

\begin{definition}{\textsc{(Average value-at-risk)}.}
  \label{def:avar}
  Let $0<\alpha<1$ and $\nu\in\mathcal{P}_b(\mathbb{R})$ with CDF $F$. The left and right AVaRs of $\nu$ at respective levels $\alpha$ and $1-\alpha$ are defined as:
  \begin{equation*}
    \text{AVaR}^{\text{left}}_{\alpha}(\nu) = \frac{1}{\alpha} \int_{\tau=0}^\alpha F^{-1}(\tau) d\tau
    \quad \text{ and } \quad \text{AVaR}^{\text{right}}_{1-\alpha}(\nu) = \frac{1}{1-\alpha} \int_{\tau=\alpha}^1 F^{-1}(\tau) d\tau \,.
  \end{equation*}
\end{definition}

\begin{definition}{\textsc{(Diatomic distribution function)}.}
  \label{def:diatomic_distrib}
  Given $\alpha\in (0, 1)$ and some bidimensional function
  $\mathcal{Q}=(Q_1,Q_2):\pazocal{X}\times\pazocal{A}\rightarrow \mathbb{R}^2$,
  we call ``diatomic distribution function'' and denote $D_{\alpha,\mathcal{Q}}=~(D_{\alpha,\mathcal{Q}}^{(x,a)})_{(x,a)\in\pazocal{X}\times\pazocal{A}}$
  the following collection of diatomic distributions:
  \begin{equation*}
  D_{\alpha,\mathcal{Q}}^{(x,a)}=\alpha \delta_{Q_1(x,a)} + (1-\alpha) \delta_{Q_2(x,a)} \quad \, , \quad \text{ for all } (x,a)\,.
  \end{equation*}
\end{definition}

In other words, we associate a mixture $D_{\alpha,\mathcal{Q}}^{(x,a)}$ of two Dirac masses to each state-action pair $(x,a)$.
As a comparison, this doubles the space complexity of the usual DP framework that uses a single action-value function
$Q(x,a)$ in place of $\mathcal{Q}(x,a)=(Q_1(x,a),Q_2(x,a))$.
Although we focus on distributions made of two atoms, most of the techniques used in this paper easily extend to any number of atoms.


In order to respect our diatomic constraint, we apply successively
the distributional Bellman operator and a projection onto the space of diatomic distribution functions.
More precisely, every time the operator $\pazocal{T}^\pi$ is applied to some $D_{\alpha,\mathcal{Q}}$,
we use the $2$-Wasserstein-projection to approximate $\pazocal{T}^\pi D_{\alpha,\mathcal{Q}}$ by another
collection of distributions in the same family $D_{\alpha,\mathcal{Q}'}$. The following lemmas give a tractable method
for evaluating the resulting operator.

\begin{lemma}{\textsc{(From $W_2$-projection to AVaR)}.}
  \label{lem:W2proj}
  Let $0<\alpha<1$ and $\nu\in\mathcal{P}_b(\mathbb{R})$. Then, there exists a unique couple $(\theta^*_1,\theta^*_2)\in\mathbb{R}^2$ minimizing the $2$-Wasserstein approximation error
  $\min_{\theta_1 \le \theta_2} \, W_2(\nu, \, \alpha \delta_{\theta_1} + (1-\alpha) \delta_{\theta_2})$
  between $\nu$ and any diatomic proxy.
  In addition, this best diatomic approximation is given by the left and right AVaRs of $\nu$, at levels $\alpha$ and $1-\alpha$ respectively:
  \begin{equation*}
    \theta^*_1 = \text{AVaR}^{\text{left}}_{\alpha}(\nu)
    \quad \text{ and } \quad \theta^*_2 = \text{AVaR}^{\text{right}}_{1-\alpha}(\nu) \,.
  \end{equation*}
\end{lemma}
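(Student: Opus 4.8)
The plan is to use the quantile representation of the $2$-Wasserstein distance, which turns the problem into two decoupled scalar least-squares problems. First I would recall that for $\nu$ with CDF $F$ and any target $\nu'$ with CDF $G$ one has $W_2(\nu,\nu')^2 = \int_{\tau=0}^1 (F^{-1}(\tau)-G^{-1}(\tau))^2\,d\tau$, so it suffices to understand the quantile function of the diatomic proxy $\nu' = \alpha\delta_{\theta_1}+(1-\alpha)\delta_{\theta_2}$ with $\theta_1\le\theta_2$. Its CDF equals $0$ below $\theta_1$, then $\alpha$ on $[\theta_1,\theta_2)$, and $1$ from $\theta_2$ on, so its generalized inverse is the step function $G^{-1}(\tau)=\theta_1$ for $\tau\in(0,\alpha]$ and $G^{-1}(\tau)=\theta_2$ for $\tau\in(\alpha,1)$. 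Here the ordering $\theta_1\le\theta_2$ is exactly what makes this step function well defined, which is why the constraint is imposed.

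Substituting this into the integral splits the objective at the breakpoint $\alpha$:
\[
W_2(\nu,\nu')^2 = \int_{\tau=0}^\alpha (F^{-1}(\tau)-\theta_1)^2\,d\tau + \int_{\tau=\alpha}^1 (F^{-1}(\tau)-\theta_2)^2\,d\tau \,.
\]
The two terms involve disjoint variables, so they can be minimized independently. Each is a strictly convex quadratic in its variable, with leading coefficients $\alpha>0$ and $1-\alpha>0$ respectively, hence each admits a unique unconstrained minimizer obtained by setting the derivative to zero, namely the average of the quantile function over the corresponding sub-interval:
\[
\theta_1^* = \frac{1}{\alpha}\int_{\tau=0}^\alpha F^{-1}(\tau)\,d\tau = \text{AVaR}^{\text{left}}_{\alpha}(\nu)
\quad \text{ and } \quad
\theta_2^* = \frac{1}{1-\alpha}\int_{\tau=\alpha}^1 F^{-1}(\tau)\,d\tau = \text{AVaR}^{\text{right}}_{1-\alpha}(\nu) \,.
\]

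It then remains to check that this unconstrained optimum is in fact feasible, i.e. that $\theta_1^*\le\theta_2^*$. This follows from the monotonicity of the quantile function: since $F^{-1}$ is non-decreasing, its average over the lower piece $(0,\alpha]$ cannot exceed its average over the upper piece $(\alpha,1)$, so the feasible and unconstrained minimizers coincide; global uniqueness is inherited from the strict convexity of each quadratic. I expect the only delicate point to be the precise identification of the proxy's quantile function together with the role of the constraint $\theta_1\le\theta_2$ — everything downstream is a routine least-squares computation, the essential mechanism being that projecting a distribution onto a single Dirac mass in $W_2$ returns its mean (as already noted before the statement via Lemma~\ref{lem:quantile_expectation}), and the diatomic projection merely performs this separately on the two quantile ranges cut out by the level $\alpha$.
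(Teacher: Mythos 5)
Your proof is correct and follows essentially the same route as the paper: both identify the quantile function of the diatomic proxy, split the $W_2$ integral at $\tau=\alpha$, and minimize the two decoupled quadratics to obtain the left and right AVaRs. Your explicit check that the unconstrained minimizers satisfy $\theta_1^*\le\theta_2^*$ (via monotonicity of $F^{-1}$) is a welcome detail that the paper's proof leaves implicit.
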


\begin{proof}
For any $\theta=(\theta_1,\theta_2)\in\mathbb{R}^2$ with $\theta_1\le \theta_2$,
first notice that the CDF $F_{\alpha,\theta}$
and the quantile function $F_{\alpha,\theta}^{-1}$ of the diatomic distribution $D_{\alpha,\theta}=\alpha \delta_{\theta_1} + (1-\alpha) \delta_{\theta_2}$ are given by:
\begin{multline*}
\forall y\in\mathbb{R} , \quad F_{\alpha,\theta}(y) = \alpha \mathbb{I}\{\theta_1 \le y\} + (1-\alpha) \mathbb{I}\{\theta_2 \le y\}\\
\quad \text{ and } \quad \forall \tau\in (0,1) , \quad F_{\alpha,\theta}^{-1}(\tau) = \mathbb{I}\left\{0 < \tau \le \alpha \right\} \theta_1 + \mathbb{I}\left\{\alpha < \tau \le 1 \right\} \theta_2 \, .
\end{multline*}

By definition of the $2$-Wasserstein distance, we have:
\begin{equation*}
  W_2(\nu, \, D_{\alpha,\theta})^2
  = \int_{\tau=0}^1 (F^{-1}(\tau) - F_{\alpha,\theta}^{-1}(\tau))^2 d\tau
  = \int_{\tau=0}^\alpha (F^{-1}(\tau) - \theta_1)^2 d\tau + \int_{\tau=\alpha}^1 (F^{-1}(\tau) - \theta_2)^2 d\tau ,
\end{equation*}
where the first term (which only depends on $\theta_1$) rewrites:
\begin{equation*}
  \int_{\tau=0}^\alpha (F^{-1}(\tau) - \theta_1)^2 d\tau
  = \int_{\tau=0}^\alpha F^{-1}(\tau)^2 d\tau - 2 \theta_1 \int_{\tau=0}^\alpha F^{-1}(\tau) d\tau
  + \alpha \theta_1^2 ,
\end{equation*}
which is minimized for $\theta_1 = (1/\alpha) \int_{\tau=0}^\alpha F^{-1}(\tau) d\tau $.
Similarly, the second term is minimal if and only if $\theta_2 = (1-\alpha)^{-1} \int_{\tau=\alpha}^1 F^{-1}(\tau) d\tau$, which concludes the proof.
\end{proof}
The $W_2$-projection in Lemma \ref{lem:W2proj} appears as a natural and canonical choice:
it is simply an orthogonal projection in the $L^2$ space of quantile functions.
In the following, we will apply it \emph{entrywise}, i.e. by computing the left and right AVaRs for each of the $|\pazocal{X}||\pazocal{A}|$ distributions $\nu=(\pazocal{T}^\pi D_{\alpha,\mathcal{Q}})^{(x,a)}$ contained in $\pazocal{T}^\pi D_{\alpha,\mathcal{Q}}$.



\paragraph{Key observation: discrete AVaR.}
Although the AVaR may not be easy to compute for general distributions, luckily our approach only requires it for
discrete distributions. Indeed,
\begin{equation}
  \label{eq:DBO_discrete}
(\pazocal{T}^\pi D_{\alpha, \mathcal{Q}})^{(x,a)} = \sum_{(x',a')\in\pazocal{X}\times\pazocal{A}} P(x'|x,a)\pi(a'|x') \left( \alpha \delta_{r(x,a,x')+\gamma Q_1(x',a')} + (1-\alpha) \delta_{r(x,a,x')+\gamma Q_2(x',a')} \right)
\end{equation}
is simply a weighted average of $2|\pazocal{X}||\pazocal{A}|$ Dirac masses.
Plus, the AVaR of a discrete distribution comes with a closed-form expression provided below.

\begin{lemma}{\textsc{(AVaR of a discrete distribution)}.}
\label{lem:avar}
Let $0 < \alpha < 1$ and $\nu$ be a discrete distribution:
\begin{equation*}
\nu = \sum_{j=1}^M p_j \delta_{v_j} \,,
\end{equation*}
with $M\ge 1$, $p_1,\dots,p_M \ge 0$, $p_1+\dots+p_M=1$ and sorted values $v_1\le\dots\le v_M$.
\begin{enumerate}[(i)]
\item Closed-form expression: the left and right AVaRs of $\nu$ at respective levels $\alpha$ and $1-\alpha$ are
\begin{multline*}
  \text{AVaR}^{\text{left}}_{\alpha}(\nu) = \frac{1}{\alpha} \sum_{j=1}^M \max\left( 0\,,\,\min\left( p_j \,,\,\alpha - \sum_{j'\le j-1} p_{j'} \right) \right) \cdot v_j \\
\text{ and } \quad \text{AVaR}^{\text{right}}_{1-\alpha}(\nu) = \frac{1}{1-\alpha} \sum_{j=1}^M \max\left( 0\,,\,\min\left( p_j \,,\,\sum_{j'\le j} p_{j'} - \alpha \right) \right) \cdot v_j \quad ,
\end{multline*}
with the empty sum convention $\sum_{j'\le 0} p_{j'}=0$ .
\item Dual representation: denoting $v=(v_1,\dots, v_M)$ and $p=(p_1,\dots, p_M)$,
\begin{equation*}
  \text{AVaR}^{\text{left}}_{\alpha}(\nu) = \frac{1}{\alpha} \inf_{\lambda} \langle \lambda, v \rangle \quad
\text{ and } \quad \text{AVaR}^{\text{right}}_{1-\alpha}(\nu) = \frac{1}{1-\alpha} \sup_{\lambda} \langle p - \lambda, v \rangle \quad ,
\end{equation*}
where the infimum and supremum both range over weights $\lambda=(\lambda_1,\dots,\lambda_M)$ such that
\begin{equation*}
\begin{cases}
  & \forall i, \quad 0 \le \lambda_i \le p_i \\
  & \sum_{i=1}^M \lambda_i = \alpha \ .
\end{cases}
\end{equation*}
\end{enumerate}
\end{lemma}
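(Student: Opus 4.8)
The plan is to exploit the explicit step structure of the quantile function of a discrete distribution and then reduce both parts to elementary computations. First I would introduce the cumulative weights $P_j = \sum_{j'\le j} p_{j'}$ (with $P_0=0$ and $P_M=1$) and record that, because the values are sorted, the quantile function is the step function $F^{-1}(\tau)=v_j$ for $\tau\in(P_{j-1},P_j]$; atoms with $p_j=0$ correspond to empty intervals and contribute nothing, and the non-empty intervals $(P_{j-1},P_j]$ partition $(0,1]$.

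For part (i), I would plug this step function into the definitions $\text{AVaR}^{\text{left}}_{\alpha}(\nu)=\frac{1}{\alpha}\int_{0}^{\alpha}F^{-1}(\tau)\,d\tau$ and $\text{AVaR}^{\text{right}}_{1-\alpha}(\nu)=\frac{1}{1-\alpha}\int_{\alpha}^{1}F^{-1}(\tau)\,d\tau$ and split each integral over the pieces. On each piece $F^{-1}$ is the constant $v_j$, so the integral contributes $v_j$ times the Lebesgue measure of $(P_{j-1},P_j]\cap(0,\alpha]$ (respectively $(P_{j-1},P_j]\cap(\alpha,1]$). A short case analysis shows this overlap length equals $\max(0,\min(p_j,\alpha-P_{j-1}))$ (respectively $\max(0,\min(p_j,P_j-\alpha))$): it is $p_j$ when the whole interval lies below (resp.\ above) $\alpha$, it is the truncated length $\alpha-P_{j-1}$ (resp.\ $P_j-\alpha$) for the single straddling interval, and $0$ otherwise. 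Summing over $j$ yields exactly the two closed forms.

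For part (ii), I would show that the weights $\lambda^*_j=\max(0,\min(p_j,\alpha-P_{j-1}))$ produced in part (i) are precisely the minimizers of the linear program. Feasibility is immediate: $0\le\lambda^*_j\le p_j$ by construction, and $\sum_j\lambda^*_j=\alpha$ since the overlaps of $(P_{j-1},P_j]$ with $(0,\alpha]$ tile $(0,\alpha]$. Optimality follows from a rearrangement argument: let $j_0$ be the straddling index with $P_{j_0-1}<\alpha\le P_{j_0}$, so $\lambda^*$ saturates the bottom atoms ($\lambda^*_j=p_j$ for $j<j_0$ and $\lambda^*_j=0$ for $j>j_0$). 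For any feasible $\lambda$, writing $\delta_j=\lambda_j-\lambda^*_j$ we have $\sum_j\delta_j=0$, with $\delta_j\le 0$ for $j<j_0$ and $\delta_j\ge 0$ for $j>j_0$; subtracting $v_{j_0}\sum_j\delta_j=0$ gives $\langle\lambda,v\rangle-\langle\lambda^*,v\rangle=\sum_j\delta_j(v_j-v_{j_0})\ge 0$, since each summand has its two factors of the same sign (using $v_1\le\dots\le v_M$). Hence $\lambda^*$ attains the infimum and $\frac{1}{\alpha}\langle\lambda^*,v\rangle$ matches the left-AVaR closed form. The right-AVaR identity is the mirror image: setting $\mu=p-\lambda$ turns the constraint into $0\le\mu_j\le p_j$, $\sum_j\mu_j=1-\alpha$, and the same rearrangement (now filling the top atoms first) shows $\mu^*_j=\max(0,\min(p_j,P_j-\alpha))$ maximizes $\langle\mu,v\rangle$, recovering the part-(i) expression.

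The only genuinely delicate point is the bookkeeping in part (i): one must handle ties among the $v_j$ and atoms with $p_j=0$ so that the piecewise-constant description of $F^{-1}$ and the overlap-length case analysis remain valid; once the partition $(P_{j-1},P_j]$ is set up correctly these degeneracies are harmless. Everything else is elementary, and part (ii) is simply the observation that the greedy allocation dictated by the sorted values coincides with the quantile-integral weights of part (i) --- the discrete instance of the classical variational (dual) representation of average value-at-risk.
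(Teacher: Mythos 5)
Your proposal is correct, and its core is the same as the paper's: part (i) via the step structure of $F^{-1}$ and interval-overlap lengths (your case analysis is exactly the paper's formula $\text{Length}([a,b]\cap[c,d])=\max(0,\min(b,d)-\max(a,c))$), and part (ii) via the greedy allocation dictated by the sorted values. The differences are in part (ii): where the paper asserts ``obviously'' that the infimum equals the greedy sum, you supply the missing justification with the rearrangement inequality $\langle\lambda,v\rangle-\langle\lambda^*,v\rangle=\sum_j\delta_j(v_j-v_{j_0})\ge 0$, which makes your write-up more self-contained; and for the right AVaR you rerun the mirrored argument with $\mu=p-\lambda$, whereas the paper instead invokes the identity $\alpha\,\text{AVaR}^{\text{left}}_{\alpha}(\nu)+(1-\alpha)\,\text{AVaR}^{\text{right}}_{1-\alpha}(\nu)=\langle p,v\rangle$ together with the observation that $\sup_\lambda\langle p-\lambda,v\rangle=\langle p,v\rangle-\inf_\lambda\langle\lambda,v\rangle$ is attained at the same $\lambda$, so the second half comes for free. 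Your route costs a few extra lines but avoids relying on that mean-decomposition; the paper's is shorter once part (i) and the complementarity of the two programs are in hand. Both are valid, and your attention to ties and zero-mass atoms in part (i) is a point the paper glosses over.
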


\begin{proof}
\noindent (i) Closed-form expression. First denote $\overline{p}_j = \sum_{j'=1}^j p_j$ for each $j\in \{1,\dots,M\}$, and $\overline{p}_0=0$.
Then, the CDF $F$ and quantile function $F^{-1}$ of the discrete distribution $\nu$ are:
\begin{equation*}
\forall y\in\mathbb{R} , \quad F(y) = \sum_{j=1}^M p_j \mathbb{I}\{v_j \le y\} \\
\quad \text{ and } \quad \forall \tau\in (0,1) , \quad
F^{-1}(\tau) = \sum_{j=1}^M \mathbb{I}\left\{\overline{p}_{j-1} < \tau \le \overline{p}_j \right\} v_j \, ,
\end{equation*}
which are both step functions.
Hence, the right AVaR of $\nu$ at level $1-\alpha$ simply writes:
\begin{equation*}
\text{AVaR}^{\text{right}}_{1-\alpha}(\nu)
= \int_{\tau=\alpha}^1 F^{-1}(\tau) d\tau
= \sum_{j=1}^M \text{Length}\left( [\alpha,1] \cap [\overline{p}_{j-1}, \overline{p}_j] \right) v_j .
\end{equation*}
Observing that the length of the intersection of two intervals $[a,b]$ and $[c,d]$ is equal to
\begin{equation*}
  \text{Length}\left( [a, b] \cap [c, d] \right) = \max(0 \,,\, \min(b, d) - \max(a, c) )
\end{equation*}
concludes the proof.

\noindent (ii) Dual representation. If $\alpha\in [0, p_1]$, then $\inf_{\lambda} \langle \lambda, v \rangle = \alpha v_1$.
For $j\ge 2$, if $\alpha\in [\overline{p}_{j-1}, \overline{p}_j]$, then obviously
$\inf_{\lambda} \langle \lambda, v \rangle = (\alpha-\overline{p}_{j-1}) v_j + \sum_{i=1}^{j-1} p_i v_i$.
All these different cases coincide with the expression provided in (i) for the left AVaR.
We conclude the proof by observing that $\alpha \text{AVaR}^{\text{left}}_{\alpha}(\nu) + (1-\alpha) \text{AVaR}^{\text{right}}_{1-\alpha}(\nu) = \langle p, v \rangle$ and that both the infimum and the supremum in (ii) are attained at the same $\lambda$.
\end{proof}

Figure \ref{fig:discreteAVaR} depicts an application of Lemma \ref{lem:avar}: the left (resp.~right) AVaR is obtained
by computing the signed area delimited by the staircase curve of the quantile function on the segment $[0, \alpha]$ (resp.~$[\alpha,1]$).
The reason why we have the closed-form formula in Lemma \ref{lem:avar}-(i) is because this area is just the sum of the areas of rectangles.


\begin{figure}
\centering
\includegraphics{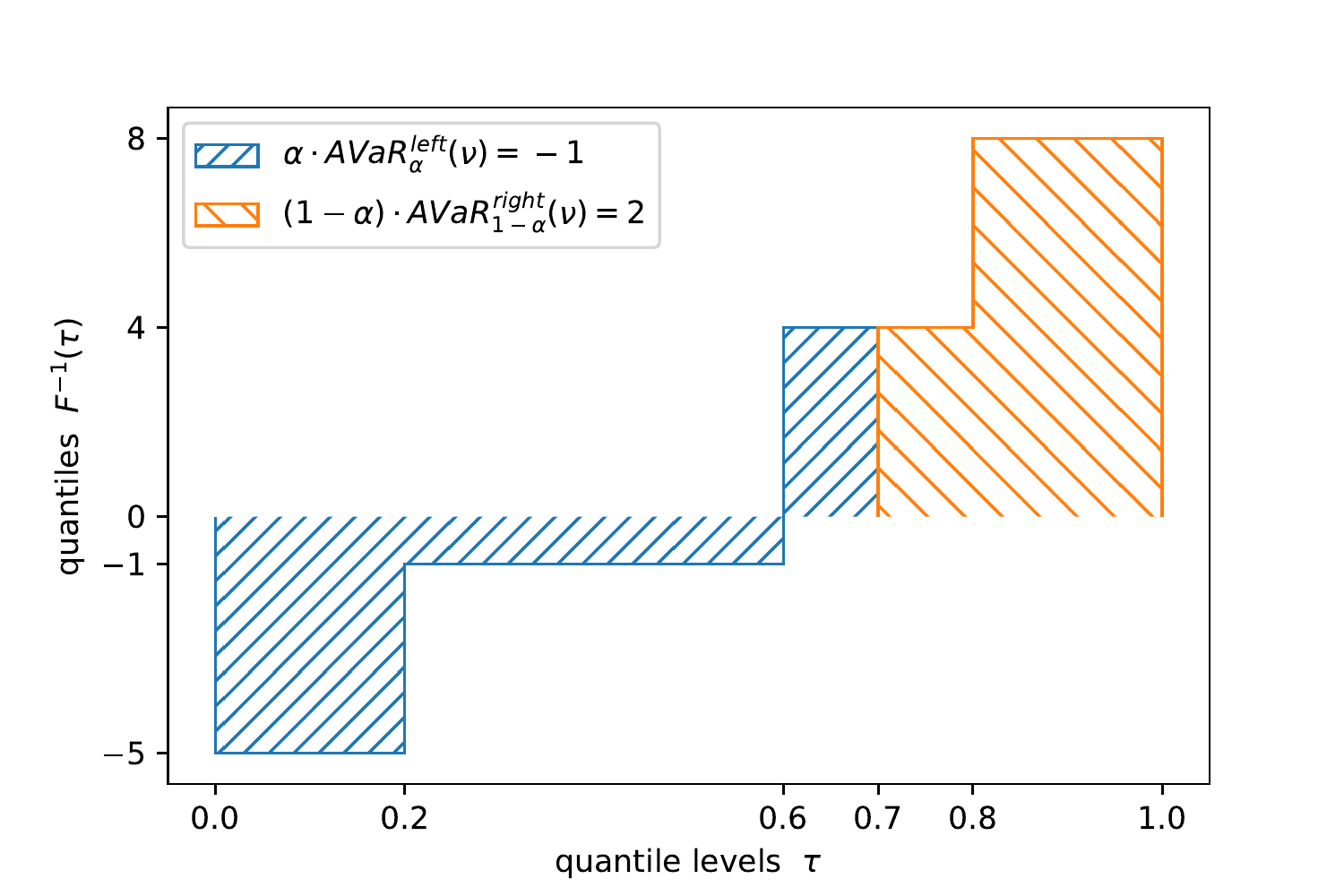}
\caption{Left and right average value-at-risk of a discrete distribution. Graphic illustration of Lemma \ref{lem:avar}
for $\nu=0.2 (\delta_{-5}+2\delta_{-1}+\delta_{4}+\delta_{8})$ (with piecewise constant quantile function $F^{-1}$) and
AVaR level $\alpha=0.7$. The total shaded (signed) area on the left is equal to $-1$; the rightmost one to $2$.}
\label{fig:discreteAVaR}
\end{figure}

\subsection{Diatomic policy evaluation}

Using Lemma \ref{lem:W2proj}, our distributional DP method can be formulated with operators that map functions
$(Q_1,Q_2)\in\mathbb{R}^{2|\pazocal{X}||\pazocal{A}|}$ to $(Q_1',Q_2')$ in the same space.
We introduce below the \emph{diatomic Bellman operator} for the policy evaluation task.

\begin{definition}{\textsc{(Diatomic Bellman operator)}.}
  \label{def:eval}
  Let $\alpha\in (0,1)$.
  Given a stationary Markovian policy $\pi$, the diatomic Bellman operator $\pazocal{T}^\pi_{\alpha}:(\mathbb{R}^2)^{\pazocal{X}\times\pazocal{A}} \rightarrow (\mathbb{R}^2)^{\pazocal{X}\times\pazocal{A}}$ is defined for all $\mathcal{Q}=(Q_1, Q_2):\pazocal{X}\times\pazocal{A}\rightarrow \mathbb{R}^2$ by:
  $\pazocal{T}^\pi_{\alpha} \mathcal{Q} = (Q_1', Q_2')$ such that for each pair $(x,a)$,
  \begin{equation*}
  Q_1'(x,a) = \text{AVaR}^{\text{left}}_{\alpha}((\pazocal{T}^\pi D_{\alpha,\mathcal{Q}})^{(x,a)} ) \quad
  \text{ and } \quad Q_2'(x,a) = \text{AVaR}^{\text{right}}_{1-\alpha}( (\pazocal{T}^\pi D_{\alpha,\mathcal{Q}})^{(x,a)} ) \, .
\end{equation*}
\end{definition}

We point out that the update rule $(Q_1',Q_2')$ in Definition \ref{def:eval} can be expressed explicitly as a function of $(Q_1,Q_2)$ by combining Eq. (\ref{eq:DBO_discrete}) with Lemma \ref{lem:avar}-(i).
More precisely, one shall set $M=2|\pazocal{X}||\pazocal{A}|$ and $v_1\le \dots \le v_M$ the sorted values $(r(x,a,x') + \gamma Q_1(x',a') \,,\, r(x,a,x') + \gamma Q_2(x',a'))_{x',a'}$ with the corresponding probabilities $p_j= \beta P(x'|x,a)\pi(a'|x')$ ($\beta\in\{\alpha, 1-\alpha\}$).
A detailed description of this practical sorting procedure is provided in Algorithm \ref{alg:spe}.

\begin{algorithm}
\caption{\textsc{Sorted Policy Evaluation (SPE)}, single iteration.}\label{alg:spe}

\begin{algorithmic}[1]
\algrenewcommand\algorithmicrequire{\textbf{Parameters:}}
\Require policy $\pi\in\Pi$, number of particles $M = 2|\pazocal{X}||\pazocal{A}|$, level $\alpha\in(0,1)$, $(\alpha_1,\alpha_2)=(\alpha,1-\alpha)$
\algrenewcommand\algorithmicrequire{\textbf{Input:}}
\Require double Q-function $\mathcal{Q}=(Q_1,Q_2)$

\For{each state-action pair $(x,a)\in\pazocal{X}\times\pazocal{A}$}

  \State probability-particle pairs: \begin{equation*}(p_j, v_j)_{j=1}^M \leftarrow (\alpha_i P(x'|x,a)\pi(a'|x'), r(x,a,x')+\gamma Q_i(x',a'))_{(x',a',i)\in\pazocal{X}\times\pazocal{A}\times\{1,2\}} \end{equation*}
  \State particle sorting: $v_{\sigma(1)}\le\dots\le v_{\sigma(M)}$ with $\sigma$ an ``argsort'' permutation
  \State reordering: $(p_j, v_j) \leftarrow (p_{\sigma(j)}, v_{\sigma(j)})$ for $j=1\dots M$
  \State left AVaR: $Q_1'(x,a) \leftarrow \frac{1}{\alpha} \sum_{j=1}^M \max\left( 0\,,\,\min\left( p_j \,,\,\alpha - \sum_{j'\le j-1} p_{j'} \right) \right) \cdot v_j$
  \State right AVaR: $Q_2'(x,a) \leftarrow \frac{1}{1-\alpha} \sum_{j=1}^M \max\left( 0\,,\,\min\left( p_j \,,\,\sum_{j'\le j} p_{j'} - \alpha \right) \right) \cdot v_j$

\EndFor
\Ensure next double Q-function $\pazocal{T}^\pi_{\alpha} \mathcal{Q}=(Q_1',Q_2')$
\end{algorithmic}
\end{algorithm}

\begin{table}
{\begin{tabular}{| c | c | c |}
\hline
\textsc{SPE} Algorithm \ref{alg:spe} & \textsc{Safe/Risky SVI} Algorithm \ref{alg:safe_risky_svi} & Classic value iteration \\
\hline
$\pazocal{O}\left(\left(|\pazocal{X}||\pazocal{A}|\right)^2 \cdot \log\left( |\pazocal{X}||\pazocal{A}| \right) \right)$ & $\pazocal{O}\left(|\pazocal{X}|^2|\pazocal{A}| \cdot \log\left( |\pazocal{X}| \right) \right)$ & $\pazocal{O}\left(|\pazocal{X}|^2|\pazocal{A}|\right)$ \\
\hline
\end{tabular}}
\caption{Time complexity per iteration.
For a deterministic policy, the time complextity of \textsc{SPE} reduces to $\pazocal{O}\left(|\pazocal{X}|^2|\pazocal{A}| \cdot \log\left( |\pazocal{X}| \right) \right)$.
The $|\pazocal{X}||\pazocal{A}|$ sorting steps induce the multiplicative logarithmic terms for Algorithms \ref{alg:spe} and \ref{alg:safe_risky_svi}. Obviously, if the reward function does not depend on the next state (i.e. $r(x,a,\cdot)\equiv r(x,a)$ for all $x,a$), these terms can be suppressed by sorting $(Q_i(x',a'))_{x',a',i}$ just once, then discounting by $\gamma$ and shifting by $r(x,a)$ for any $x,a$.}
\label{tab:time}
\end{table}

Next is a list of some important properties that are satisfied by our new operator $\pazocal{T}^\pi_{\alpha}$.
In particular, the property (iii) and its proof show that $\pazocal{T}^\pi_{\alpha}$ is not affine,
which is in contrast with the classical Bellman operators that are affine.

\begin{proposition}{\textsc{(Properties of $\pazocal{T}^\pi_{\alpha}$)}.}
  \label{prop:properties_eval} Let $\alpha\in (0,1)$, $\pi\in\Pi$
  and denote by $(f',g')\gtrless (f,g)$ the two inequalities $f'\ge f$ and $g'\le g$.
  The following properties are verified.
\begin{enumerate}[(i)]
  \item Monotonicity: if $\mathcal{Q} \ge \widetilde{\mathcal{Q}}$, then $\pazocal{T}^\pi_{\alpha} \mathcal{Q} \ge \pazocal{T}^\pi_{\alpha} \widetilde{\mathcal{Q}}$.
  \item Distributivity: for any $c\in\mathbb{R}$, $\pazocal{T}^\pi_{\alpha} (\mathcal{Q} + c\mathbf{1}) = \pazocal{T}^\pi_{\alpha} \mathcal{Q} + \gamma c \mathbf{1}$.
  \item Concavity/convexity: for $0\le \lambda\le 1$, $\pazocal{T}^\pi_{\alpha} (\lambda\mathcal{Q} + (1-\lambda) \widetilde{\mathcal{Q}} ) \gtrless \lambda \pazocal{T}^\pi_{\alpha} \mathcal{Q} + (1-\lambda) \pazocal{T}^\pi_{\alpha} \widetilde{\mathcal{Q}}$.
  \item $\gamma$-Contraction in sup norm: $|| \pazocal{T}^\pi_{\alpha} \mathcal{Q} - \pazocal{T}^\pi_{\alpha} \widetilde{\mathcal{Q}} ||_\infty \le \gamma || \mathcal{Q} - \widetilde{\mathcal{Q}} ||_\infty$.
  \item Fixed point: there exists a unique fixed point $\mathcal{Q}^\pi = \pazocal{T}^\pi_{\alpha} \mathcal{Q}^\pi$,
  with $\mathcal{Q}^\pi=(Q_1^\pi, Q_2^\pi)$.
  \item Averaging property: $\alpha Q_1^\pi(x,a) + (1-\alpha) Q_2^\pi(x,a) = Q^\pi(x,a)$.
  \item Relative order: $Q_1^\pi(x,a) \le Q^\pi(x,a) \le Q_2^\pi(x,a)$.
\end{enumerate}
\end{proposition}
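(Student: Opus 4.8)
The plan is to derive properties (i)--(v) directly from the two representations of the discrete AVaR in Lemma~\ref{lem:avar}, and then to obtain (vi)--(vii) by relating $\pazocal{T}^\pi_{\alpha}$ to the classical policy-evaluation update through the mean. The key preliminary observation is that, in the update of Definition~\ref{def:eval}, the atoms of $(\pazocal{T}^\pi D_{\alpha,\mathcal{Q}})^{(x,a)}$ appearing in Eq.~(\ref{eq:DBO_discrete}) are the affine quantities $v_j = r(x,a,x')+\gamma Q_i(x',a')$, whereas the weights $p_j = \beta\, P(x'|x,a)\pi(a'|x')$ (with $\beta\in\{\alpha,1-\alpha\}$) do \emph{not} depend on $\mathcal{Q}$. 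Consequently, in the dual representation of Lemma~\ref{lem:avar}-(ii) the feasible set $\Lambda=\{\lambda: 0\le\lambda_i\le p_i,\ \sum_i\lambda_i=\alpha\}$ is fixed, while the objectives $\langle\lambda,v\rangle$ and $\langle p-\lambda,v\rangle$ are affine in $\mathcal{Q}$ with nonnegative slope $\gamma$. Everything else follows from this structure.

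Given this, I would prove (i) by noting that $\mathcal{Q}\ge\widetilde{\mathcal{Q}}$ forces $v\ge\widetilde v$ entrywise; since every feasible $\lambda$ satisfies $\lambda\ge 0$ and $p-\lambda\ge 0$, both $\langle\lambda,v\rangle$ and $\langle p-\lambda,v\rangle$ increase, so the infimum and supremum defining $Q_1'$ and $Q_2'$ do not decrease. For (ii), replacing $\mathcal{Q}$ by $\mathcal{Q}+c\mathbf{1}$ shifts every atom by $\gamma c$, and the identities $\sum_i\lambda_i=\alpha$ and $\sum_i(p_i-\lambda_i)=1-\alpha$ make both AVaRs increase by exactly $\gamma c$ after dividing by $\alpha$ and $1-\alpha$; this yields the $+\gamma c\mathbf{1}$ term. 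For (iii), the left AVaR is $\tfrac1\alpha$ times an infimum of affine functions of $\mathcal{Q}$ over the fixed set $\Lambda$, hence concave, and the right AVaR is $\tfrac{1}{1-\alpha}$ times a supremum of affine functions, hence convex, which is precisely the $\gtrless$ statement.

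Property (iv) then follows from (i) and (ii) by the standard sandwiching argument: with $c=\|\mathcal{Q}-\widetilde{\mathcal{Q}}\|_\infty$ we have $\widetilde{\mathcal{Q}}-c\mathbf{1}\le\mathcal{Q}\le\widetilde{\mathcal{Q}}+c\mathbf{1}$, and applying $\pazocal{T}^\pi_{\alpha}$ together with monotonicity and distributivity gives $\pazocal{T}^\pi_{\alpha}\widetilde{\mathcal{Q}}-\gamma c\mathbf{1}\le\pazocal{T}^\pi_{\alpha}\mathcal{Q}\le\pazocal{T}^\pi_{\alpha}\widetilde{\mathcal{Q}}+\gamma c\mathbf{1}$, so the sup-norm distance contracts by $\gamma$. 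Property (v) is then immediate from Banach's fixed point theorem applied on the complete space $\big((\mathbb{R}^2)^{\pazocal{X}\times\pazocal{A}},\|\cdot\|_\infty\big)$.

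The most informative step is (vi)--(vii). Here I would use the identity $\alpha\,\text{AVaR}^{\text{left}}_{\alpha}(\nu)+(1-\alpha)\,\text{AVaR}^{\text{right}}_{1-\alpha}(\nu)=\langle p,v\rangle$ already recorded in the proof of Lemma~\ref{lem:avar}-(ii), which says this weighted average of the two AVaRs is simply the mean of $\nu$. Applying it to $\nu=(\pazocal{T}^\pi D_{\alpha,\mathcal{Q}})^{(x,a)}$ and using Eq.~(\ref{eq:DBO_discrete}) shows that the mean equals $\sum_{x',a'}P(x'|x,a)\pi(a'|x')\big(r(x,a,x')+\gamma(\alpha Q_1(x',a')+(1-\alpha)Q_2(x',a'))\big)$; writing $\bar Q=\alpha Q_1+(1-\alpha)Q_2$, this is exactly the classical policy-evaluation update applied to $\bar Q$. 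Thus $\mathcal{Q}\mapsto\bar Q$ intertwines $\pazocal{T}^\pi_{\alpha}$ with the classical Bellman operator, so at the fixed point $\bar Q^\pi$ satisfies the Bellman equation and equals $Q^\pi$ by uniqueness, giving (vi). For (vii), I would observe that $\text{AVaR}^{\text{left}}_{\alpha}\le\text{AVaR}^{\text{right}}_{1-\alpha}$ always, since the quantile function is non-decreasing so its average over $[0,\alpha]$ cannot exceed its average over $[\alpha,1]$; hence $Q_1^\pi\le Q_2^\pi$, and as $Q^\pi=\alpha Q_1^\pi+(1-\alpha)Q_2^\pi$ is a convex combination of the two, it lies between them. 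The main obstacle is conceptual rather than computational: recognizing that the nonlinear operator $\pazocal{T}^\pi_{\alpha}$ must be read through its $\alpha$-weighted mean, under which it collapses onto the affine classical Bellman operator on $\bar Q$; once this intertwining is isolated, (vi) and (vii) are short, and the remaining parts are routine consequences of Lemma~\ref{lem:avar}.
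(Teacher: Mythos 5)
Your proposal is correct, and parts (i)--(iii) and (v)--(vii) follow essentially the paper's own reasoning: monotonicity and concavity/convexity from the fixed dual feasible set of Lemma~\ref{lem:avar}-(ii) with atoms affine in $\mathcal{Q}$, the fixed point from Banach, the averaging property from the mean identity $\alpha\,\text{AVaR}^{\text{left}}_{\alpha}+(1-\alpha)\,\text{AVaR}^{\text{right}}_{1-\alpha}=\mathbb{E}$ together with uniqueness of the solution of the classical Bellman equation, and the relative order from monotonicity of quantile functions. The genuine divergence is in (iv): the paper proves contraction by factoring $\pazocal{T}^\pi_{\alpha}$ as the distributional operator $\pazocal{T}^\pi$ (a $\gamma$-contraction in $\widetilde{W}_\infty$ by Lemma 3 of \citealp{bellemare2017distributional}) followed by the $W_2$-projection, and showing the projection is a non-expansion in $W_\infty$ via the integral bound $\frac{1}{\alpha}\bigl|\int_0^\alpha (F^{-1}-G^{-1})\bigr|\le \sup_\tau|F^{-1}-G^{-1}|$; you instead derive contraction purely from (i) and (ii) by the classical sandwich $\widetilde{\mathcal{Q}}-c\mathbf{1}\le\mathcal{Q}\le\widetilde{\mathcal{Q}}+c\mathbf{1}$ with $c=\|\mathcal{Q}-\widetilde{\mathcal{Q}}\|_\infty$. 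Your route is more elementary and self-contained (it never invokes the Wasserstein contraction of the unprojected operator, nor any external result), and it cleanly exposes that contraction here is a formal consequence of monotonicity plus distributivity, exactly as for the classical Bellman operator; the paper's route is more structural, isolating the reusable fact that the diatomic $W_2$-projection is $W_\infty$-non-expansive, which is the statement that generalizes to projections onto richer atomic families. Two small further remarks: your (ii) argues the shift through the dual representation rather than the paper's direct quantile-function shift (equivalent, since adding $c\mathbf{1}$ translates all atoms by $\gamma c$ without changing the weights), and your chain of dependencies (v) $\Leftarrow$ (iv) is the logically intended one --- the paper's text cites (iii) for the fixed point, which is evidently a typo for the contraction property.
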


\begin{proof}
\noindent (i) Monotonicity. $\mathcal{Q}=(Q_1,Q_2) \ge \widetilde{\mathcal{Q}}=(\widetilde{Q}_1,\widetilde{Q}_2)$
means that for all $(x,a)$, $Q_1(x,a)\ge \widetilde{Q}_1(x,a)$ and $Q_2(x,a)\ge \widetilde{Q}_2(x,a)$.
The monotonicity property follows from combining Eq. (\ref{eq:DBO_discrete}) with the dual representation Lemma \ref{lem:avar}-(ii).

\noindent (ii) Distributivity. Fix a pair $(x,a)$ and $c\in\mathbb{R}$.
The quantile function of $(\pazocal{T}^\pi D_{\alpha, \mathcal{Q}+c\mathbf{1}})^{(x,a)}$ is obtained
by shifting that of $(\pazocal{T}^\pi D_{\alpha, \mathcal{Q}})^{(x,a)}$ by $\gamma c$.
The result follows by linearity of the Lebesgue integral.

\noindent (iii) Concavity/convexity. Denoting $(Q_1', Q_2')=\pazocal{T}^\pi_{\alpha} \mathcal{Q}$, it holds from Lemma \ref{lem:avar}-(ii) that
\begin{equation*}
  Q_1'(x,a) = \text{AVaR}^{\text{left}}_{\alpha}((\pazocal{T}^\pi D_{\alpha,\mathcal{Q}})^{(x,a)} )
\end{equation*}
is a concave piecewise linear function of $\mathcal{Q}$. For the same reason, $Q_2'(x,a)$ is a convex piecewise linear function of $\mathcal{Q}$.

\noindent (iv) Contraction.
Given that $\pazocal{T}^\pi$ is a $\gamma$-contraction in $\widetilde{W}_\infty$ (Lemma 3 in \citealp{bellemare2017distributional}),
it is enough to prove that the $W_2$-projection is a non-expansion in $W_\infty$.
Let $(\nu,\nu')\in\mathcal{P}_b(\mathbb{R})^2$ with respective CDFs $F, G$.
Then,
\begin{multline*}
  W_\infty\left( \alpha \delta_{\text{AVaR}^{\text{left}}_{\alpha}(\nu)} + (1-\alpha)\delta_{\text{AVaR}^{\text{right}}_{1-\alpha}(\nu)} \, , \, \alpha \delta_{\text{AVaR}^{\text{left}}_{\alpha}(\nu')} + (1-\alpha)\delta_{\text{AVaR}^{\text{right}}_{1-\alpha}(\nu')} \right) \\
  = \max\left( \left|\text{AVaR}^{\text{left}}_{\alpha}(\nu) - \text{AVaR}^{\text{left}}_{\alpha}(\nu')\right| \, , \, \left| \text{AVaR}^{\text{right}}_{1-\alpha}(\nu) - \text{AVaR}^{\text{right}}_{1-\alpha}(\nu') \right| \right) \\
= \max\left( \frac{1}{\alpha} \left| \int_{\tau=0}^\alpha (F^{-1}(\tau) - G^{-1}(\tau)) d\tau \right| \, , \, \frac{1}{1-\alpha} \left| \int_{\tau=\alpha}^1 (F^{-1}(\tau) - G^{-1}(\tau)) d\tau \right| \right) \\
\le \sup_{\tau\in(0, 1)} |F^{-1}(\tau) - G^{-1}(\tau)| = W_\infty(\nu, \nu')  ,
\end{multline*}
where we used a triangular inequality in the last step.

\noindent (v) Fixed point.
Consequence of (iii) combined with Banach's fixed point theorem and the completeness of $\mathbb{R}^{2|\pazocal{X}||\pazocal{A}|}$.

\noindent (vi) Average.
By Lemma \ref{lem:quantile_expectation}: for any distribution $\nu\in\mathcal{P}_b(\mathbb{R})$ with CDF $F$,
\begin{equation*}
  \alpha \text{AVaR}^{\text{left}}_\alpha(\nu) + (1-\alpha) \text{AVaR}^{\text{right}}_{1-\alpha}(\nu)
  = \int_{\tau=0}^1 F^{-1}(\tau) d\tau
  = \mathbb{E}_{Y\sim \nu}[Y] .
\end{equation*}
It follows for $\nu=(\pazocal{T}^\pi D_{\alpha, \mathcal{Q}^\pi})^{(x,a)}$ that $\alpha Q_1^\pi + (1-\alpha) Q_2^\pi$ solves the classical Bellman equation.

\noindent (vii) Relative order.
As quantile functions are always non-decreasing,
\begin{equation*}
\text{AVaR}^{\text{left}}_\alpha(\nu) \le \text{AVaR}^{\text{right}}_\beta(\nu) \quad \text{ for any } (\alpha,\beta)\in (0, 1]^2 \,,
\end{equation*}
from which the proof follows for $\nu=(\pazocal{T}^\pi D_{\alpha, \mathcal{Q}^\pi})^{(x,a)}$ and $\beta=1-\alpha$ (and from Lemma \ref{lem:quantile_expectation}, the expected value of $\nu$ is equal to $\text{AVaR}^{\text{left}}_1(\nu)=\text{AVaR}^{\text{right}}_1(\nu)$).
\end{proof}

In brief, our approach comes with \emph{two state-action value functions} $Q_1^\pi(x,a)$ and $Q_2^\pi(x,a)$,
instead of just $Q^\pi(x,a)$.
Indeed from Proposition \ref{prop:properties_eval}-(v), we know that our $W_2$-projected operator has a unique fixed point $\mathcal{Q}^\pi=(Q_1^\pi, Q_2^\pi)$.
Plus, properties such as (vi) and (vii) indicate
that this method extends the expected case.
Still, so far we have not yet provided any meaningful interpretation of $Q_1^\pi$ and $Q_2^\pi$.
The next section shows that these quantities are
related to some notion of robustness induced by the policy $\pi$.
By analogy with the AVaR, we call $Q_1^\pi(x,a)$ the \emph{left Bellman average value-at-risk} (left BAVaR)
and $Q_2^\pi(x,a)$ the \emph{right Bellman average value-at-risk} (right BAVaR),
for a given risk level $\alpha \in (0,1)$.

\section{Robustness and risk-awareness}
\label{sec:interpretations}

The purpose of this section is to interpret the left BAVaR $Q_1^\pi(x,a)$ and the right BAVaR $Q_2^\pi(x,a)$
from the double perspective of robustness and risk-measure theory.
\begin{itemize}
\item Our first interpretation bridges the gap with the robust MDP framework:
$Q_1^\pi(x,a)$ and $Q_2^\pi(x,a)$ are respectively ``worst-case'' and ``best-case'' $Q$-functions,
in a latent augmented MDP with twice more states.
To the best of our knowledge, this constitutes the first formal link between the distributional Bellman operator and robust MDPs.
\item Our second point states that $-(1-\gamma)Q_1^\pi(x,a)$ and $(1-\gamma)Q_2^\pi(x,a)$ are both \emph{coherent risk measures}.
\end{itemize}

\subsection{Robust MDP with double state space}

The purpose of this \emph{first interpretation} is to unveil the link existing between our distributional approach and robust MDPs, in the special case of
\emph{$\alpha$-coherent policies}.
\begin{definition}{\textsc{($\alpha$-coherent policy)}.}
  \label{def:unipol}
For $\alpha\in (0,1)$, we define the set $\Pi_\alpha\subseteq\Pi$ of $\alpha$-coherent policies $\pi$ that satisfy the
following condition:
\begin{equation*}
  \forall (x,a,b)\in\pazocal{X}\times\pazocal{A}^2 \quad ,
  \quad (a,b)\in\text{Support}(\pi(\cdot|x))^2 \Longrightarrow \mathcal{Q}^\pi(x,a)=\mathcal{Q}^\pi(x,b)=:(V^\pi_1(x),V^\pi_2(x)) \,.
\end{equation*}
\end{definition}

All deterministic policies belong to $\Pi_\alpha$, as well as the \emph{safe/risky policies} that are derived in section \ref{sec:safe_risky}.
For such $\alpha$-coherent policies, the BAVaR factorizes to a robust MDP formulation.

\paragraph{Augmented state space.}
Let $\pi\in\Pi_\alpha$ be an $\alpha$-coherent policy. We want to show that $V^\pi_1$ and $V^\pi_2$ can respectively be
interpreted as worst-case and best-case value functions in an augmented MDP, where each state $x\in\pazocal{X}$ is
split into two distinct substates $\underline{x}$ and $\overline{x}$.
We denote the augmented state space by:
\begin{equation}
\boldsymbol{X} = \bigcup_{x\in\pazocal{X}} \{\underline{x}, \overline{x}\} ,
\end{equation}
which has double size $|\boldsymbol{X}|=2|\pazocal{X}|$.
We consider a decision-maker that only observes his current state $x$ in the original state space, not the latent substate (either $\underline{x}$ or $\overline{x}$) in the augmented one. For that reason, it is natural to extend the reward function $r$ and the policy $\pi$ to the augmented state space without distinguishing the substates: for any transition $(x,a,x')\in\pazocal{X}\times\pazocal{A}\times\pazocal{X}$, and corresponding substates $s\in\{\underline{x},\overline{x}\}$, $s'\in\{\underline{x'},\overline{x'}\}$,
\begin{equation}
\label{eq:extended_rpi}
r(s,a,s') = r(x,a,x') \quad \text{ and } \quad
\pi(\cdot|s) = \pi(\cdot|x) \,.
\end{equation}

\paragraph{Refined dynamics.}
In the augmented MDP, we constrain the transition probabilities to be consistent with the transition kernel $P$ of the original MDP: this characterizes the following \emph{dichotomous uncertainty set}.

\begin{definition}{\textsc{(Dichotomous uncertainty set)}.}
\label{def:uncertainty_set}
Given the original transition kernel $P$ and $\alpha\in (0,1)$, the dichotomous uncertainty set $\Upsilon_{\alpha}$ is the set of transition kernels $\boldsymbol{P}$ (in the augmented MDP with double state space $\boldsymbol{X}$) verifying: for all $x,a,x'$,
\begin{equation*}
\begin{cases}
  & \alpha \boldsymbol{P}(\underline{x'} | \underline{x}, a) + (1-\alpha) \boldsymbol{P}(\underline{x'} | \overline{x}, a) = \alpha P(x'|x,a) \\
  & \alpha \boldsymbol{P}(\overline{x'} | \underline{x}, a) + (1-\alpha) \boldsymbol{P}(\overline{x'} | \overline{x}, a) = (1-\alpha) P(x'|x,a) \\
  & \boldsymbol{P}(\underline{x'} | \underline{x}, a) \ge \frac{\alpha}{1-\alpha} \boldsymbol{P}(\overline{x'} | \underline{x}, a) \, .
\end{cases}
\end{equation*}
\end{definition}

The inequality constraint in Definition \ref{def:uncertainty_set} ensures that, starting from substate $\underline{x}$,
the next substates with the same \emph{mode} $\underline{x'}$ are visited in priority compared to those with different mode $\overline{x'}$.
All together, these constraints also imply the symmetric inequality: $\boldsymbol{P}(\overline{x'} | \overline{x}, a) \ge \frac{1-\alpha}{\alpha} \boldsymbol{P}(\underline{x'} | \overline{x}, a)$.
Interestingly, our new uncertainty set $\Upsilon_{\alpha}$ does not fulfil any of the rectangularity assumptions that have been considered in the literature (see subsection \ref{subsec:robust_mdps}).
In Definition \ref{def:uncertainty_set}, we highlight that $\Upsilon_{\alpha} \neq \Upsilon_{1-\alpha}$. Still, the two sets are symmetric: $\Upsilon_{1-\alpha}$ is obtained from $\Upsilon_{\alpha}$ by permuting the roles of $\underline{x}$ and $\overline{x}$ for each $x\in\pazocal{X}$.
We are now ready to state our main result.

\begin{theorem}{\textsc{(Robust MDP interpretation)}.}
\label{th:DRL_robust}
Let $\alpha\in (0,1)$ and $\pi\in\Pi_\alpha$ be an $\alpha$-coherent policy.
Let $V^\pi_{\boldsymbol{P}}$ be the value function of $\pi$ in the augmented MDP (see Eq. (\ref{eq:extended_rpi})) with transition kernel $\boldsymbol{P}$. Then, $V^\pi_1$ and $V^\pi_2$ are respectively worst-case and best-case value functions:
\begin{equation*}
\forall x\in\pazocal{X}, \qquad
V^\pi_1(x) = \inf_{\boldsymbol{P}\in\Upsilon_{\alpha}} V^{\pi}_{\boldsymbol{P}}(\underline{x})
\quad \text{ and } \quad
V^\pi_2(x) = \sup_{\boldsymbol{P}\in\Upsilon_{\alpha}} V^{\pi}_{\boldsymbol{P}}(\overline{x}) ,
\end{equation*}
where the infimum and supremum are attained at the same kernel(s) $\boldsymbol{P}^\star$.
\end{theorem}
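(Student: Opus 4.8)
The plan is to characterise the pair $(V^\pi_1,V^\pi_2)$ as the value function of one well-chosen kernel $\boldsymbol{P}^\star\in\Upsilon_\alpha$, and then to show that no competing kernel in $\Upsilon_\alpha$ does better in either direction. First I would exploit $\alpha$-coherence (Definition \ref{def:unipol}) to simplify the fixed-point distributions: substituting $\mathcal{Q}^\pi(x',a')=(V^\pi_1(x'),V^\pi_2(x'))$ for every $a'\in\text{Support}(\pi(\cdot|x'))$ into Eq. (\ref{eq:DBO_discrete}) collapses $(\pazocal{T}^\pi D_{\alpha,\mathcal{Q}^\pi})^{(x,a)}$ to the $2|\pazocal{X}|$-atom law $\sum_{x'}P(x'|x,a)\bigl(\alpha\delta_{r+\gamma V^\pi_1(x')}+(1-\alpha)\delta_{r+\gamma V^\pi_2(x')}\bigr)$. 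Applying the dual representation of Lemma \ref{lem:avar}-(ii) to this law, under the substitution $\lambda_{(x',1)}=\alpha\boldsymbol{P}(\underline{x'}|\underline{x},a)$ and $\lambda_{(x',2)}=\alpha\boldsymbol{P}(\overline{x'}|\underline{x},a)$, turns the constraints $0\le\lambda\le p$, $\sum_i\lambda_i=\alpha$ into exactly the nonnegativity and equality (consistency) constraints of Definition \ref{def:uncertainty_set} for the outgoing transitions of $\underline{x}$. This identifies $V^\pi_1(x)$ with the infimum of the one-step look-ahead $\sum_a\pi(a|x)\sum_{s'}\boldsymbol{P}(s'|\underline{x},a)(r+\gamma U(s'))$, where $U(\underline{x'})=V^\pi_1(x')$ and $U(\overline{x'})=V^\pi_2(x')$, and symmetrically $V^\pi_2(x)$ with the supremum of the look-ahead from $\overline{x}$ (read off from $p-\lambda$). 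A key check is that the extra inequality constraint of $\Upsilon_\alpha$ is inactive at the optimum: since $r+\gamma V^\pi_1(x')\le r+\gamma V^\pi_2(x')$ by Proposition \ref{prop:properties_eval}-(vii), the greedy minimiser fills the cheaper $\underline{x'}$ atom before the $\overline{x'}$ atom, so $\boldsymbol{P}^\star(\underline{x'}|\underline{x},a)\ge\tfrac{\alpha}{1-\alpha}\boldsymbol{P}^\star(\overline{x'}|\underline{x},a)$ holds automatically.

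Because $\langle\lambda,v\rangle$ and $\langle p-\lambda,v\rangle$ sum to the constant $\langle p,v\rangle$, the infimum and supremum are attained at a common $\lambda^\star$ (as already noted in Lemma \ref{lem:avar}-(ii)); reading off the associated transitions state-action-wise produces a single kernel $\boldsymbol{P}^\star\in\Upsilon_\alpha$ for which $\mathcal{L}_{\boldsymbol{P}^\star}U=U$, where $\mathcal{L}_{\boldsymbol{P}^\star}$ denotes the ordinary Bellman operator of $\pi$ in the augmented MDP with kernel $\boldsymbol{P}^\star$. By uniqueness of the value function this yields $U=V^\pi_{\boldsymbol{P}^\star}$, i.e. $V^\pi_1(x)=V^\pi_{\boldsymbol{P}^\star}(\underline{x})$ and $V^\pi_2(x)=V^\pi_{\boldsymbol{P}^\star}(\overline{x})$, so both extrema are achieved at one common kernel and it remains only to prove optimality of $\boldsymbol{P}^\star$.

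The hard part is the optimality inequality, because $\Upsilon_\alpha$ is non-rectangular: the outgoing transitions of $\underline{x}$ and $\overline{x}$ are coupled, so a naive monotone robust-DP argument fails (the two desired inequalities point in opposite directions). I would resolve this with an invariant. Summing the Bellman equations of an arbitrary $\boldsymbol{P}\in\Upsilon_\alpha$ weighted by $\alpha$ and $1-\alpha$ and using the two equality constraints of Definition \ref{def:uncertainty_set}, the quantity $\bar V(x)=\alpha V^\pi_{\boldsymbol{P}}(\underline{x})+(1-\alpha)V^\pi_{\boldsymbol{P}}(\overline{x})$ is seen to satisfy the ordinary Bellman equation of $\pi$ in the \emph{original} MDP, hence $\bar V\equiv V^\pi$ for every $\boldsymbol{P}\in\Upsilon_\alpha$ (in particular $\bar V=\alpha V^\pi_1+(1-\alpha)V^\pi_2$ by Proposition \ref{prop:properties_eval}-(vi)). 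Consequently $V^\pi_{\boldsymbol{P}}(\underline{x})\ge V^\pi_1(x)$ and $V^\pi_{\boldsymbol{P}}(\overline{x})\le V^\pi_2(x)$ are equivalent, and it suffices to establish the former.

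Writing $\delta(x)=V^\pi_{\boldsymbol{P}}(\underline{x})-V^\pi_1(x)$, subtracting the look-ahead inequality $V^\pi_1(x)\le\sum_a\pi(a|x)\sum_{s'}\boldsymbol{P}(s'|\underline{x},a)(r+\gamma U(s'))$ from the exact Bellman equation for $\underline{x}$, and inserting the invariant identity $V^\pi_{\boldsymbol{P}}(\overline{x'})-V^\pi_2(x')=-\tfrac{\alpha}{1-\alpha}\delta(x')$, yields
\[
\delta(x)\ \ge\ \gamma\sum_a\pi(a|x)\sum_{x'}\Bigl(\boldsymbol{P}(\underline{x'}|\underline{x},a)-\tfrac{\alpha}{1-\alpha}\boldsymbol{P}(\overline{x'}|\underline{x},a)\Bigr)\,\delta(x')\,.
\]
The coefficients are nonnegative precisely by the inequality constraint defining $\Upsilon_\alpha$, and sum over $x'$ to at most one; evaluating this recursion at a state minimising $\delta$ and using $\gamma<1$ forces $\min_x\delta(x)\ge0$. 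This proves $V^\pi_{\boldsymbol{P}}(\underline{x})\ge V^\pi_1(x)$ for all $\boldsymbol{P}\in\Upsilon_\alpha$, whence the invariant gives $V^\pi_{\boldsymbol{P}}(\overline{x})\le V^\pi_2(x)$; combined with achievability at $\boldsymbol{P}^\star$ this establishes the two equalities with common optimiser $\boldsymbol{P}^\star$. The crux throughout is the interplay between the equality constraints (which yield the invariant) and the inequality constraint (which makes the recursion's coefficients nonnegative): this is exactly where the specific, non-rectangular shape of $\Upsilon_\alpha$ enters.
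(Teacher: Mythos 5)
Your proposal is correct, and its first two stages coincide with the paper's proof: you use $\alpha$-coherence to collapse the fixed-point distribution, invoke the dual representation of Lemma \ref{lem:avar}-(ii), identify the dual variables $\lambda$ with the outgoing transitions of $\underline{x}$ and $p-\lambda$ with those of $\overline{x}$, observe that the optimizer automatically satisfies the inequality constraint of $\Upsilon_\alpha$ (since $V^\pi_1\le V^\pi_2$), and assemble the common optimizer $\boldsymbol{P}^\star$ whose Bellman fixed point must equal $(V^\pi_1,V^\pi_2)$ by uniqueness. Where you genuinely diverge is the last, hardest step: showing that no other kernel in the non-rectangular set $\Upsilon_\alpha$ does better. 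The paper proves this by induction over $k$-fold applications of $T^\pi_{\boldsymbol{P}}$ started at $V^\pi_{\boldsymbol{P}^\star}$, maintaining two invariants for the iterates (an averaging property and a monotonicity property) and then letting $k\to\infty$ so the iterates converge to $V^\pi_{\boldsymbol{P}}$. You instead argue directly at the fixed points: the two equality constraints of $\Upsilon_\alpha$ give, in one shot, the invariant $\alpha V^\pi_{\boldsymbol{P}}(\underline{x})+(1-\alpha)V^\pi_{\boldsymbol{P}}(\overline{x})=V^\pi(x)$ for every $\boldsymbol{P}\in\Upsilon_\alpha$, which reduces the two-sided claim to the single inequality $V^\pi_{\boldsymbol{P}}(\underline{x})\ge V^\pi_1(x)$; you then close with a minimum-principle argument on $\delta(x)=V^\pi_{\boldsymbol{P}}(\underline{x})-V^\pi_1(x)$, whose recursion has coefficients $\boldsymbol{P}(\underline{x'}|\underline{x},a)-\tfrac{\alpha}{1-\alpha}\boldsymbol{P}(\overline{x'}|\underline{x},a)$ that are nonnegative exactly by the inequality constraint and sum to at most one, so $\gamma<1$ forces $\min_x\delta(x)\ge 0$. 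Both arguments exploit precisely the same two structural features of $\Upsilon_\alpha$ (equality constraints yield the averaging invariant, the inequality constraint yields nonnegative mixing coefficients), but yours avoids the induction over iterates and the limiting step, which makes it shorter and arguably more transparent, while the paper's version stays closer to the standard robust-DP template and makes the convergence mechanics of Banach iteration explicit.
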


\begin{proof}
Fix $(x,a)$ and denote by $F_{x,a}$ the CDF of $(\pazocal{T}^\pi D_{\alpha,\mathcal{Q}^\pi})^{(x,a)}$.
Using the dual representation of the left AVaR in Lemma \ref{lem:avar}-(ii),
\begin{equation*}
\frac{1}{\alpha} \int_{\tau=0}^\alpha F_{x,a}^{-1}(\tau)d\tau
= \frac{1}{\alpha} \inf_{\lambda_1,\lambda_2} \sum_{x',a'} \lambda_1(x',a') (r(x, a, x') + \gamma Q_1^\pi(x',a')) + \lambda_2(x',a') (r(x, a, x') + \gamma Q_2^\pi(x',a')) ,
\end{equation*}
where the infimum ranges over functions $(\lambda_1,\lambda_2):\pazocal{X}\times\pazocal{A}\rightarrow [0,1]^2$ such that for all $(x',a')$,
\begin{itemize}
  \item $0\le \lambda_1(x',a') \le \alpha P(x'|x,a)\pi(a'|x')$ ,
  \item $0\le \lambda_2(x',a') \le (1-\alpha) P(x'|x,a)\pi(a'|x')$ ,
  \item $\sum_{x',a'} \lambda_1(x',a')+\lambda_2(x',a')=\alpha$ .
\end{itemize}
Under the additional assumption $\pi\in\Pi_\alpha$, any $Q_1^\pi(x',a')$ (resp.~$Q_2^\pi(x',a')$) with $a'\in\text{Support}(\pi(\cdot|x'))$ is equal to $V^\pi_1(x')$ (resp.~$V^\pi_2(x')$), which simplifies the expression to:
\begin{multline}
  \label{eq:inf_lambda}
\frac{1}{\alpha} \int_{\tau=0}^\alpha F_{x,a}^{-1}(\tau)d\tau
= \inf_{\lambda_1,\lambda_2} \sum_{x'} \underbrace{\left[\frac{1}{\alpha}\sum_{a'} \lambda_1(x',a')\right]}_{\boldsymbol{P}(\underline{x'}|\underline{x},a)} (r(x, a, x') + \gamma V^\pi_1(x')) \\
+ \underbrace{\left[\frac{1}{\alpha}\sum_{a'} \lambda_2(x',a')\right]}_{\boldsymbol{P}(\overline{x'}|\underline{x},a)} (r(x, a, x') + \gamma V^\pi_2(x')).
\end{multline}
Symmetrically for the right AVaR,
\begin{multline}
  \label{eq:sup_lambda}
\frac{1}{1-\alpha} \int_{\tau=\alpha}^1 F_{x,a}^{-1}(\tau)d\tau
= \sup_{\lambda_1,\lambda_2} \sum_{x'} \underbrace{\left[\frac{1}{1-\alpha}\sum_{a'} \alpha P(x'|x,a)\pi(a'|x')-\lambda_1(x',a')\right]}_{\boldsymbol{P}(\underline{x'}|\overline{x},a)} (r(x, a, x') + \gamma V^\pi_1(x')) \\
+ \underbrace{\left[\frac{1}{1-\alpha}\sum_{a'} (1-\alpha) P(x'|x,a)\pi(a'|x')- \lambda_2(x',a')\right]}_{\boldsymbol{P}(\overline{x'}|\overline{x},a)} (r(x, a, x') + \gamma V^\pi_2(x')).
\end{multline}
Hence,
\begin{equation*}
  V^\pi_1(x)
  = \inf_{\boldsymbol{P}\in \Upsilon_{\alpha}} \sum_a \pi(a|x) \sum_{x'} \boldsymbol{P}(\underline{x'}|\underline{x},a) (r(x, a, x') + \gamma V^\pi_1(x')) + \boldsymbol{P}(\overline{x'}|\underline{x},a) (r(x, a, x') + \gamma V^\pi_2(x'))
\end{equation*}
  and
  \begin{equation*}
  V^\pi_2(x)
  = \sup_{\boldsymbol{P}\in \Upsilon_{\alpha}} \sum_a \pi(a|x) \sum_{x'} \boldsymbol{P}(\underline{x'}|\overline{x},a) (r(x, a, x') + \gamma V^\pi_1(x')) + \boldsymbol{P}(\overline{x'}|\overline{x},a) (r(x, a, x') + \gamma V^\pi_2(x')).
\end{equation*}
From equations (\ref{eq:inf_lambda}) and (\ref{eq:sup_lambda}), the infimum and the supremum are clearly attained at the same $(\lambda_1^\star,\lambda_2^\star)$ verifying $\lambda_1^\star \ge \frac{\alpha}{1-\alpha} \lambda_2^\star$ because $V^\pi_1 \le V^\pi_2$.
Denoting by $T^\pi_{\boldsymbol{P}}$ the non-distributional Bellman operator in the augmented MDP with kernel $\boldsymbol{P}$, we just proved that
\begin{equation*}
  V^\pi_1(x)
  = V^\pi_{\boldsymbol{P}^\star}(\underline{x})
  = \inf_{\boldsymbol{P}\in \Upsilon_{\alpha}} ( T^\pi_{\boldsymbol{P}} V^\pi_{\boldsymbol{P}^\star} )(\underline{x}) \quad \text{ and } \quad
  V^\pi_2(x)
  = V^\pi_{\boldsymbol{P}^\star}(\overline{x})
  = \sup_{\boldsymbol{P}\in \Upsilon_{\alpha}} ( T^\pi_{\boldsymbol{P}} V^\pi_{\boldsymbol{P}^\star} )(\overline{x}) \, ,
\end{equation*}
where $\boldsymbol{P}^\star\in \Upsilon_{\alpha}$ is characterized by $(\lambda_1^\star,\lambda_2^\star)$.

To conclude the proof, it remains to prove by induction that for any $k\in\mathbb{N}$, the following properties hold
for all $x\in\pazocal{X}$:
\begin{enumerate}[(a)]
  \item averaging property:
  \begin{equation*}
    \forall \boldsymbol{P}\in \Upsilon_{\alpha} \  , \quad
  \alpha ( (T^\pi_{\boldsymbol{P}})^k V^\pi_{\boldsymbol{P}^\star} )(\underline{x}) + (1-\alpha) ( (T^\pi_{\boldsymbol{P}})^k V^\pi_{\boldsymbol{P}^\star} )(\overline{x}) = V^\pi(x) \, ,
\end{equation*}
  \item monotonicity:
  \begin{equation*}
  \inf_{\boldsymbol{P}\in \Upsilon_{\alpha}} ( (T^\pi_{\boldsymbol{P}})^k V^\pi_{\boldsymbol{P}^\star} )(\underline{x})
  \ge V^\pi_{\boldsymbol{P}^\star}(\underline{x}) \,.
\end{equation*}
\end{enumerate}
The result then follows from taking the limit $k\to+\infty$.

\noindent \emph{Base case} $k=0$.
The monotonicity property (b) trivially holds,
while (a) derives from Proposition \ref{prop:properties_eval}-(vi).

\noindent \emph{Induction step}: assume that the induction hypothesis is true for some $k\ge 0$.

\noindent (a) Averaging property. Let us prove that the averaging property is true for $k+1$:
\begin{multline*}
  \alpha ( (T^\pi_{\boldsymbol{P}})^{k+1} V^\pi_{\boldsymbol{P}^\star} )(\underline{x}) + (1-\alpha) ( (T^\pi_{\boldsymbol{P}})^{k+1} V^\pi_{\boldsymbol{P}^\star} )(\overline{x}) \\
  = \sum_a \pi(a|x) \sum_{x'} \underbrace{ \left( \alpha \boldsymbol{P}(\underline{x'}|\underline{x},a) + (1-\alpha) \boldsymbol{P}(\underline{x'}|\overline{x},a) \right) }_{ \alpha P(x'|x,a) } \cdot \left( r(x,a,x') + \gamma ((T^\pi_{\boldsymbol{P}})^k V^\pi_{\boldsymbol{P}^\star})(\underline{x'}) \right) \\
  + \underbrace{ \left( \alpha \boldsymbol{P}(\overline{x'}|\underline{x},a) + (1-\alpha) \boldsymbol{P}(\overline{x'}|\overline{x},a) \right) }_{ (1-\alpha) P(x'|x,a) } \cdot \left( r(x,a,x') + \gamma ((T^\pi_{\boldsymbol{P}})^k V^\pi_{\boldsymbol{P}^\star})(\overline{x'}) \right) \\
  = \sum_a \pi(a|x) \sum_{x'} P(x'|x,a) \Biggl( r(x,a,x') + \gamma \underbrace{ \left( \alpha ((T^\pi_{\boldsymbol{P}})^k V^\pi_{\boldsymbol{P}^\star})(\underline{x'}) + (1-\alpha) ((T^\pi_{\boldsymbol{P}})^k V^\pi_{\boldsymbol{P}^\star})(\overline{x'}) \right) }_{ V^\pi(x') } \Biggr) = V^\pi(x) .
\end{multline*}

\noindent (b) Monotonicity. We have,
\begin{multline*}
  \inf_{\boldsymbol{P}\in \Upsilon_{\alpha}} ( (T^\pi_{\boldsymbol{P}})^{k+1} V^\pi_{\boldsymbol{P}^\star} )(\underline{x})
  = \inf_{\boldsymbol{P}\in \Upsilon_{\alpha}} \sum_a \pi(a|x) \sum_{x'} \boldsymbol{P}(\underline{x'}|\underline{x},a) \left( r(x,a,x') + \gamma ( (T^\pi_{\boldsymbol{P}})^{k} V^\pi_{\boldsymbol{P}^\star} )(\underline{x'}) \right) \\
  + \boldsymbol{P}(\overline{x'}|\underline{x},a) \Biggl( r(x,a,x') + \gamma \underbrace{ ( (T^\pi_{\boldsymbol{P}})^{k} V^\pi_{\boldsymbol{P}^\star} )(\overline{x'}) }_{ \frac{ V^\pi(x') - \alpha ( (T^\pi_{\boldsymbol{P}})^{k} V^\pi_{\boldsymbol{P}^\star} )(\underline{x'}) }{1-\alpha} } \Biggr) \\
  = \inf_{\boldsymbol{P}\in \Upsilon_{\alpha}} \sum_a \pi(a|x) \sum_{x'} \left( \boldsymbol{P}(\underline{x'}|\underline{x},a) + \boldsymbol{P}(\overline{x'}|\underline{x},a) \right) r(x,a,x') \\
  + \gamma \left( \boldsymbol{P}(\underline{x'}|\underline{x},a) - \frac{\alpha}{1-\alpha} \boldsymbol{P}(\overline{x'}|\underline{x},a) \right) \underbrace{ ( (T^\pi_{\boldsymbol{P}})^{k} V^\pi_{\boldsymbol{P}^\star} )(\underline{x'}) }_{\ge V^\pi_{\boldsymbol{P}^\star}(\underline{x'}) }
  + \frac{\gamma}{1-\alpha} \boldsymbol{P}(\overline{x'}|\underline{x},a) V^\pi(x') \\
  \ge \inf_{\boldsymbol{P}\in \Upsilon_{\alpha}} \sum_a \pi(a|x) \sum_{x'} \boldsymbol{P}(\underline{x'}|\underline{x},a) \left( r(x,a,x') + \gamma V^\pi_{\boldsymbol{P}^\star}(\underline{x'}) \right)
  + \boldsymbol{P}(\overline{x'}|\underline{x},a) \Biggl( r(x,a,x') + \gamma \underbrace{ \frac{ V^\pi(x') - \alpha V^\pi_{\boldsymbol{P}^\star}(\underline{x'}) }{1-\alpha} }_{ V^\pi_{\boldsymbol{P}^\star}(\overline{x'}) } \Biggr) \\
  = \inf_{\boldsymbol{P}\in \Upsilon_{\alpha}} ( T^\pi_{\boldsymbol{P}} V^\pi_{\boldsymbol{P}^\star}) (\underline{x})
  = V^\pi_{\boldsymbol{P}^\star}(\underline{x}) .
\end{multline*}

\end{proof}

Theorem \ref{th:DRL_robust} shows that $V^\pi_1$ and $V^\pi_2$ merge into a value function in a state-augmented MDP
such that $V^\pi_1$ contains worst-case values.
Similar results can be found in the risk-sensitive MDP literature.
In \cite{chow2015risk}, risk-sensitive MDPs with CVaR objective are linked to robust MDPs
where the state space is augmented with a continuous state causing computational issues.
The relation between CVaR and robustness has been also investigated in \cite{osogami2012robustness} at the price of augmenting the state space with the time step variable.
In contrast in Theorem \ref{th:DRL_robust}, we only double the number of states, which makes our approach more tractable.

\paragraph{Characterization of $\boldsymbol{P}^\star$.}
A careful look at the proof of Theorem \ref{th:DRL_robust}, combined with Lemma \ref{lem:avar},
reveals the exact expression of the kernel $\boldsymbol{P}^\star\in\Upsilon_{\alpha}$
attaining both the $\inf$ and the $\sup$:
\begin{equation*}
  \text{for } s\in\{\underline{x},\overline{x}\} , \quad \boldsymbol{P}^\star(\cdot | s, a) = \boldsymbol{P}_{\sigma^\star_{x,a}}(\cdot | s, a) \, ,
\end{equation*}
where $\boldsymbol{P}_{\sigma}$ is defined in Definition \ref{def:Psigma} (in Appendix B)
for a generic permutation $\sigma$.
Here, $\sigma^\star_{x,a}: \mathbf{X} \rightarrow \{1, ..., |\mathbf{X}| \}$ is a permutation that sorts
the $|\mathbf{X}|=2|\pazocal{X}|$ particles
\begin{equation*}
  \left( r(x,a,x') + \gamma \underbrace{ V^\pi_1(x')}_{=: \mathcal{V}^\pi(\underline{x'})} \, , \, r(x,a,x') + \gamma \underbrace{ V^\pi_2(x')}_{=: \mathcal{V}^\pi(\overline{x'})} \right)_{x'\in\pazocal{X}}
\end{equation*}
in non-decreasing order:
\begin{equation*}
  r(x,a,\sigma^{\star-1}_{x,a}(1)) + \gamma \mathcal{V}^\pi(\sigma^{\star-1}_{x,a}(1)) \le \dots \le r(x,a,\sigma^{\star-1}_{x,a}(|\mathbf{X}|)) + \gamma \mathcal{V}^\pi(\sigma^{\star-1}_{x,a}(|\mathbf{X}|)) .
\end{equation*}

\begin{example}
  \label{ex:a2}
  In the MDP in Figure \ref{fig:ex_mdp} (with $\gamma=\frac{1}{2}$), for the deterministic policy $\pi(a_2|x_1)=\pi(a_2|x_2)=1$ and level $\alpha=\frac{1}{2}$,
  \begin{equation*}
    \begin{cases}
    & V_1^\pi(x_1) = 1.5 \\
    & V_2^\pi(x_1) = 2.5 \\
    & V_1^\pi(x_2) = 3.5 \\
    & V_2^\pi(x_2) = 4.5
  \end{cases}
  \quad \text{ and } \quad
    \begin{cases}
    & \boldsymbol{P}^\star( \underline{x_1} | \underline{x_1}, a_2 ) = \boldsymbol{P}^\star( \overline{x_1} | \underline{x_1}, a_2 ) = 0.5 \\
    & \boldsymbol{P}^\star( \underline{x_2} | \overline{x_1}, a_2 ) = \boldsymbol{P}^\star( \overline{x_2} | \overline{x_1}, a_2 ) = 0.5 \\
    & \boldsymbol{P}^\star( \underline{x_1} | \underline{x_2}, a_2 ) = \boldsymbol{P}^\star( \overline{x_1} | \underline{x_2}, a_2 ) = 0.5 \\
    & \boldsymbol{P}^\star( \underline{x_2} | \overline{x_2}, a_2 ) = \boldsymbol{P}^\star( \overline{x_2} | \overline{x_2}, a_2 ) = 0.5 \quad ,
  \end{cases}
  \end{equation*}
  and $\boldsymbol{P}^\star(\cdot|\cdot,a_1)$ can be chosen arbitrarily as long as $\boldsymbol{P}^\star\in\Upsilon_\alpha$.
\end{example}


\paragraph{Bridging the BAVaR-AVaR gap.}
In general, the BAVaRs are not equal to the AVaRs of the distributional return.
A consequence of Theorem \ref{th:DRL_robust} is that the BAVaRs are more concentrated than the AVaRs around their common mean, namely the value function.

\begin{corollary}{\textsc{(BAVaR vs.~AVaR)}.}
  \label{cor:bavar_avar}
  Consider $V_1^\pi(x)$ and $V_2^\pi(x)$ for a given level $\alpha\in(0,1)$, an $\alpha$-coherent policy
$\pi\in\Pi_\alpha$ and a state $x\in\pazocal{X}$. Then for any action $a\in\text{Support}(\pi(\cdot|x))$,
  \begin{equation*}
  \text{AVaR}^{\text{left}}_{\alpha}(\mu_\pi^{(x,a)}) \le V_1^\pi(x) \quad \text{ and } \quad V_2^\pi(x) \le \text{AVaR}^{\text{right}}_{1-\alpha}(\mu_\pi^{(x,a)})\,.
\end{equation*}
\end{corollary}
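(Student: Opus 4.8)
The plan is to first establish the stronger entrywise bounds $\text{AVaR}^{\text{left}}_{\alpha}(\mu_\pi^{(x,a)})\le Q_1^\pi(x,a)$ and $Q_2^\pi(x,a)\le\text{AVaR}^{\text{right}}_{1-\alpha}(\mu_\pi^{(x,a)})$ for \emph{every} pair $(x,a)$, and only then invoke $\alpha$-coherence: for $a\in\text{Support}(\pi(\cdot|x))$ Definition \ref{def:unipol} gives $Q_i^\pi(x,a)=V_i^\pi(x)$, so the two entrywise bounds turn into exactly the claimed inequalities. Writing $\mathcal{W}=(W_1,W_2)$ with $W_1^{(x,a)}=\text{AVaR}^{\text{left}}_{\alpha}(\mu_\pi^{(x,a)})$ and $W_2^{(x,a)}=\text{AVaR}^{\text{right}}_{1-\alpha}(\mu_\pi^{(x,a)})$, the target is $\mathcal{Q}^\pi\gtrless\mathcal{W}$ in the notation of Proposition \ref{prop:properties_eval}.

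The main tool is the convex stochastic order: write $\nu\preceq_{\mathrm{cx}}\nu'$ when $\mathbb{E}_\nu[\phi]\le\mathbb{E}_{\nu'}[\phi]$ for all convex $\phi$, equivalently when $\nu,\nu'$ share the same mean and $\int_0^p F_\nu^{-1}\ge\int_0^p F_{\nu'}^{-1}$ for all $p$. First I would check that the diatomic $W_2$-projection only contracts spread: the two-point law $\alpha\delta_{\text{AVaR}^{\text{left}}_{\alpha}(\nu)}+(1-\alpha)\delta_{\text{AVaR}^{\text{right}}_{1-\alpha}(\nu)}$ keeps the mean of $\nu$ (Lemma \ref{lem:quantile_expectation}) and, being $F_\nu^{-1}$ replaced by its averages over $(0,\alpha]$ and $(\alpha,1]$, satisfies $D_{\alpha,\mathcal{W}}^{(x,a)}\preceq_{\mathrm{cx}}\mu_\pi^{(x,a)}$. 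Since $\preceq_{\mathrm{cx}}$ is preserved under mixtures and under the nondecreasing affine pushforwards $f_{r,\gamma}$ ($\gamma\ge0$) out of which $\pazocal{T}^\pi$ is built, applying the DBO preserves it: $(\pazocal{T}^\pi D_{\alpha,\mathcal{W}})^{(x,a)}\preceq_{\mathrm{cx}}(\pazocal{T}^\pi\mu_\pi)^{(x,a)}$ entrywise. Reading off the integrals over $(0,\alpha]$ and $(\alpha,1]$ shows $\text{AVaR}^{\text{left}}_{\alpha}$ is nonincreasing and $\text{AVaR}^{\text{right}}_{1-\alpha}$ nondecreasing along $\preceq_{\mathrm{cx}}$; combined with the distributional fixed-point identity $\pazocal{T}^\pi\mu_\pi=\mu_\pi$ (so that the right-hand AVaRs are exactly $\mathcal{W}$), this yields the one-step inequality $\pazocal{T}^\pi_{\alpha}\mathcal{W}\gtrless\mathcal{W}$.

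It then remains to iterate $\pazocal{T}^\pi_{\alpha}$ and pass to the limit, and here lies the main obstacle: $\pazocal{T}^\pi_{\alpha}$ is \emph{not} monotone for $\gtrless$ in general, because lowering an upper atom $Q_2(x',a')$ at a successor can pull probability into the lower $\alpha$-tail and thereby decrease the resulting left AVaR. I would bypass this with a mean-invariance observation drawn from Proposition \ref{prop:properties_eval}-(vi): the mean $\alpha Q_1+(1-\alpha)Q_2$ is transported by the \emph{classical} Bellman operator $T^\pi$, whose fixed point is $Q^\pi$; as $\mathcal{W}$ already has mean $Q^\pi$, every iterate $(\pazocal{T}^\pi_{\alpha})^k\mathcal{W}$ retains the constant mean $Q^\pi$. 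On this mean-matched slice the convex-order argument does deliver $\gtrless$-monotonicity — two equal-mean diatomics are ordered by $\preceq_{\mathrm{cx}}$ precisely according to $\gtrless$, and $\pazocal{T}^\pi$ propagates this. Hence, starting from $\pazocal{T}^\pi_{\alpha}\mathcal{W}\gtrless\mathcal{W}$, the iterates are $\gtrless$-monotone, and by the contraction and fixed-point properties (Proposition \ref{prop:properties_eval}-(iv),(v)) they converge to $\mathcal{Q}^\pi$; taking $k\to\infty$ gives $\mathcal{Q}^\pi\gtrless\mathcal{W}$, which is the entrywise claim. I expect the delicate points to be exactly the convex-order preservation through the DBO and the restricted monotonicity step; the remaining bookkeeping is routine. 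As an alternative consistent with Theorem \ref{th:DRL_robust}, one could instead read off the optimal kernel $\boldsymbol{P}^\star\in\Upsilon_{\alpha}$ and turn it into a change of measure admissible in the dual representation of $\text{AVaR}^{\text{left}}_{\alpha}(\mu_\pi^{(x,a)})$, but bounding the induced trajectory density makes that route noticeably heavier.
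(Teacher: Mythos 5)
Your proof is correct, but it takes a genuinely different route from the paper's. The paper's proof is a short feasible-set-inclusion argument built on Theorem \ref{th:DRL_robust}: the left AVaR of $\mu_\pi^{(x,a)}$ admits the dual form $\inf\{\mathbb{E}_{Z\sim\nu}[Z] : \nu\ll\mu_\pi^{(x,a)},\ d\nu/d\mu_\pi^{(x,a)}\le 1/\alpha\}$, while Theorem \ref{th:DRL_robust} expresses $V_1^\pi(x)$ as the infimum of the same functional over the smaller family $\{\mu_{\pi,\boldsymbol{P}}^{(\underline{x},a)} : \boldsymbol{P}\in\Upsilon_\alpha\}$; since $\alpha\mu_{\pi,\boldsymbol{P}}^{(\underline{x},a)}+(1-\alpha)\mu_{\pi,\boldsymbol{P}}^{(\overline{x},a)}=\mu_\pi^{(x,a)}$, every member of that family is feasible for the dual, and the inequality follows from comparing the two infima. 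So the measure-change route you set aside as ``noticeably heavier'' is in fact exactly the paper's proof: the density bound $1/\alpha$ falls out of the mixture identity, with no trajectory-level estimates needed. Your route instead never leaves the distributional operators: Jensen gives $D_{\alpha,\mathcal{W}}^{(x,a)}\preceq_{\mathrm{cx}}\mu_\pi^{(x,a)}$, the DBO preserves $\preceq_{\mathrm{cx}}$ (mixtures of increasing affine pushforwards), and the AVaRs are monotone along it, yielding $\pazocal{T}^\pi_\alpha\mathcal{W}\gtrless\mathcal{W}$; your repair of the missing $\gtrless$-monotonicity---restricting to the invariant equal-mean slice $\alpha Q_1+(1-\alpha)Q_2=Q^\pi$, on which sorted equal-mean diatomics are $\preceq_{\mathrm{cx}}$-comparable precisely when they are $\gtrless$-comparable (for $p\le\alpha$ the partial-quantile-integral gap is $p(Q_1-\widetilde{Q}_1)\ge 0$, and for $p>\alpha$ it equals $(1-p)(\widetilde{Q}_2-Q_2)\ge0$)---is sound, because the averaging property keeps every iterate on the slice and the atoms stay sorted since each iterate is a (left, right) AVaR pair. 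What your approach buys: the entrywise bounds $\text{AVaR}^{\text{left}}_{\alpha}(\mu_\pi^{(x,a)})\le Q_1^\pi(x,a)$ and $Q_2^\pi(x,a)\le\text{AVaR}^{\text{right}}_{1-\alpha}(\mu_\pi^{(x,a)})$ hold for \emph{every} policy and \emph{every} pair $(x,a)$, with $\alpha$-coherence invoked only at the very end to identify $Q_i^\pi(x,a)$ with $V_i^\pi(x)$ on supported actions, whereas the paper needs coherence from the outset since Theorem \ref{th:DRL_robust} is only available for $\pi\in\Pi_\alpha$. What the paper's proof buys is brevity: two applications of existing results, versus your convex-order machinery plus a carefully controlled monotone iteration.
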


\begin{proof}
From \cite{follmer2008convex}, the left AVaR at level $\alpha$ of the distribution
$\mu_\pi^{(x,a)}$ admits the dual expression:
\begin{equation}
  \label{eq:full_avar_dual}
  \text{AVaR}^{\text{left}}_{\alpha}(\mu_\pi^{(x,a)})
  = \inf_{\nu \ll \mu_\pi^{(x,a)} : \frac{d\nu}{d\mu_\pi^{(x,a)}} \le \frac{1}{\alpha}} \mathbb{E}_{Z \sim \nu}[Z] .
\end{equation}
On the other side, from Theorem \ref{th:DRL_robust}, $V_1^\pi(x)$ also writes as an infimum of expected values. However, the distributions $\nu$ need to satisfy more constraints than in Eq. (\ref{eq:full_avar_dual}), due to the geometry of the uncertainty set $\Upsilon_{\alpha}$.
Indeed, with evident notations,
\begin{equation*}
  V_1^\pi(x)
  = \inf_{\boldsymbol{P}\in\Upsilon_{\alpha}} \mathbb{E}_{Z \sim \mu_{\pi,\boldsymbol{P}}^{(\underline{x},a)}}[Z] ,
\end{equation*}
where the fixed point distributions in the augmented MDP verify:
\begin{equation*}
\forall \boldsymbol{P}\in\Upsilon_{\alpha} , \quad \alpha \mu_{\pi,\boldsymbol{P}}^{(\underline{x},a)}+(1-\alpha) \mu_{\pi,\boldsymbol{P}}^{(\overline{x},a)} = \mu_\pi^{(x,a)} .
\end{equation*}
Hence, $\mu_{\pi,\boldsymbol{P}}^{(\underline{x},a)}$ satisfies the constraints in Eq. (\ref{eq:full_avar_dual}):
\begin{equation*}
  \mu_{\pi,\boldsymbol{P}}^{(\underline{x},a)} \ll \mu_\pi^{(x,a)} \quad \text{ and } \quad \frac{d\mu_{\pi,\boldsymbol{P}}^{(\underline{x},a)}}{d\mu_\pi^{(x,a)}} \le \frac{1}{\alpha} ,
\end{equation*}
which implies $\text{AVaR}^{\text{left}}_{\alpha}(\mu_\pi^{(x,a)}) \le V_1^\pi(x)$.
The other half of the proof is similar, with $\inf$ replaced by $\sup$, and $\alpha$ by $1-\alpha$.
\end{proof}

We recall that the whole approach developed in this paper relies on the
successive applications of the distributional Bellman operator (DBO) $\pazocal{T}^\pi$ followed by the $W_2$-projection.
Similarly, one could derive $k$-step operators obtained by applying $k$ times
the DBO (instead of just once) before projecting.
Obviously, by taking $k$ arbitrarily large, one can make the gap
between the resulting ``$k$-step BAVaRs'' and the true AVaRs arbitrarily small.

\subsection{The BAVaR coherent risk measure}
\label{subsec:bavar}

We now expose our \emph{second interpretation} stating that
the negative left BAVaR and the right BAVaR are coherent risk measures.
This claim is good news: indeed,
a risk measure is termed coherent if it satisfies
a set of properties that are desirable
in a wide range of applications including financial ones \citep{artzner1999coherent}.

\begin{corollary}{\textsc{(Coherent risk measure)}.}
  \label{cor:bavar}
  Let $\alpha\in (0,1)$, $\pi\in\Pi_\alpha$ and $x\in\pazocal{X}$.
  \begin{enumerate}[(i)]
    \item The quantity $-(1-\gamma) V_1^\pi(x)$, seen as a function of the reward function $r\in\mathbb{R}^{\pazocal{X}\times\pazocal{A}\times\pazocal{X}}$, is a coherent risk measure.
    \item The quantity $(1-\gamma) V_2^\pi(x)$, seen as a function of the negated reward function
$-r\in\mathbb{R}^{\pazocal{X}\times\pazocal{A}\times\pazocal{X}}$, is a coherent risk measure.
  \end{enumerate}
\end{corollary}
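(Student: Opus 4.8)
The plan is to recast each quantity in the supremum (dual) form that characterizes coherent risk measures, and then invoke the easy direction of the representation theorem. Recall that a functional $\rho$ on random variables over the finite sample space $\Omega := \pazocal{X}\times\pazocal{A}\times\pazocal{X}$ is coherent (in the sense of \citealp{artzner1999coherent}: monotonicity, translation invariance, positive homogeneity, subadditivity) as soon as it admits a representation $\rho(Y) = \sup_{Q\in\mathcal{Q}} \mathbb{E}_{Q}[-Y]$ for \emph{some} family $\mathcal{Q}$ of probability measures on $\Omega$. Indeed, all four axioms then follow at once from linearity of expectation and elementary properties of the supremum: for example $\sup_Q \mathbb{E}_Q[-(Y+Y')] \le \sup_Q \mathbb{E}_Q[-Y] + \sup_Q \mathbb{E}_Q[-Y']$ gives subadditivity, while translation invariance $\rho(Y+c)=\rho(Y)-c$ uses exactly that each $Q$ has total mass one. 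It therefore suffices to exhibit such a representation for $-(1-\gamma)V_1^\pi(x)$, viewing $r$ as a random variable on $\Omega$.

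The starting point is Theorem \ref{th:DRL_robust}, which gives $V_1^\pi(x) = \inf_{\boldsymbol{P}\in\Upsilon_\alpha} V_{\boldsymbol{P}}^\pi(\underline{x})$. For any fixed kernel $\boldsymbol{P}\in\Upsilon_\alpha$ (and the fixed policy $\pi$), the value $V_{\boldsymbol{P}}^\pi(\underline{x})$ is a \emph{linear} functional of $r$: unrolling it as the discounted sum $\mathbb{E}[\sum_{t\ge 0}\gamma^t\, r(X_t,A_t,X_{t+1}) \mid S_0=\underline{x};\pi,\boldsymbol{P}]$ and using that rewards are shared across substates (Eq. (\ref{eq:extended_rpi})), one obtains $V_{\boldsymbol{P}}^\pi(\underline{x}) = \sum_{(y,a,y')\in\Omega} c_{\boldsymbol{P}}(y,a,y')\, r(y,a,y')$ with occupancy coefficients $c_{\boldsymbol{P}}(y,a,y') = \sum_{t\ge 0}\gamma^t\,\mathbb{P}(X_t=y,A_t=a,X_{t+1}=y' \mid S_0=\underline{x};\pi,\boldsymbol{P}) \ge 0$. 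Summing over $\Omega$ collapses the transition distribution at each step and yields $\sum_{(y,a,y')} c_{\boldsymbol{P}}(y,a,y') = \sum_{t\ge 0}\gamma^t = 1/(1-\gamma)$, so $\xi_{\boldsymbol{P}} := (1-\gamma)\, c_{\boldsymbol{P}}$ is a genuine probability measure on $\Omega$ and $(1-\gamma)\, V_{\boldsymbol{P}}^\pi(\underline{x}) = \mathbb{E}_{\xi_{\boldsymbol{P}}}[r]$.

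Combining the two facts, $(1-\gamma) V_1^\pi(x) = \inf_{\boldsymbol{P}\in\Upsilon_\alpha} \mathbb{E}_{\xi_{\boldsymbol{P}}}[r]$, and hence $-(1-\gamma) V_1^\pi(x) = \sup_{\boldsymbol{P}\in\Upsilon_\alpha} \mathbb{E}_{\xi_{\boldsymbol{P}}}[-r] = \sup_{\xi\in\Xi} \mathbb{E}_{\xi}[-r]$ with $\Xi := \{\xi_{\boldsymbol{P}} : \boldsymbol{P}\in\Upsilon_\alpha\}$ a set of probability measures on $\Omega$. This is precisely the dual representation, proving (i). For part (ii) the symmetric identity $V_2^\pi(x) = \sup_{\boldsymbol{P}\in\Upsilon_\alpha} V_{\boldsymbol{P}}^\pi(\overline{x})$ of Theorem \ref{th:DRL_robust} gives, by the same occupancy-measure argument, $(1-\gamma) V_2^\pi(x) = \sup_{\eta\in\Xi'} \mathbb{E}_{\eta}[r] = \sup_{\eta\in\Xi'} \mathbb{E}_{\eta}[-(-r)]$ for a family $\Xi'$ of probability measures; this is the same dual form applied to the random variable $-r$, so $(1-\gamma) V_2^\pi(x)$ is coherent as a function of $-r$.

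I expect the only genuinely delicate point to be the linearity/normalization step: one must verify that the reward-sharing across substates lets the augmented-MDP occupancy coefficients aggregate to a measure on the \emph{original} triples $\Omega$, and that these coefficients sum to $1/(1-\gamma)$ uniformly in $\boldsymbol{P}$ — this uniform normalization is exactly what makes the factor $(1-\gamma)$ produce translation invariance. Both facts are standard consequences of unrolling the Bellman recursion, and crucially no convexity or closedness of $\Xi$ (or $\Xi'$) is required, since only the easy direction of the dual representation — supremum form implies coherence — is used.
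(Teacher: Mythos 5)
Your proposal is correct and takes essentially the same route as the paper: both deduce coherence from Theorem \ref{th:DRL_robust}, which writes $-(1-\gamma)V_1^\pi(x)$ and $(1-\gamma)V_2^\pi(x)$ as extrema of augmented-MDP value functions that are linear in the reward function. Where the paper simply asserts that the four axioms ``easily follow'' from this supremum-of-value-functions representation, your normalized-occupancy-measure argument --- the discounted occupancy coefficients are nonnegative and sum to $1/(1-\gamma)$ uniformly over $\boldsymbol{P}\in\Upsilon_{\alpha}$, so the extremum is of expectations under genuine probability measures on $\pazocal{X}\times\pazocal{A}\times\pazocal{X}$ --- is precisely the detail that substantiates that assertion, in particular translation invariance and the role of the $(1-\gamma)$ factor.
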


\begin{proof}
\noindent (ii) Right BAVaR.
Let us prove that $(1-\gamma)V_2^\pi(x)$ is a coherent risk measure.
Here, $V_2^\pi(x)$ is seen a function of the opposite reward function $-r$.
For clarity, we denote it by $V_2^\pi(x;-r)$ for a given reward function $r\in\mathbb{R}^{\pazocal{X}\times\pazocal{A}\times\pazocal{X}}$.
We need to prove each of the following four properties (see \citealp{artzner1999coherent}).
\begin{enumerate}[(a)]
  \item Translation invariance:
  \begin{equation*}
  \forall \beta \in \mathbb{R} , \quad (1-\gamma) V_2^\pi(x;-r + \beta \mathbf{1}) = (1-\gamma) V_2^\pi(x;-r) - \beta .
  \end{equation*}
  \item Sub-additivity: for all reward functions $r_1,r_2$,
  \begin{equation*}
  (1-\gamma) V_2^\pi(x;-r_1-r_2) \le (1-\gamma) V_2^\pi(x;-r_1) + (1-\gamma) V_2^\pi(x;-r_2) .
\end{equation*}
  \item Positive homogeneity:
  \begin{equation*}
    \forall \beta \ge 0 , \quad (1-\gamma) V_2^\pi(x;-\beta r) = \beta (1-\gamma) V_2^\pi(x;-r) .
  \end{equation*}
  \item Monotonicity:
  \begin{equation*}
  \text{if }\  -r_1 \le -r_2, \  \text{ then } \  (1-\gamma) V_2^\pi(x;-r_1) \ge (1-\gamma) V_2^\pi(x;-r_2) .
\end{equation*}
\end{enumerate}
All four properties easily follow from Theorem \ref{th:DRL_robust}, by writing $V_2^\pi(x)$
as a supremum of value functions.

\noindent (i) Left BAVaR. The proof is similar to (ii), except that we parametrize
$V_1^\pi(x)$ by the reward function, namely $V_1^\pi(x;r)$.
\end{proof}

This result bridges the gap between our distributional point of view in section \ref{sec:avar}
and the concept of coherent risk measure.
The fixed point object $\mathcal{Q}^\pi=(Q_1^\pi,Q_2^\pi)$
now appears as an appealing objective for risk-aware purpose in MDPs.
Equipped with Theorem \ref{th:DRL_robust}, we follow in the next section the robust MDP paradigm: maximize
the worst-case value function $V^\pi_1$ to obtain a safe policy.

\section{Safe or risky control}
\label{sec:safe_risky}
We now turn to describing a dynamic programming framework for risk-aware control problems derived from our
distributional MDP perspective introduced in the previous sections.
By leveraging the robustness insights developed earlier, it makes
sense to either
\begin{itemize}
  \item look for a \emph{safe policy} that maximizes $V_1^\pi$ and minimizes $V_2^\pi$,
  \item or rather for a \emph{risky policy} that minimizes $V_1^\pi$ and maximizes $V_2^\pi$.
\end{itemize}
While there is a degree of symmetry to these tasks, it is easy to see that there are fundamental differences
between them: while risky control aims to solve a relatively straightforward maximization problem, the safe control
objective has a more intricate ``max-min'' nature. In what follows, we study both objectives under a specific
simplifying assumption on the underlying MDP that allows an effective and (relatively) symmetric treatment of both
cases.

\subsection{Breaking the optimality ties}

We provide the dynamic programming toolbox for solving these two control tasks
for a very special type of MDP, that we call \emph{balanced MDP}, where the \emph{expected} returns of all policies are
equal.

\begin{assumption}{\textsc{(Balanced MDP)}.}
  \label{ass:democratic}
A Markov decision process is said ``balanced'' if $\pazocal{A}^*(x)=\pazocal{A}$ for every state $x\in\pazocal{X}$.
In other words, all policies are optimal in terms of their expected return:
\begin{equation*}
\forall \pi\in\Pi \, , \quad Q^\pi = Q^*  
\quad \text{ or equivalently } \quad \forall (x,a)\in\pazocal{X}\times\pazocal{A} \, , \quad Q^*(x,a)=V^*(x) \,.
\end{equation*}
\end{assumption}

Under such assumption, there is a clear \emph{trade-off} between $Q_1^\pi$ and $Q_2^\pi$. Indeed from Proposition \ref{prop:properties_eval}-(vi), their average has to remain constant:
\begin{equation*}
  \forall \pi\in\Pi \, , \quad \alpha Q_1^\pi + (1-\alpha) Q_2^\pi = Q^* \,.
\end{equation*}
Hence, maximizing one of these two quantities necessarily means minimizing the other one.
Of course, any MDP can be reduced to a balanced MDP by 1) first, identifying the set of
optimal actions $\pazocal{A}^*(x)$ in each state $x$ (classic control problem) and 2) then, filtering the action space to only allow optimal actions.

\begin{example}
  \label{ex:democratic}
  The MDP in Figure \ref{fig:ex_mdp} combined with the discout factor $\gamma=0.5$ is a balanced MDP.
  In section \ref{sec:experiments}, we run experiments based on this MDP.
\end{example}

\subsection{Safe and risky operators}

In balanced MDPs, we necessarily have
\begin{equation*}
Q_2^\pi=\frac{Q^*-\alpha Q_1^\pi}{1-\alpha} \quad , \quad \text{ for any policy } \pi \,.
\end{equation*}
In other words, $Q_2^\pi$ is completely characterized by $Q_1^\pi$ (and vice-versa): hence, we can restrict our attention to $Q_1^\pi$ to define our risk-aware operators more concisely.

\begin{definition}{\textsc{(Safe \& risky Bellman operators)}.}
  \label{def:safe_risky_ops}
  Consider a balanced MDP and let $\alpha\in (0,1)$.
  \begin{enumerate}[(i)]
  \item The safe Bellman operator $\pazocal{T}^{\text{safe}}_{\alpha}:\mathbb{R}^{\pazocal{X}\times\pazocal{A}} \rightarrow \mathbb{R}^{\pazocal{X}\times\pazocal{A}}$ is defined for all $Q_1:\pazocal{X}\times\pazocal{A}\rightarrow \mathbb{R}$ by: $\pazocal{T}^{\text{safe}}_{\alpha} Q_1 = Q_1'$ such that for any $x,a$,
  \begin{multline*}
  Q_1'(x,a) = \text{AVaR}^{\text{left}}_{\alpha}\Biggl(
  \sum_{x'} P(x'|x,a) \Biggl( \alpha \delta_{r(x,a,x')+\gamma \max_{a'} Q_1(x',a') }
  + (1-\alpha) \delta_{r(x,a,x')+\gamma \min_{a'} Q_2(x',a') } \Biggr)
  \Biggr) \, ,
  \end{multline*}
  where $Q_2(x',a') = \frac{V^*(x')-\alpha Q_1(x',a')}{1-\alpha}$.
  \item The risky Bellman operator $\pazocal{T}^{\text{risky}}_\alpha:\mathbb{R}^{\pazocal{X}\times\pazocal{A}} \rightarrow \mathbb{R}^{\pazocal{X}\times\pazocal{A}}$ is defined for all $Q_1:\pazocal{X}\times\pazocal{A}\rightarrow \mathbb{R}$ by: $\pazocal{T}^{\text{risky}}_{\alpha} Q_1 = Q_1'$ such that for any $x,a$,
  \begin{multline*}
    Q_1'(x,a) = \text{AVaR}^{\text{left}}_{\alpha}\Biggl(
    \sum_{x'} P(x'|x,a) \Biggl( \alpha \delta_{r(x,a,x')+\gamma \min_{a'} Q_1(x',a') }
    + (1-\alpha) \delta_{r(x,a,x')+\gamma \max_{a'} Q_2(x',a') } \Biggr)
    \Biggr) \, ,
  \end{multline*}
  where $Q_2(x',a') = \frac{V^*(x')-\alpha Q_1(x',a')}{1-\alpha}$.
\end{enumerate}
\end{definition}

As for the policy evaluation operator, these two control operators
can be easily implemented through a sorting step:
see Algorithm \ref{alg:safe_risky_svi}.
Moreover, they satisfy the properties listed below.

\begin{algorithm}
\caption{\textsc{Safe/Risky Sorted Value Iteration (Safe/Risky SVI)}, single iteration.}\label{alg:safe_risky_svi}

\begin{algorithmic}[1]
\algrenewcommand\algorithmicrequire{\textbf{Parameters:}}
\Require $\text{mode}\in\{\text{safe},\text{risky}\}$, optimal value function $V^*$, number of particles $M = 2|\pazocal{X}|$, level $\alpha\in(0,1)$, $(\alpha_1,\alpha_2)=(\alpha,1-\alpha)$
\algrenewcommand\algorithmicrequire{\textbf{Input:}}
\Require Q-function $Q_1$

\If{$\text{mode}=\text{safe}$}

  \For{each state $x\in\pazocal{X}$}
  \State first value function: $V_1(x) \leftarrow \max_{a} Q_1(x,a)$
  \State second value function: $V_2(x) \leftarrow \min_{a} \frac{V^*(x)-\alpha Q_1(x,a)}{1-\alpha}$
  \EndFor

\ElsIf{$\text{mode}=\text{risky}$}

  \For{each state $x\in\pazocal{X}$}
  \State first value function: $V_1(x) \leftarrow \min_{a} Q_1(x,a)$
  \State second value function: $V_2(x) \leftarrow \max_{a} \frac{V^*(x)-\alpha Q_1(x,a)}{1-\alpha}$
  \EndFor

\EndIf

\For{each state-action pair $(x,a)\in\pazocal{X}\times\pazocal{A}$}

  \State probability-particle pairs: \begin{equation*}(p_j, v_j)_{j=1}^M \leftarrow (\alpha_i P(x'|x,a), r(x,a,x')+\gamma V_i(x'))_{(x',i)\in\pazocal{X}\times\{1,2\}} \end{equation*}
  \State particle sorting: $v_{\sigma(1)}\le\dots\le v_{\sigma(M)}$ with $\sigma$ an ``argsort'' permutation
  \State reordering: $(p_j, v_j) \leftarrow (p_{\sigma(j)}, v_{\sigma(j)})$ for $j=1\dots M$
  \State left AVaR: $Q_1'(x,a) \leftarrow \frac{1}{\alpha} \sum_{j=1}^M \max\left( 0\,,\,\min\left( p_j \,,\,\alpha - \sum_{j'\le j-1} p_{j'} \right) \right) \cdot v_j$

\EndFor
\Ensure next Q-function $\pazocal{T}^{\text{mode}}_{\alpha} Q_1=Q_1'$
\end{algorithmic}

\end{algorithm}

\begin{proposition}{\textsc{(Properties of $\pazocal{T}^{\text{safe}}_{\alpha}$ and $\pazocal{T}^{\text{risky}}_{\alpha}$)}.}
  \label{prop:properties_risk}
  Assume a balanced MDP, let $\alpha\in (0,1)$, $\text{mode} \in \{\text{safe}, \text{risky} \}$. The following
properties hold.
\begin{enumerate}[(i)]
  \item Concavity of the risky operator: if $Q_1\le Q^*$ and $\widetilde{Q}_1\le Q^*$, then for any $0\le \lambda\le 1$,
  \begin{equation*}
  \pazocal{T}^{\text{risky}}_{\alpha} (\lambda Q_1 + (1-\lambda) \widetilde{Q}_1 )
  \ge \lambda \pazocal{T}^{\text{risky}}_{\alpha} Q_1 + (1-\lambda) \pazocal{T}^{\text{risky}}_{\alpha} \widetilde{Q}_1 .
\end{equation*}
  \item $\gamma$-Contraction in sup norm: $|| \pazocal{T}^{\text{mode}}_{\alpha} Q_1 - \pazocal{T}^{\text{mode}}_{\alpha} \widetilde{Q}_1 ||_\infty \le \gamma || Q_1 - \widetilde{Q}_1 ||_\infty$.
  \item Fixed point: there exists a unique fixed point $Q_1^{\text{mode}} = \pazocal{T}^{\text{mode}}_{\alpha} Q_1^{\text{mode}}$.
  \item Safe optimality: for all $(x,a)\in\pazocal{X}\times\pazocal{A}$\,,
  \begin{equation*}
    Q_1^{\text{safe}}(x,a) = \sup_{\pi} Q_1^\pi(x,a) \quad \text{ or equivalently } \quad
    Q_2^{\text{safe}}(x,a) = \inf_{\pi} Q_2^\pi(x,a) \, ,
  \end{equation*}
  where $Q_2^{\text{safe}}(x,a) := (V^*(x)-\alpha Q_1^{\text{safe}}(x,a))/(1-\alpha)$.
\item Risky optimality: for all $(x,a)\in\pazocal{X}\times\pazocal{A}$\,,
\begin{equation*}
  Q_1^{\text{risky}}(x,a) = \inf_{\pi} Q_1^\pi(x,a) \quad \text{ or equivalently } \quad
  Q_2^{\text{risky}}(x,a) = \sup_{\pi} Q_2^\pi(x,a) \, ,
\end{equation*}
where $Q_2^{\text{risky}}(x,a) := (V^*(x)-\alpha Q_1^{\text{risky}}(x,a))/(1-\alpha)$.
\end{enumerate}
\end{proposition}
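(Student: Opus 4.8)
The plan is to establish the five claims in the order (ii)–(iii), then (i), then the optimality pair (iv)–(v), which carries the real weight. Throughout I would exploit the balanced-MDP identity $Q_2(x',a') = \frac{V^*(x') - \alpha Q_1(x',a')}{1-\alpha}$, which turns both control operators into maps of the single argument $Q_1$, together with the resulting relation $V_2 = \frac{V^* - \alpha V_1}{1-\alpha}$ (with $V_1=\max_{a'}Q_1$ for the safe mode and $V_1=\min_{a'}Q_1$ for the risky mode). The key consequence is that a per-state perturbation of the high atoms equals $-\tfrac{\alpha}{1-\alpha}$ times that of the low atoms.

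For the contraction (ii) I would follow the spirit of Proposition \ref{prop:properties_eval}-(iv) but argue directly through the dual representation of the left AVaR (Lemma \ref{lem:avar}-(ii)). Writing $\delta=\|Q_1-\widetilde{Q}_1\|_\infty$ and picking $\widetilde{\lambda}^\star$ optimal for the mixture attached to $\widetilde{Q}_1$, one gets $\alpha\,(\pazocal{T}^{\text{mode}}_{\alpha}Q_1-\pazocal{T}^{\text{mode}}_{\alpha}\widetilde{Q}_1)(x,a)\le\langle\widetilde{\lambda}^\star,v-\widetilde{v}\rangle$. Substituting $V_2-\widetilde{V}_2=-\tfrac{\alpha}{1-\alpha}(V_1-\widetilde{V}_1)$ collapses the right-hand side to $\gamma\sum_{x'}c(x')\,(V_1(x')-\widetilde{V}_1(x'))$ with $c(x')=\widetilde{\lambda}_1^\star(x')-\tfrac{\alpha}{1-\alpha}\widetilde{\lambda}_2^\star(x')$. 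The box constraints force $|c(x')|\le\alpha P(x'|x,a)$, hence $\sum_{x'}|c(x')|\le\alpha$, and combined with $|V_1-\widetilde{V}_1|\le\delta$ (non-expansiveness of $\max$/$\min$) this gives the bound $\gamma\delta$. Statement (iii) is then immediate from Banach's theorem on the complete space $\mathbb{R}^{\pazocal{X}\times\pazocal{A}}$. For concavity (i) I would express $\pazocal{T}^{\text{risky}}_{\alpha}Q_1(x,a)=\tfrac1\alpha\inf_\lambda[\cdots]$ via Lemma \ref{lem:avar}-(ii), substitute $\max_{a'}Q_2=\frac{V^*-\alpha\min_{a'}Q_1}{1-\alpha}$, and collect the terms in $L(x'):=\min_{a'}Q_1(x',a')$; their coefficient is exactly $\lambda_1(x')-\tfrac{\alpha}{1-\alpha}\lambda_2(x')$. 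Because $Q_1\le Q^*$ places each low atom below its matching high atom, any optimal dual fills low atoms before high ones, so $\lambda_1^\star(x')\ge\tfrac{\alpha}{1-\alpha}\lambda_2^\star(x')$ and the infimum may be restricted to this $Q_1$-independent subset. On it, rewriting the concave map $L(x')=\inf_{q\in\mathcal{P}(\pazocal{A})}\sum_{a'}q(a')Q_1(x',a')$ and pulling it out through the nonnegative coefficient exhibits $\pazocal{T}^{\text{risky}}_{\alpha}Q_1(x,a)$ as an infimum of affine functions of $Q_1$, hence concave on $\{Q_1\le Q^*\}$.

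The substantial part, and the main obstacle, is the optimality pair (iv)–(v). I would first note that $\pazocal{T}^{\text{safe}}_{\alpha}$ is precisely the first coordinate of the diatomic evaluation operator $\pazocal{T}^{\pi}_{\alpha}$ for the deterministic policy $\pi$ that is greedy in $Q_1$ (both atoms at a state use the same greedy action, since $\min_{a'}Q_2$ is attained at $\argmax_{a'}Q_1$). Comparison with an arbitrary policy then reduces to the two-point inequality $\alpha(t-A^*)^+ + (1-\alpha)(t-B_*)^+ \le \alpha(t-A(x',a'))^+ + (1-\alpha)(t-B(x',a'))^+$, valid because $(A(x',a'),B(x',a'))$ is a mean-preserving spread of the greedy pair $(A^*,B_*)$: both have $\alpha$-weighted mean $r+\gamma V^*(x')$, and $A(x',a')\le A^*\le B_*\le B(x',a')$ whenever $Q_1\le V^*$. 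Summed against $\pi$ and $P$, this shows every lower partial expectation of the general mixture dominates that of the greedy mixture, whence $[\pazocal{T}^\pi_{\alpha}(Q_1,Q_2)]_1\le\pazocal{T}^{\text{safe}}_{\alpha}Q_1$; the same spread comparison proves $\pazocal{T}^{\text{safe}}_{\alpha}$ is monotone on the invariant set $\{Q_1\le V^*\}$ (invariant since $\text{AVaR}^{\text{left}}_{\alpha}\le\mathbb{E}=Q^*=V^*$). Iterating these two facts from $Q_1^\pi$ yields $Q_1^\pi\le Q_1^{\text{safe}}$ for every $\pi$, while the deterministic greedy policy of $Q_1^{\text{safe}}$ is $\alpha$-coherent and, by uniqueness of the evaluation fixed point (Proposition \ref{prop:properties_eval}-(v)), has $Q_1^{\pi^{\text{safe}}}=Q_1^{\text{safe}}$; together these give $Q_1^{\text{safe}}=\sup_\pi Q_1^\pi$. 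The equivalent form for $Q_2^{\text{safe}}$ follows from the decreasing affine link $Q_2=\frac{V^*-\alpha Q_1}{1-\alpha}$, and (v) is obtained verbatim after reversing every inequality (the risky operator maximally spreads the mixture, minimizing the left AVaR and selecting the anti-greedy policy). The delicate point throughout is that neither control operator is comparable atom by atom — raising $Q_1$ lowers the high atoms — so every monotonicity and domination step must be routed through the mean-preserving-spread comparison rather than through naive pointwise bounds.
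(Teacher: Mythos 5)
Your proposal is correct, and for parts (i)--(iii) it is essentially the paper's proof: both arguments run through the dual representation of the left AVaR (Lemma \ref{lem:avar}-(ii)), use the box constraints to bound $\left|\lambda_1(x')-\tfrac{\alpha}{1-\alpha}\lambda_2(x')\right|\le \alpha P(x'|x,a)$ for the contraction, restrict the infimum to the $Q_1$-independent set $\{\lambda_1\ge\tfrac{\alpha}{1-\alpha}\lambda_2\}$ for concavity, and invoke Banach for (iii). For the optimality pair (iv)--(v) your route differs in two respects. First, where the paper manipulates the dual LP directly --- enlarging the feasible set from per-state weights $\lambda_i(x')$ to per-state-action weights $\widetilde{\lambda}_i(x',a')$ to obtain $\pazocal{T}^{\text{safe}}_{\alpha} Q_1^\pi\ge Q_1^\pi$, and plugging in the risk-neutral weights $\lambda_1^{\circ}=\alpha^2 P$, $\lambda_2^{\circ}=\alpha(1-\alpha)P$ for the upper bound by $V^*$ --- you route the same two facts through convex order: each atom pair produced by an arbitrary policy is a mean-preserving spread of the matched greedy pair (same next state $x'$, same $\alpha$-weighted mean $r+\gamma V^*(x')$), so lower partial expectations dominate termwise and the left AVaR decreases. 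This is a valid and arguably more conceptual alternative; the one point requiring the care you in fact take is that the comparison must stay pairwise within each $x'$, since atoms attached to different next states are not ordered (the rewards $r(x,a,x')$ differ). The surrounding scaffolding (monotone improvement on the invariant set $\{Q_1\le V^*\}$, passage to the limit via contraction) is structurally identical to the paper's induction on $k$. Second --- and this is a genuine gain --- you prove the attainment direction: the paper's written proof only establishes $Q_1^{\text{safe}}\ge\sup_\pi Q_1^\pi$, whereas you close the equality by observing that $\pazocal{T}^{\text{safe}}_{\alpha}$ is the first coordinate of the diatomic evaluation operator of the $Q_1$-greedy deterministic policy (legitimate because the argmax of $Q_1$ and the argmin of $Q_2$ coincide under the affine link), so that $(Q_1^{\text{safe}},Q_2^{\text{safe}})$ is that policy's evaluation fixed point by Proposition \ref{prop:properties_eval}-(v); the paper in effect defers this step to Corollary \ref{cor:risky_pol} without proof. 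Your argument is therefore complete on a point where the paper is silent; only the routine verification that the second coordinate also matches (via the averaging property, since the greedy mixture has mean $V^*(x)$) is left implicit.
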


\begin{proof}
\noindent (i) Concavity. The assumption $Q_1\le Q^*$ is equivalent to $Q_1\le Q_2$ with $Q_2(x,a) = \frac{V^*(x)-\alpha Q_1(x,a)}{1-\alpha}$.
Then, given $Q_1'=\pazocal{T}^{\text{risky}}_{\alpha} Q_1$ and using the dual representation of the left AVaR from Lemma \ref{lem:avar}-(ii),
\begin{equation*}
Q_1'(x,a) =
\frac{1}{\alpha} \inf_{\lambda_1,\lambda_2} \sum_{x'} (\lambda_1(x')+\lambda_2(x')) r(x, a, x') + \gamma (\lambda_1(x')-\frac{\alpha}{1-\alpha}\lambda_2(x')) \min_{a'} Q_1(x',a') + \gamma \frac{\lambda_2(x')}{1-\alpha} V^*(x') ,
\end{equation*}
where the infimum is necessarily attained for $\lambda_1 \ge \frac{\alpha}{1-\alpha}\lambda_2$.
We deduce that $Q_1'(x,a)$ is a concave piecewise linear function for $Q_1\le Q^*$.

\noindent (ii) Contraction.
Let us consider the safe case: $\text{mode}=\text{safe}$.
Given a pair $(x,a)$, by the dual representation of the left AVaR in Lemma \ref{lem:avar}-(ii),
\begin{multline*}
( \pazocal{T}^{\text{safe}}_{\alpha} Q_1 )(x,a) = \\
\frac{1}{\alpha} \inf_{\lambda_1,\lambda_2} \sum_{x'} \lambda_1(x') (r(x, a, x') + \gamma \max_{a'} Q_1(x',a')) + \lambda_2(x') (r(x, a, x') + \gamma \min_{a'} \frac{V^*(x')-\alpha Q_1(x',a')}{1-\alpha} ) \\
= \frac{1}{\alpha} \inf_{\lambda_1,\lambda_2} \sum_{x'} (\lambda_1(x')+\lambda_2(x')) r(x, a, x') + \gamma (\lambda_1(x')-\frac{\alpha}{1-\alpha}\lambda_2(x')) \max_{a'} Q_1(x',a') + \gamma \frac{\lambda_2(x')}{1-\alpha} V^*(x') ,
\end{multline*}
where the infimum ranges over functions $(\lambda_1,\lambda_2):\pazocal{X}\rightarrow [0,1]^2$ such that for all $x'$,
\begin{itemize}
  \item $0\le \lambda_1(x') \le \alpha P(x'|x,a)$ ,
  \item $0\le \lambda_2(x') \le (1-\alpha) P(x'|x,a)$ ,
  \item $\sum_{x'} \lambda_1(x')+\lambda_2(x')=\alpha$ .
\end{itemize}
Hence, by successive applications of the triangular inequality,
\begin{equation*}
  \left| ( \pazocal{T}^{\text{safe}}_{\alpha} Q_1 )(x,a) - ( \pazocal{T}^{\text{safe}}_{\alpha} \widetilde{Q}_1 )(x,a) \right|
  \le \frac{\gamma ||Q_1-\widetilde{Q}_1||_\infty}{\alpha} \sup_{\lambda_1,\lambda_2} \sum_{x'} \underbrace{\left|\lambda_1(x')-\frac{\alpha}{1-\alpha} \lambda_2(x') \right|}_{\le \alpha P(x'|x,a)}
  \le \gamma ||Q_1-\widetilde{Q}_1||_\infty .
\end{equation*}
The risky case is analogous.

\noindent (iii) Fixed point.
Consequence of (i) combined with Banach's fixed point theorem and the completeness of $\mathbb{R}^{|\pazocal{X}||\pazocal{A}|}$.

\noindent (iv) Safe optimality.
Fix a policy $\pi$.
Let us prove by induction that for any $k\in\mathbb{N}^*$: for all $(x,a)\in\pazocal{X}\times\pazocal{A}$,
\begin{enumerate}[(a)]
  \item relative order:
  \begin{equation*}
  ( ( \pazocal{T}^{\text{safe}}_{\alpha} )^k Q_1^\pi )(x,a) \le V^*(x) \, ,
\end{equation*}
  \item monotonicity:
  \begin{equation*}
  ( ( \pazocal{T}^{\text{safe}}_{\alpha} )^k Q_1^\pi )(x,a) \ge Q_1^\pi(x,a) \, .
\end{equation*}
\end{enumerate}
Taking the limit $k\to+\infty$ will then conclude the proof.

\noindent \emph{Base case} $k=1$.

\noindent (b) Monotonicity.
As $Q_1^\pi \le Q_2^\pi = (Q^*-\alpha Q_1^\pi)/(1-\alpha)$, then the infimum below
is necessarily attained for $\lambda_1 \ge \frac{\alpha}{1-\alpha}\lambda_2$:
\begin{multline}
  \label{eq:proof_safe_case}
( \pazocal{T}^{\text{safe}}_{\alpha} Q_1^\pi )(x,a) = \\
\frac{1}{\alpha} \inf_{\lambda_1,\lambda_2} \sum_{x'} (\lambda_1(x')+\lambda_2(x')) r(x, a, x') + \gamma (\lambda_1(x')-\frac{\alpha}{1-\alpha}\lambda_2(x')) \max_{a'} Q_1^\pi(x',a') + \gamma \frac{\lambda_2(x')}{1-\alpha} V^*(x') \\
\ge \frac{1}{\alpha} \inf_{\widetilde{\lambda}_1,\widetilde{\lambda}_2} \sum_{x',a'} (\widetilde{\lambda}_1(x',a')+\widetilde{\lambda}_2(x',a')) r(x, a, x') + \gamma (\widetilde{\lambda}_1(x',a')-\frac{\alpha}{1-\alpha}\widetilde{\lambda}_2(x',a')) Q_1^\pi(x',a') + \gamma \frac{\widetilde{\lambda}_2(x',a')}{1-\alpha} V^*(x') \\
= \text{AVaR}^{\text{left}}_{\alpha}((\pazocal{T}^\pi D_{\alpha,\mathcal{Q}^\pi})^{(x,a)} )
= Q_1^\pi(x,a) ,
\end{multline}
where the second infimum ranges over functions $\widetilde{\lambda}_1 \ge \frac{\alpha}{1-\alpha}\widetilde{\lambda}_2$ such that for all $x',a'$,
\begin{itemize}
  \item $0\le \widetilde{\lambda}_1(x',a') \le \alpha P(x'|x,a) \pi(a'|x')$ ,
  \item $0\le \widetilde{\lambda}_2(x',a') \le (1-\alpha) P(x'|x,a) \pi(a'|x')$ ,
  \item $\sum_{x',a'} \widetilde{\lambda}_1(x',a')+\widetilde{\lambda}_2(x',a')=\alpha$ .
\end{itemize}

\noindent (a) Relative order.
From the first infimum in Eq. (\ref{eq:proof_safe_case}) for $\lambda_1 \ge \frac{\alpha}{1-\alpha}\lambda_2$, and using that $Q_1^\pi \le Q^* \equiv V^*$,
\begin{multline*}
  ( \pazocal{T}^{\text{safe}}_{\alpha} Q_1^\pi )(x,a)
  \le \frac{1}{\alpha} \inf_{\lambda_1,\lambda_2} \sum_{x'} (\lambda_1(x')+\lambda_2(x')) r(x, a, x') + \gamma (\lambda_1(x')-\frac{\alpha}{1-\alpha}\lambda_2(x')) V^*(x') + \gamma \frac{\lambda_2(x')}{1-\alpha} V^*(x') \\
  = \frac{1}{\alpha} \inf_{\lambda_1,\lambda_2} \sum_{x'} (\lambda_1(x')+\lambda_2(x')) ( r(x, a, x') + \gamma V^*(x') ) \\
  \le \sum_{x'} P(x'|x,a) ( r(x, a, x') + \gamma V^*(x') ) = Q^*(x,a) = V^*(x) \, ,
\end{multline*}
where the last inequality is obtained by choosing the ``risk-neutral'' weight functions $(\lambda_1,\lambda_2)=(\lambda_1^{\circ},\lambda_2^{\circ})$ defined for all $x'$ by
\begin{equation*}
  \begin{cases}
    & \lambda_1^{\circ}(x') = \alpha^2 P(x'|x,a) \\
    & \lambda_2^{\circ}(x') = \alpha (1-\alpha) P(x'|x,a) \, .
  \end{cases}
\end{equation*}

\noindent \emph{Induction step}: assume that the induction hypothesis is true for some $k\ge 1$.

\noindent (a) Relative order. Let us prove that the inequality holds for $k+1$:
\begin{multline*}
  ( ( \pazocal{T}^{\text{safe}}_{\alpha} )^{k+1} Q_1^\pi )(x,a) = \\
  \frac{1}{\alpha} \inf_{\lambda_1,\lambda_2} \sum_{x'} (\lambda_1(x')+\lambda_2(x')) r(x, a, x') + \gamma (\lambda_1(x')-\frac{\alpha}{1-\alpha}\lambda_2(x')) \underbrace{ \max_{a'} ( ( \pazocal{T}^{\text{safe}}_{\alpha} )^k Q_1^\pi )(x',a') }_{ \le V^*(x') }
  + \gamma \frac{\lambda_2(x')}{1-\alpha} V^*(x') \\
  \le \frac{1}{\alpha} \inf_{\lambda_1,\lambda_2} \sum_{x'} (\lambda_1(x')+\lambda_2(x')) ( r(x, a, x') + \gamma V^*(x') ) \\
  \le \sum_{x'} P(x'|x,a) ( r(x, a, x') + \gamma V^*(x') ) = Q^*(x,a) = V^*(x) \, .
\end{multline*}

\noindent (b) Monotonicity. We have,
\begin{multline*}
  ( ( \pazocal{T}^{\text{safe}}_{\alpha} )^{k+1} Q_1^\pi )(x,a) = \\
  \frac{1}{\alpha} \inf_{\lambda_1,\lambda_2} \sum_{x'} (\lambda_1(x')+\lambda_2(x')) r(x, a, x') + \gamma (\lambda_1(x')-\frac{\alpha}{1-\alpha}\lambda_2(x')) \max_{a'} \underbrace{ ( ( \pazocal{T}^{\text{safe}}_{\alpha} )^k Q_1^\pi )(x',a') }_{ \ge Q_1^\pi(x',a') }
  + \gamma \frac{\lambda_2(x')}{1-\alpha} V^*(x') \\
  \ge ( \pazocal{T}^{\text{safe}}_{\alpha} Q_1^\pi )(x,a) \ge Q_1^\pi(x,a) \, .
\end{multline*}

\noindent (v) Risky optimality.
The proof is similar to the safe case (iii).
\end{proof}

Basically, Proposition \ref{prop:properties_risk} says that the safe and risky Bellman operators
enjoy properties that are similar to the ones verified by the classical Bellman optimality operators.
In particular, their fixed points $Q_1^{\text{safe}}$ and $Q_1^{\text{risky}}$
allow to identify the safest and riskiest actions and policies.

\subsection{Safest and riskiest policies}

From Proposition \ref{prop:properties_risk}, it is natural to define
the set of the safest/riskiest policies as follows:
\begin{multline*}
  \Pi^{\text{safe}}_\alpha := \{ \pi \in\Pi : \mathcal{Q}^\pi = (Q_1^{\text{safe}}, Q_2^{\text{safe}}) \}
\quad \text{ and } \quad  \Pi^{\text{risky}}_{\alpha} := \{ \pi \in\Pi : \mathcal{Q}^\pi = (Q_1^{\text{risky}}, Q_2^{\text{risky}}) \} \,.
\end{multline*}
The following corollary claims that these sets are non-empty and simply characterized by the fixed points:
this is an immediate consequence of Proposition \ref{prop:properties_risk}-(iii).

\begin{corollary}{\textsc{(Safest \& riskiest actions)}.}
  \label{cor:risky_pol}
  Consider a balanced MDP and $\alpha\in (0,1)$.
\begin{enumerate}[(i)]
  \item Safest policies: $\pi\in\Pi^{\text{safe}}_\alpha$ if and only if in each state $x\in\pazocal{X}$,
  \begin{equation*}
    \text{Support}(\pi(\cdot|x))\subseteq \pazocal{A}^{\text{safe}}_\alpha(x)
    := \argmax_{a} Q^{\text{safe}}_1(x,a)
    = \argmin_{a} Q^{\text{safe}}_2(x,a) .
  \end{equation*}
  \item Riskiest policies: $\pi\in\Pi^{\text{risky}}_{\alpha}$ if and only if in each state $x\in\pazocal{X}$,
  \begin{equation*}
    \text{Support}(\pi(\cdot|x))\subseteq \pazocal{A}^{\text{risky}}_\alpha(x)
    := \argmin_{a} Q^{\text{risky}}_1(x,a)
    = \argmax_{a} Q^{\text{risky}}_2(x,a) .
  \end{equation*}
\end{enumerate}
\end{corollary}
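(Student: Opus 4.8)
The plan is to prove the safe case in detail; the riskiest case follows by the symmetric argument, swapping $\max\leftrightarrow\min$ and $\inf\leftrightarrow\sup$. Write $V_1^{\text{safe}}(x) = \max_a Q^{\text{safe}}_1(x,a)$ and $\mathcal{Q}^{\text{safe}} := (Q^{\text{safe}}_1, Q^{\text{safe}}_2)$, and recall the balanced-MDP relation $Q^{\text{safe}}_2(x,a) = (V^*(x) - \alpha Q^{\text{safe}}_1(x,a))/(1-\alpha)$, which is affine and strictly decreasing in $Q^{\text{safe}}_1(x,a)$. This immediately yields the stated identity $\argmax_a Q^{\text{safe}}_1(x,a) = \argmin_a Q^{\text{safe}}_2(x,a)$, and it shows that $\mathcal{Q}^\pi = \mathcal{Q}^{\text{safe}}$ is equivalent to the single scalar condition $Q^\pi_1 = Q^{\text{safe}}_1$, since the second component is slaved to the first through Proposition \ref{prop:properties_eval}-(vi). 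So the task reduces to characterizing the policies with $Q^\pi_1 = Q^{\text{safe}}_1$.

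For the implication $\text{Support}(\pi(\cdot|x)) \subseteq \pazocal{A}^{\text{safe}}_\alpha(x) \Rightarrow \pi\in\Pi^{\text{safe}}_\alpha$, I would feed $\mathcal{Q}^{\text{safe}}$ into the diatomic Bellman operator $\pazocal{T}^\pi_\alpha$. For any next state $x'$ and any $b\in\text{Support}(\pi(\cdot|x'))$ we have $Q^{\text{safe}}_1(x',b) = V_1^{\text{safe}}(x')$ and hence $Q^{\text{safe}}_2(x',b) = \min_{c} Q^{\text{safe}}_2(x',c)$, so the mixture over actions collapses and $(\pazocal{T}^\pi D_{\alpha,\mathcal{Q}^{\text{safe}}})^{(x,a)}$ becomes exactly the discrete distribution appearing inside $\pazocal{T}^{\text{safe}}_\alpha$. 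Taking the left AVaR gives $(\pazocal{T}^\pi_\alpha \mathcal{Q}^{\text{safe}})_1 = \pazocal{T}^{\text{safe}}_\alpha Q^{\text{safe}}_1 = Q^{\text{safe}}_1$, and the averaging property (Proposition \ref{prop:properties_eval}-(vi)) forces the second component to equal $Q^{\text{safe}}_2$ as well. Thus $\mathcal{Q}^{\text{safe}}$ is a fixed point of $\pazocal{T}^\pi_\alpha$, and by uniqueness (Proposition \ref{prop:properties_eval}-(v)) we get $\mathcal{Q}^\pi = \mathcal{Q}^{\text{safe}}$. This inclusion also establishes that $\Pi^{\text{safe}}_\alpha$ is non-empty, as it contains every deterministic greedy policy.

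For the converse, I would start from the safe optimality of Proposition \ref{prop:properties_risk}-(iv), namely $Q^\pi_1 \le Q^{\text{safe}}_1 = \sup_{\pi'} Q^{\pi'}_1$, which holds with equality for $\pi\in\Pi^{\text{safe}}_\alpha$. Evaluating $\pazocal{T}^{\text{safe}}_\alpha$ and $\pazocal{T}^\pi_\alpha$ at their common fixed point $Q^{\text{safe}}_1$ and invoking the dual representation of the left AVaR (Lemma \ref{lem:avar}-(ii)), the monotonicity chain used in the proof of Proposition \ref{prop:properties_risk}-(iv)-(b) must collapse to an equality. From there I would read off, via complementary slackness, that on the next-state--action pairs carrying positive optimal dual weight the greedy value $\max_{b} Q^{\text{safe}}_1(x',b)$ coincides with $Q^{\text{safe}}_1(x',a')$, forcing the support of $\pi$ into $\pazocal{A}^{\text{safe}}_\alpha$, and then propagate this conclusion through the Bellman recursion to reach every state.

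The main obstacle is exactly this converse. Since the left AVaR is \emph{concave} rather than affine, the classical argument --- ``a convex combination of the action-values equals their maximum only when all mass sits on the argmax'' --- is unavailable: there is no linear value identity $V_1^\pi(x) = \sum_a \pi(a|x) Q^\pi_1(x,a)$ to exploit. Equality of the two left AVaRs only constrains the bottom-$\alpha$ part of the induced distributions, and a mean-preserving spread confined above the $\alpha$-quantile can leave the left AVaR untouched; ruling this out requires coupling the equality \emph{at every} state-action pair with the fixed-point recursion, so that a genuinely suboptimal action surviving in $\text{Support}(\pi(\cdot|x))$ eventually forces a strict drop of $Q^\pi_1$ below $Q^{\text{safe}}_1$ somewhere and contradicts equality. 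Carrying out this propagation, rather than a single-step dual computation, is where the real difficulty lies.
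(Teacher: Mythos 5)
Your first direction, together with the side observations, is correct and in fact amounts to everything the paper itself establishes: the paper's entire justification of Corollary \ref{cor:risky_pol} is the one-line remark that it is ``an immediate consequence of Proposition \ref{prop:properties_risk}-(iii)'', which is precisely your argument. When $\text{Support}(\pi(\cdot|x'))\subseteq\argmax_{a'}Q_1^{\text{safe}}(x',a')$ holds everywhere, the action mixture inside $\pazocal{T}^\pi_\alpha$ collapses, $\mathcal{Q}^{\text{safe}}$ becomes a fixed point of $\pazocal{T}^\pi_\alpha$ (the second component being forced by the averaging property and balancedness), and uniqueness of the fixed point gives $\mathcal{Q}^\pi=\mathcal{Q}^{\text{safe}}$, hence also non-emptiness; the identity $\argmax_a Q_1^{\text{safe}}(x,a)=\argmin_a Q_2^{\text{safe}}(x,a)$ follows from the affine decreasing relation between the two components.

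The converse you could not complete is not a gap in your argument: it is false, and for exactly the reason you identified (equality of left AVaRs does not pin down the distribution; mean-preserving spreads that do not straddle the $\alpha$-quantile are invisible to it). A concrete balanced counterexample with $\gamma=\alpha=\tfrac12$ and $\pazocal{A}=\{a_1,a_2\}$: take absorbing states $g,b,m$ (both actions identical there) with self-loop rewards $+10,-10,0$, so $V^*(g)=20$, $V^*(b)=-20$, $V^*(m)=0$; and a state $y$ with zero rewards where $a_1$ leads to $m$ and $a_2$ leads to $g$ or $b$ with probability $\tfrac12$ each, so $Q^*(y,a_1)=Q^*(y,a_2)=0$ and the MDP is balanced. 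Since $y$ is never revisited, no fixed-point equation involves $\pi(\cdot|y)$, so every policy has the same fixed point, namely $\mathcal{Q}^\pi(y,a_1)=(0,0)$, $\mathcal{Q}^\pi(y,a_2)=(-10,10)$, which coincides with $(Q_1^{\text{safe}},Q_2^{\text{safe}})$; hence $\Pi^{\text{safe}}_\alpha=\Pi$, while $\pazocal{A}^{\text{safe}}_\alpha(y)=\{a_1\}$, so the policy that always plays $a_2$ at $y$ belongs to $\Pi^{\text{safe}}_\alpha$ yet violates the support condition. Reachability does not rescue the claim: add a state $x$ (zero rewards, both actions identical) leading to $y$ or $g$ with probability $\tfrac12$ each; under the risky-at-$y$ policy the projected distribution at $x$ is $\tfrac14\delta_{-5}+\tfrac14\delta_{5}+\tfrac12\delta_{10}$, under the safe one it is $\tfrac12\delta_{0}+\tfrac12\delta_{10}$, and these have identical left and right AVaRs at level $\tfrac12$ --- a mean-preserving spread confined to the lower tail. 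So your proposal proves exactly the part of the statement that is true; the ``only if'' half cannot be proved by your dual/complementary-slackness propagation or by any other means, and repairing it would require either extra assumptions or redefining $\Pi^{\text{safe}}_\alpha$ (e.g., directly through the support condition, or through equality of full return distributions rather than of their AVaR projections).
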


Thereby in a balanced MDP, the safest (resp.~riskiest) actions/policies can be identified
by computing the fixed point of the safe (resp.~risky) Bellman operator.
This is analogous to the set of optimal actions $\pazocal{A}^*(x)=\argmax_a Q^*(x,a)$ in the classic control problem.
From Corollary \ref{cor:risky_pol}, there always exist safe and risky policies that are \emph{deterministic}.
Plus, notice that all these safest and riskiest policies are $\alpha$-coherent ($\Pi^{\text{safe}}_\alpha \cup
\Pi^{\text{risky}}_{\alpha} \subseteq \Pi_\alpha$),
which means they come with the robust MDP interpretation of Theorem \ref{th:DRL_robust}.


\section{Numerical illustrations}
\label{sec:experiments}

In this section, we test our Algorithms \ref{alg:spe} and \ref{alg:safe_risky_svi} in a practical example,
for the risk level $\alpha = \frac{1}{2}$.
We run our experiments in the two-states two-actions MDP from Figure \ref{fig:ex_mdp} combined with
the discount factor $\gamma=\frac{1}{2}$, which constitutes a balanced MDP.
Indeed, the Q-function
\begin{equation*}
  \begin{cases}
    & Q^*(x_1,a_1) = Q^*(x_1,a_2) = 2 \\
    & Q^*(x_2,a_1) = Q^*(x_2,a_2) = 4 \, ,
  \end{cases}
\end{equation*}
solves the Bellman optimality equation
\begin{equation*}
  \begin{cases}
    & Q^*(x_1,a_1) = 1 + \frac{1}{2} \max_a Q^*(x_1,a) \\
    & Q^*(x_1,a_2) = \frac{1}{2} + \frac{1}{2} \left( \frac{1}{2} \max_a Q^*(x_1,a) + \frac{1}{2} \max_a Q^*(x_2,a) \right) \\
    & Q^*(x_2,a_1) = 2 + \frac{1}{2} \max_a Q^*(x_2,a) \\
    & Q^*(x_2,a_2) = \frac{5}{2} + \frac{1}{2} \left( \frac{1}{2} \max_a Q^*(x_1,a) + \frac{1}{2} \max_a Q^*(x_2,a) \right) \, .
  \end{cases}
\end{equation*}

\subsection{Evaluation of a policy}

Consider the policy $\pi$ picking uniformly at random the two actions in any of the two states:
\begin{equation*}
  \pi(a | x) = \frac{1}{2} \quad \text{ for all } (x,a) \in \{x_1,x_2\}\times\{a_1,a_2\} .
\end{equation*}
For the policy evaluation task, we run $20$ iterations of the \textsc{SPE} algorithm, starting from
$Q_1(x,a)$ and $Q_2(x,a)$ initialized (arbitrarily) at zero.
Figure \ref{fig:pol_eval} displays the plots across iterations, showing quick convergence to the fixed point $(Q_1^\pi,Q_2^\pi)$.

\begin{figure}

\centering
\begin{subfigure}{.49\textwidth}
  \centering
  \includegraphics[width=1.0\linewidth]{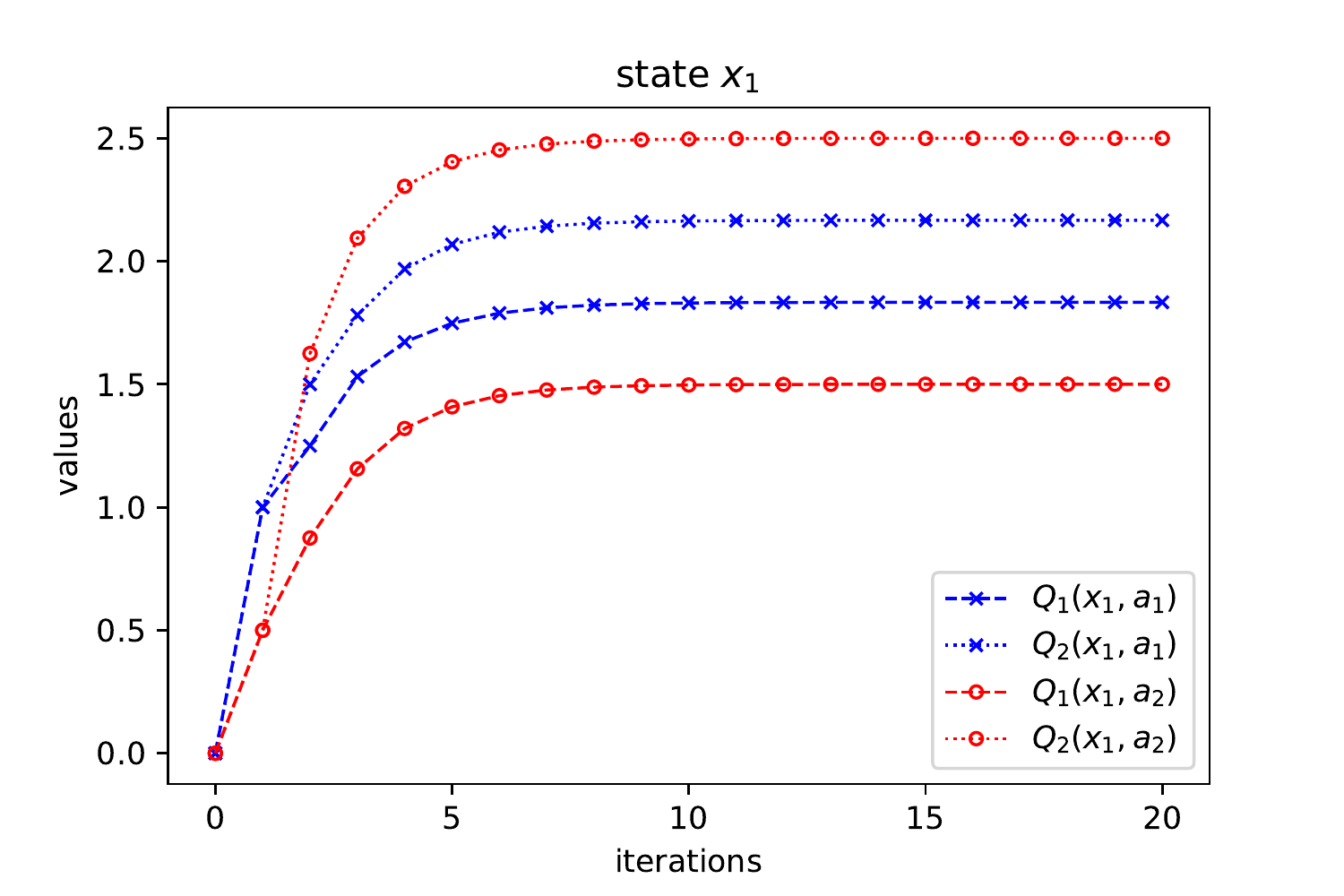}
\end{subfigure}%
\begin{subfigure}{.49\textwidth}
  \centering
  \includegraphics[width=1.0\linewidth]{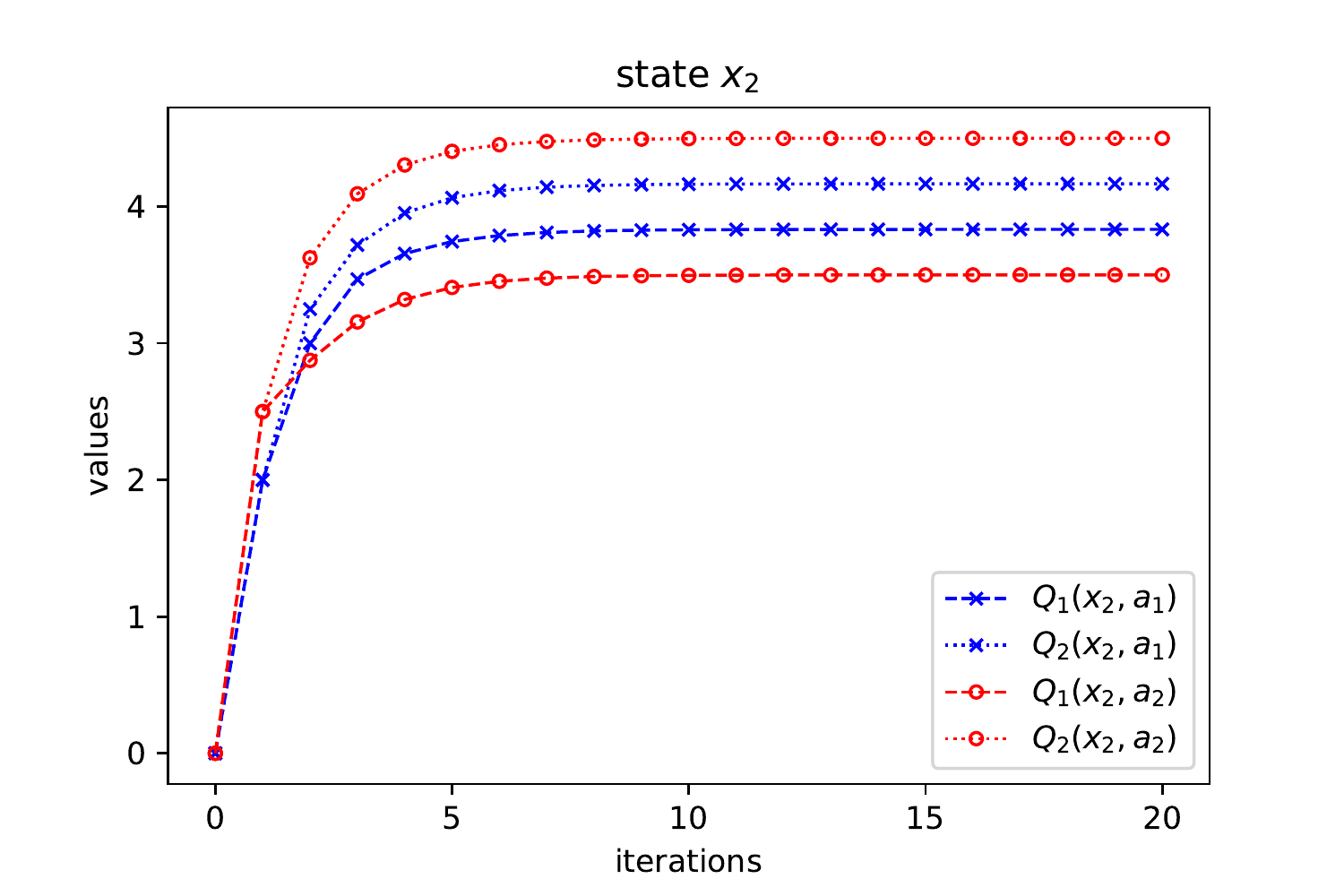}
\end{subfigure}

\caption{Evaluation of the policy $\pi$ by successive iterations of the \textsc{SPE} algorithm \ref{alg:spe}.}
\label{fig:pol_eval}
\end{figure}

\subsection{Finding safe and risky policies}

For the safe (resp.~risky) control task, we run $20$ iterations of the \textsc{Safe SVI} (resp.~\textsc{Risky SVI}) algorithm, starting from
$Q_1(x,a)$ initialized at zero.
In both cases, we see quick convergence to the fixed points.
In Figure \ref{fig:safe}, the safe fixed point satisfies $Q_1^{\text{safe}}(\cdot,a_1)=Q_2^{\text{safe}}(\cdot,a_1)$, which confirms that the corresponding policy is the safest one that always takes the action $a_1$, thus producing a deterministic discounted return.
In Figure \ref{fig:risky}, the risky fixed point satisfies $Q_1^{\text{risky}}(\cdot,a_2)=V_1^\pi(\cdot)$ and $Q_2^{\text{risky}}(\cdot,a_2)=V_2^\pi(\cdot)$, where $\pi$ is the policy from Example \ref{ex:a2} that always takes the riskiest action $a_2$.
Hence, our two control algorithms work as expected: they quickly converge to the desired fixed points, from which the safe or risky (deterministic) policies can be extracted.

\begin{figure}

\centering
\begin{subfigure}{.49\textwidth}
  \centering
  \includegraphics[width=1.0\linewidth]{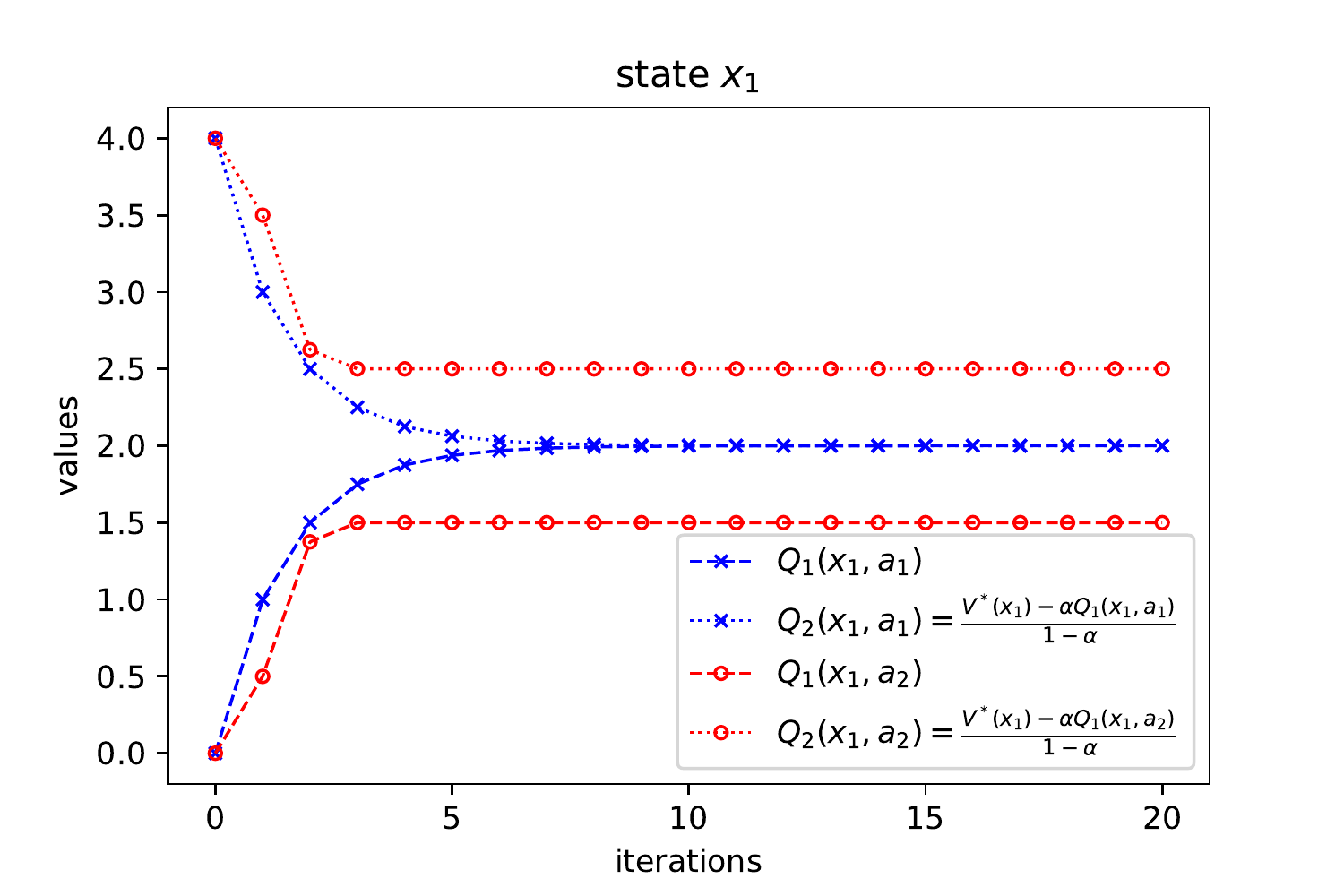}
\end{subfigure}
\begin{subfigure}{.49\textwidth}
  \centering
  \includegraphics[width=1.0\linewidth]{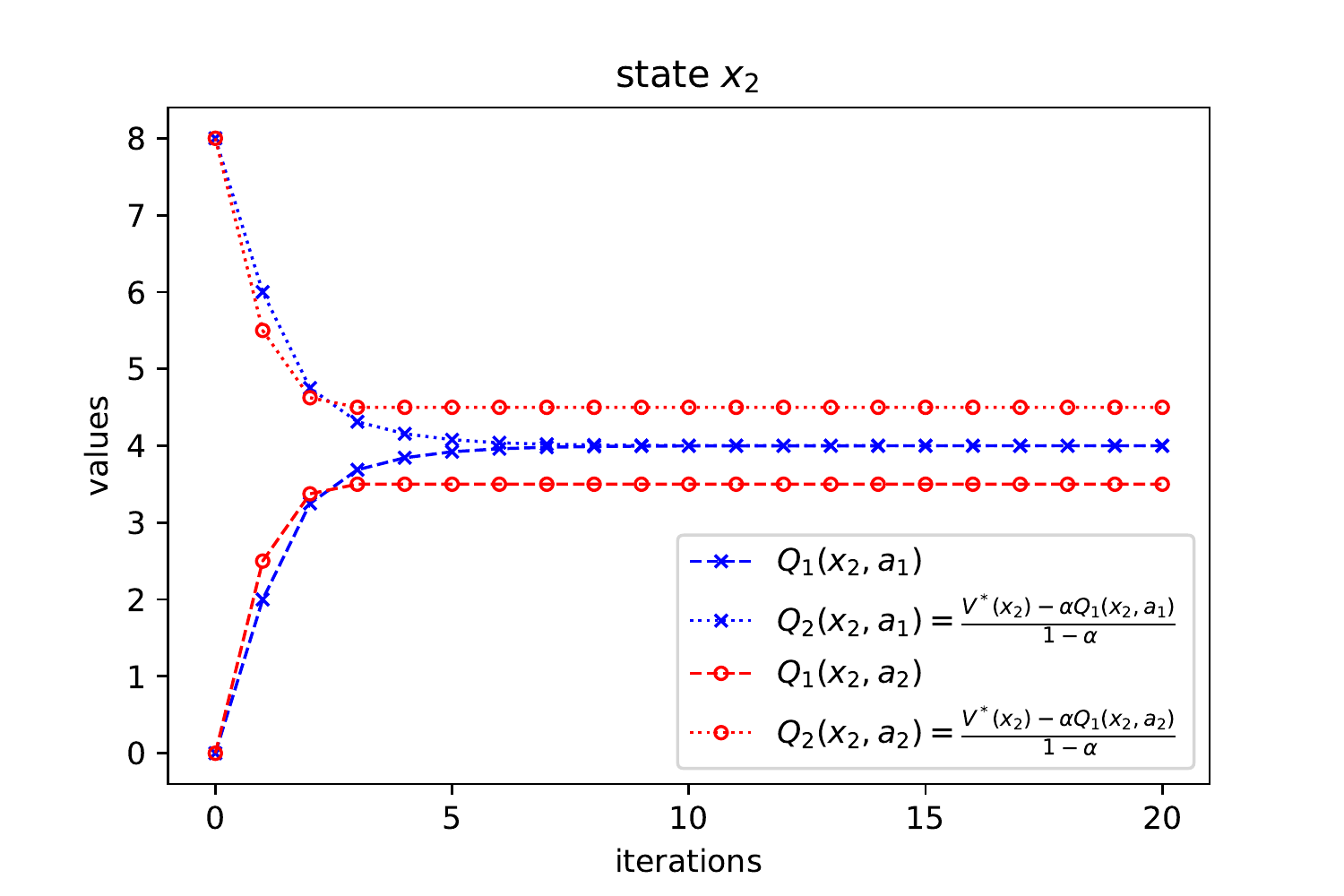}
\end{subfigure}

\caption{Safe control by successive iterations of the \textsc{Safe SVI} algorithm \ref{alg:safe_risky_svi}.}
\label{fig:safe}
\end{figure}

\begin{figure}

\centering
\begin{subfigure}{.49\textwidth}
  \centering
  \includegraphics[width=1.0\linewidth]{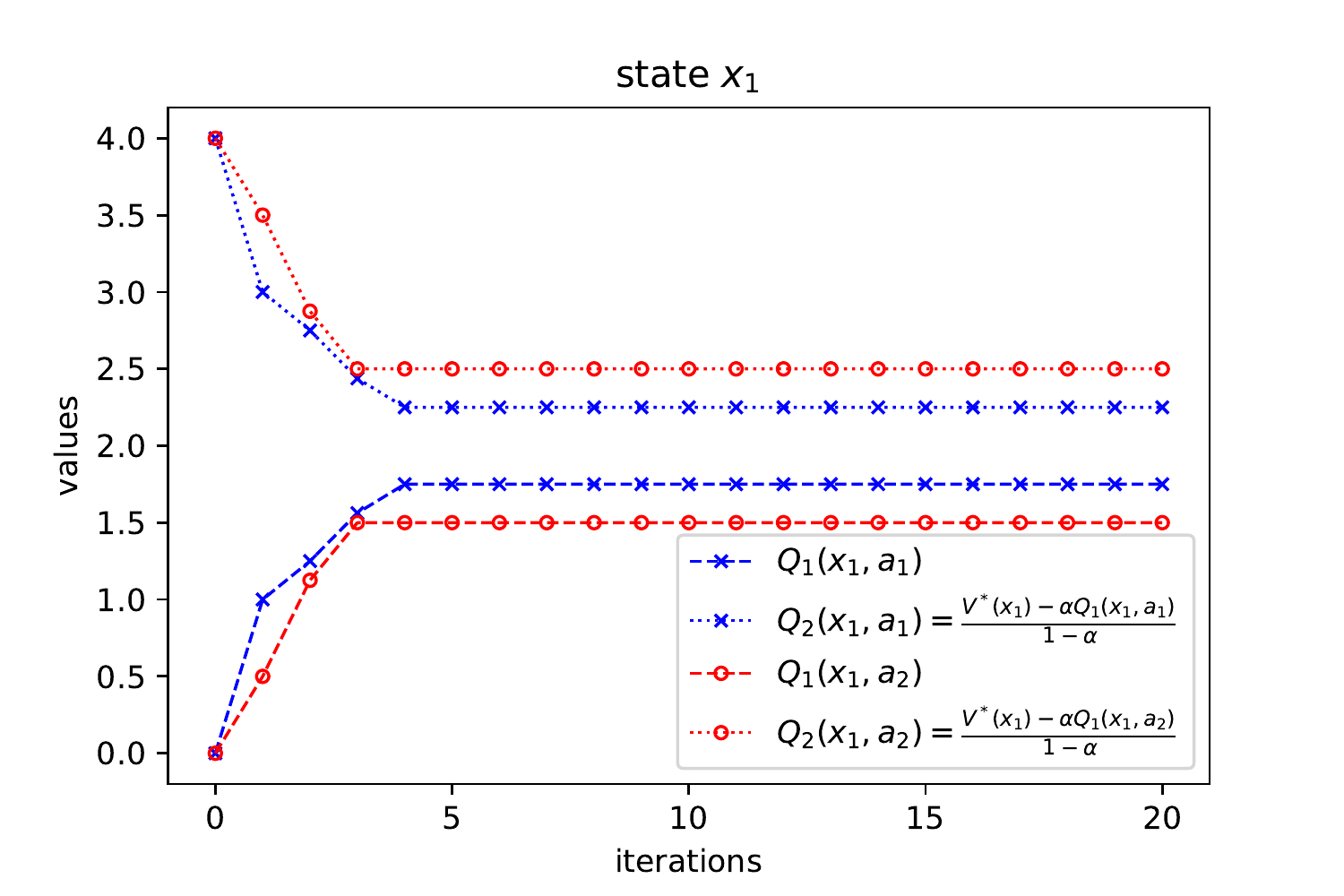}
\end{subfigure}
\begin{subfigure}{.49\textwidth}
  \centering
  \includegraphics[width=1.0\linewidth]{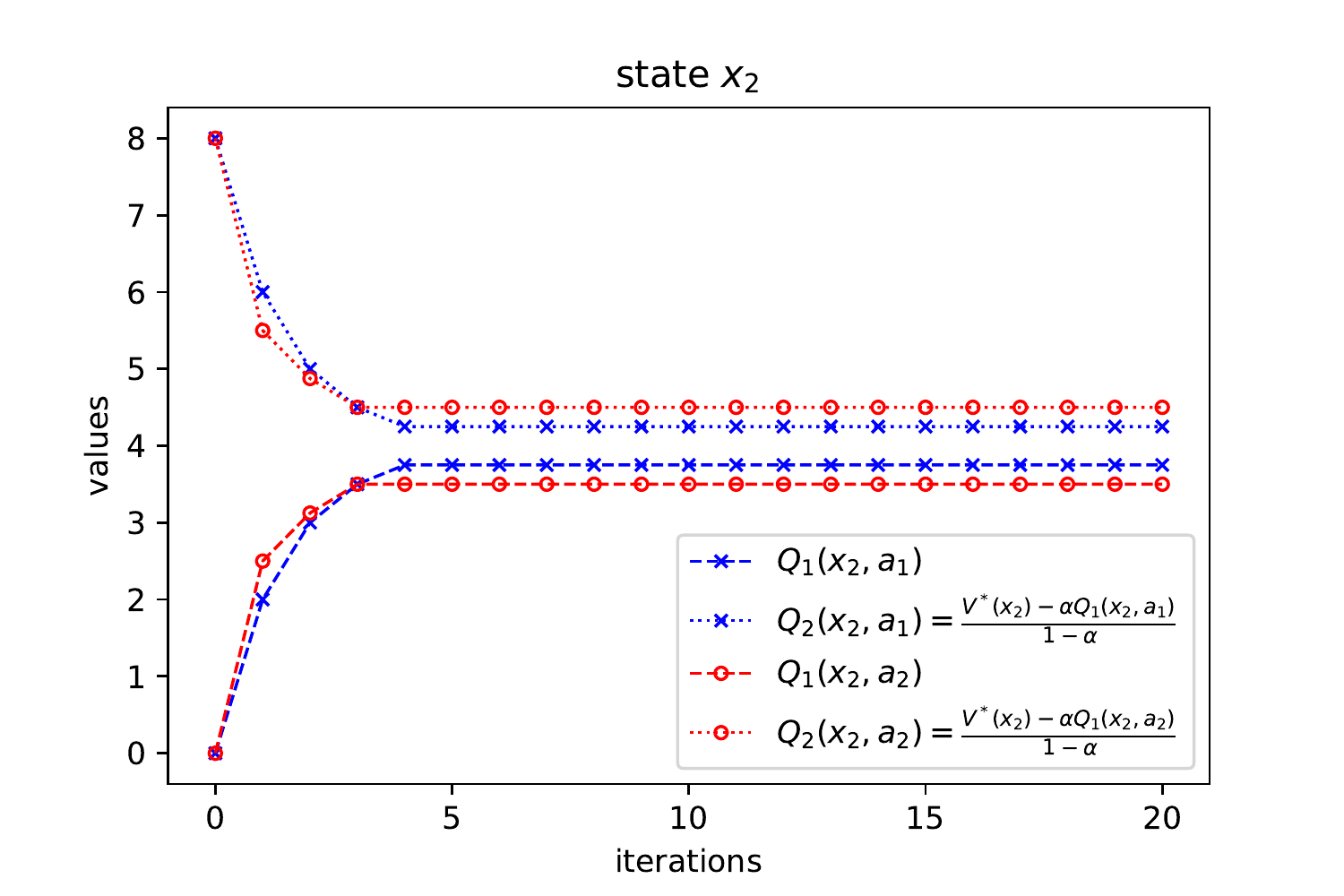}
\end{subfigure}

\caption{Risky control by successive iterations of the \textsc{Risky SVI} algorithm \ref{alg:safe_risky_svi}.}
\label{fig:risky}
\end{figure}

\section{Conclusion}
\label{sec:conclusion}

In this paper, we showed that the distributional perspective in MDPs can be leveraged to
define new robust control tasks in the exact tabular setting.
Our approach allows to distinguish safe from risky policies among the space of optimal policies.
In other words, we first require the classic control problem to be solved, so that all suboptimal actions
can be identified and removed from the action space, before we can apply our method.
This strong ``balanced MDP'' requirement constitutes the main limitation of our work.
Future lines of research include relaxing this assumption (e.g. in the safe case, by just requiring the condition $\argmax_a Q_1(x,a) = \argmin_a Q_2(x,a)$) or finding a natural class of MDPs with such structure.
Future work could as well investigate risk-seeking algorithms (based on the linear programming formulation provided in Appendix B) in a reinforcement learning setting with an agent only observing empirical transitions.




\newpage

\appendix
\section*{Appendix A. A technical prerequisite}
We recall the classic quantile representation of the expected value,
that is used several times throughout the paper.

\begin{lemma}{\textsc{(Expectation by quantiles)}.}
  \label{lem:quantile_expectation}
  Let $Z$ be a real-valued random variable with CDF $F$ and quantile function $F^{-1}$.
  Then,
  \begin{equation*}
    \mathbb{E}[Z] = \int_{\tau=0}^1 F^{-1}(\tau)d\tau .
  \end{equation*}
\end{lemma}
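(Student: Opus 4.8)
The plan is to prove this via the probability integral transform (inverse transform sampling), which reduces the statement to a one-line computation once the key distributional identity is in place. Concretely, I would introduce an auxiliary random variable $U$ uniformly distributed on $(0,1)$ and show that $F^{-1}(U)$ has the same law as $Z$. Granting this, the result follows immediately since
\begin{equation*}
\mathbb{E}[Z] = \mathbb{E}[F^{-1}(U)] = \int_{\tau=0}^1 F^{-1}(\tau)\, d\tau,
\end{equation*}
where the last equality is just the definition of the expectation of $F^{-1}(U)$ against the (constant, unit) density of $U$ on $(0,1)$. Since we work with $\nu\in\mathcal{P}_b(\mathbb{R})$, the variable $Z$ is bounded and all integrals are finite, so no integrability subtleties arise.

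First I would establish the central lemma underlying inverse transform sampling, namely the Galois connection between $F$ and its generalized inverse: for every $\tau\in (0,1)$ and every $z\in\mathbb{R}$,
\begin{equation*}
F^{-1}(\tau) \le z \quad \Longleftrightarrow \quad \tau \le F(z).
\end{equation*}
The forward implication is immediate from the definition $F^{-1}(\tau)=\inf\{z' : F(z')\ge \tau\}$: if $\tau\le F(z)$ then $z$ belongs to this set, so the infimum is at most $z$. For the converse I would use that $F$ is non-decreasing and right-continuous, which guarantees that the infimum defining $F^{-1}(\tau)$ is attained and satisfies $F(F^{-1}(\tau))\ge \tau$ (take a sequence $z_n\downarrow F^{-1}(\tau)$ with $F(z_n)\ge\tau$ and pass to the limit by right-continuity); then $F^{-1}(\tau)\le z$ yields $F(z)\ge F(F^{-1}(\tau))\ge \tau$ by monotonicity.

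With the Galois connection in hand, the distributional identity follows at once: for any $z\in\mathbb{R}$,
\begin{equation*}
\mathbb{P}\bigl(F^{-1}(U) \le z\bigr) = \mathbb{P}\bigl(U \le F(z)\bigr) = F(z),
\end{equation*}
the last step using that $U$ is uniform on $(0,1)$ and $F(z)\in [0,1]$. Hence $F^{-1}(U)$ and $Z$ share the CDF $F$, so they have the same law and the same expectation, which closes the argument.

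I expect the main (and essentially only) obstacle to be the careful handling of the generalized inverse in the converse direction of the Galois connection, where right-continuity of $F$ is essential; this is the sole place where one cannot treat $F^{-1}$ as an ordinary inverse. An alternative, should one prefer to avoid introducing an auxiliary uniform variable, would be a direct Fubini/layer-cake computation splitting $\mathbb{E}[Z]$ into $\int_0^\infty \mathbb{P}(Z>t)\,dt - \int_{-\infty}^0 \mathbb{P}(Z\le t)\,dt$ and changing variables through $F$; but this route is more bookkeeping-heavy, so I would reserve it as a fallback.
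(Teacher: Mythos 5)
Your proposal is correct and follows essentially the same route as the paper: establishing the equivalence $F^{-1}(\tau)\le z \Longleftrightarrow \tau\le F(z)$ via monotonicity and right-continuity of $F$, deducing that $F^{-1}(U)$ has the same law as $Z$ for $U$ uniform on $(0,1)$, and concluding by taking expectations. In fact, you spell out the proof of the Galois connection in more detail than the paper, which simply asserts it.
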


\begin{proof}
As any CDF is non-decreasing and right continuous (see, e.g., \citealp{billingsley2013convergence}), we have for all
$(\tau,z)\in(0,1)\times\mathbb{R}$:
\begin{equation*}
F^{-1}(\tau) \le z \Longleftrightarrow \tau \le F(z) .
\end{equation*}
Then, denoting by $U$ a uniformly distributed random variable over $[0, 1]$,
\begin{equation*}
  \mathbb{P}\{ F^{-1}(U) \le z \} = \mathbb{P}\{ U \le F(z) \} = F(z) ,
\end{equation*}
which shows that the random variable $F^{-1}(U)$ has the same distribution as $Z$.
Hence,
\begin{equation*}
  \mathbb{E}[Z] = \mathbb{E}[F^{-1}(U)] = \int_{\tau=0}^1 F^{-1}(\tau) d\tau .
\end{equation*}
\end{proof}


\section*{Appendix B. A linear programming formulation of risky control}
\label{appendix_LP}

Here, we show that the risky control task in a balanced MDP can be written as a linear program (LP).
From Lemma \ref{lem:avar}, we can get a closed-form expression of the left AVaR
appearing in the fixed point equation of the risky Bellman operator.
Given $(x,a)$, we need to sort the $|\mathbf{X}|=2|\pazocal{X}|$ particles
\begin{equation*}
  \left( r(x,a,x') + \gamma \min_{a'} Q_1^{\text{risky}}(x',a') \, , \, r(x,a,x') + \gamma \max_{a'} Q_2^{\text{risky}}(x',a') \right)_{x'\in\pazocal{X}} .
\end{equation*}
As $Q_1^{\text{risky}}\le Q_2^{\text{risky}}$, we restrict our attention to the following constrained permutations and transition kernels.

\begin{definition}{\textsc{(Constrained permutations)}.}
Let us denote by $\mathfrak{S}(\mathbf{X})$ the set of permutations $ \sigma : \mathbf{X} \rightarrow \{1, ..., |\mathbf{X}| \}$ that verify $\sigma(\underline{x}) < \sigma(\overline{x})$ for all $x\in\pazocal{X}$.
\end{definition}

It can be shown by induction on $|\pazocal{X}|=\frac{1}{2}|\mathbf{X}|$ that the cardinality of the set $\mathfrak{S}(\mathbf{X})$ is
\begin{equation*}
  |\mathfrak{S}(\mathbf{X})| = \frac{|\mathbf{X}| !}{2^{|\pazocal{X}|}} \, .
\end{equation*}
We refer to \cite{achab2019dimensionality} (and references therein) for a related analysis of constrained permutations in a statistical learning context.
With a slight abuse of notation, we denote for all $x,a,x'$,
\begin{equation*}
  P(\underline{x'}|x,a) := \alpha P(x'|x,a) \quad \text{ and } \quad P(\overline{x'}|x,a) := (1-\alpha) P(x'|x,a) \ .
\end{equation*}

\begin{definition}{\textsc{(Permutation kernels)}.}
  \label{def:Psigma}
For any constrained permutation $\sigma \in \mathfrak{S}(\mathbf{X})$, we define the transition kernel $\boldsymbol{P}_\sigma \in \Upsilon_{\alpha}$ as follows: for all $(x,a)\in \pazocal{X}\times\pazocal{A}, s'\in\mathbf{X}$,
\begin{multline*}
\boldsymbol{P}_\sigma(s'|\underline{x}, a)
= \frac{1}{\alpha} \max\left( 0 \ ,\ \min\left( P(s'|x,a) \ ,\ \alpha - \sum_{ s'': \sigma(s'')\le \sigma(s')-1} P(s''|x,a) \right) \right) \, ,\\
\text{ and } \quad \boldsymbol{P}_\sigma(s'|\overline{x}, a)
= \frac{1}{1-\alpha} \max\left( 0 \ ,\ \min\left( P(s'|x,a) \ ,\ \sum_{ s'': \sigma(s'')\le \sigma(s')} P(s''|x,a) - \alpha \right) \right) \, .
\end{multline*}
\end{definition}

\noindent \textbf{Primal \& dual LPs.}
For an initial state distribution $\nu_0=(\nu_0(x))_{x}>0$ over $\pazocal{X}$, we define the primal problem as:

\begin{multline}
\label{eq:primal_risky_permut}
\maximize_{V_1, \mathcal{V}} (1-\gamma) \langle \nu_0, V_1 \rangle\\
\text{subject to: } \quad \forall x\in\pazocal{X}, \forall a\in \pazocal{A}^*(x), \forall \sigma\in \mathfrak{S}(\mathbf{X}),\\
V_1(x) \le \sum_{s'\in\mathbf{X}} \boldsymbol{P}_\sigma(s'|\underline{x}, a) \left( r(x,a,s') + \gamma \mathcal{V}(s') \right) \ , \\
\mathcal{V}(\underline{x}) = V_1(x) \ , \quad \text{ and } \quad
\mathcal{V}(\overline{x}) = \frac{V^*(x) - \alpha V_1(x)}{1-\alpha} \ .
\end{multline}

\begin{lemma}
\label{lem:dual_risky_permut}
\begin{enumerate}[(i)]
\item The unique solution $(V_1^{\text{risky}}, \mathcal{V}^{\text{risky}})$ of the primal problem (\ref{eq:primal_risky_permut}) is given by:
\begin{equation*}
  V_1^{\text{risky}}(x) = \min_{a} Q_1^{\text{risky}}(x,a) \, .
\end{equation*}
\item The dual of problem (\ref{eq:primal_risky_permut}) is:
\begin{multline}
\minimize_{p \ge 0} \sum_{x,a,\sigma} p(x,a,\sigma) \cdot \left( \sum_{s'\in\mathbf{X}} \boldsymbol{P}_\sigma(s'|\underline{x}, a) r(x,a,s') + \frac{\gamma}{1-\alpha} \sum_{x'\in\pazocal{X}} \boldsymbol{P}_\sigma(\overline{x'}|\underline{x},a) V^*(x') \right) \\
\text{subject to: } \quad \forall x\in\pazocal{X}, \\
\sum_{a,\sigma} p(x,a,\sigma) = (1-\gamma) \nu_0(x) + \gamma \sum_{x',a',\sigma} \left( \boldsymbol{P}_\sigma(\underline{x}|\underline{x'},a') - \frac{\alpha}{1-\alpha} \boldsymbol{P}_\sigma(\overline{x}|\underline{x'},a') \right) p(x',a',\sigma) \ .
\end{multline}
\item Strong duality holds.
\end{enumerate}
\end{lemma}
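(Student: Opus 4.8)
The plan is to treat the three parts in turn, anchoring everything on the identity that rewrites the left AVaR in the risky fixed-point equation as a minimum over the permutation kernels of Definition~\ref{def:Psigma}. First I would record that, by Lemma~\ref{lem:avar}-(ii), for any discrete distribution the quantity $\alpha\,\text{AVaR}^{\text{left}}_\alpha$ equals the infimum of $\langle\lambda,v\rangle$ over feasible weights $\lambda$; setting $\lambda:=\alpha\boldsymbol{P}_\sigma(\cdot\,|\,\underline{x},a)$ shows that each constrained permutation $\sigma\in\mathfrak{S}(\mathbf{X})$ yields a feasible weight (it satisfies $0\le\lambda_{s'}\le P(s'|x,a)$ and $\sum_{s'}\lambda_{s'}=\alpha$ by water-filling), whereas the permutation that sorts the $2|\pazocal{X}|$ particles $r(x,a,x')+\gamma\mathcal{V}^{\text{risky}}(s')$ in non-decreasing order attains the infimum. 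Because $Q_1^{\text{risky}}\le Q_2^{\text{risky}}$ forces $\mathcal{V}^{\text{risky}}(\underline{x'})\le\mathcal{V}^{\text{risky}}(\overline{x'})$, that sorting permutation can be chosen inside $\mathfrak{S}(\mathbf{X})$. This gives $Q_1^{\text{risky}}(x,a)=\min_{\sigma\in\mathfrak{S}(\mathbf{X})}\sum_{s'}\boldsymbol{P}_\sigma(s'|\underline{x},a)(r(x,a,s')+\gamma\mathcal{V}^{\text{risky}}(s'))$, and minimizing over $a$ shows that $V_1^{\text{risky}}=\min_a Q_1^{\text{risky}}(\cdot,a)$ together with $\mathcal{V}^{\text{risky}}$ is \emph{feasible} for the primal (\ref{eq:primal_risky_permut}), with at least one constraint tight in every state.

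For optimality and uniqueness I would introduce the operator $\pazocal{B}$ on $\mathbb{R}^{\pazocal{X}}$ defined by $(\pazocal{B}W)(x)=\min_{a,\sigma}\sum_{s'}\boldsymbol{P}_\sigma(s'|\underline{x},a)(r(x,a,s')+\gamma\mathcal{V}_W(s'))$, where $\mathcal{V}_W(\underline{x'})=W(x')$ and $\mathcal{V}_W(\overline{x'})=(V^*(x')-\alpha W(x'))/(1-\alpha)$, so that a vector $V_1$ is primal-feasible exactly when $V_1\le\pazocal{B}V_1$. Expanding a single term splits off the coefficient $\gamma(\boldsymbol{P}_\sigma(\underline{x'}|\underline{x},a)-\tfrac{\alpha}{1-\alpha}\boldsymbol{P}_\sigma(\overline{x'}|\underline{x},a))$ in front of $W(x')$, which is $\ge0$ since $\boldsymbol{P}_\sigma\in\Upsilon_\alpha$ (Definition~\ref{def:uncertainty_set}); hence $\pazocal{B}$ is monotone. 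The same decomposition, reusing the argument of Proposition~\ref{prop:properties_risk}-(ii), shows $\pazocal{B}$ is a $\gamma$-contraction in the supremum norm, so its unique fixed point is $V_1^{\text{risky}}$. Then any feasible $V_1$ obeys $V_1\le\pazocal{B}V_1\le\pazocal{B}^kV_1\to V_1^{\text{risky}}$ by monotonicity and contraction, giving $V_1\le V_1^{\text{risky}}$ componentwise; since $\nu_0>0$ this makes $V_1^{\text{risky}}$ the unique maximizer, proving (i).

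For (ii) I would substitute the two equality constraints to eliminate $\mathcal{V}$ and write the primal purely in $V_1$, so that the $(x,a,\sigma)$-constraint reads $V_1(x)-\gamma\sum_{x'}(\boldsymbol{P}_\sigma(\underline{x'}|\underline{x},a)-\tfrac{\alpha}{1-\alpha}\boldsymbol{P}_\sigma(\overline{x'}|\underline{x},a))V_1(x')\le c_{x,a,\sigma}$, with right-hand side $c_{x,a,\sigma}=\sum_{s'}\boldsymbol{P}_\sigma(s'|\underline{x},a)r(x,a,s')+\tfrac{\gamma}{1-\alpha}\sum_{x'}\boldsymbol{P}_\sigma(\overline{x'}|\underline{x},a)V^*(x')$, which is exactly the coefficient appearing in the dual objective. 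Assigning a multiplier $p(x,a,\sigma)\ge0$ to each inequality and equating, for every fixed state $x$, the total coefficient of $V_1(x)$ to the objective coefficient $(1-\gamma)\nu_0(x)$ produces the stated balance constraint, since $V_1(x)$ appears with coefficient $+1$ in its own constraints and with coefficient $-\gamma(\boldsymbol{P}_\sigma(\underline{x}|\underline{x'},a')-\tfrac{\alpha}{1-\alpha}\boldsymbol{P}_\sigma(\overline{x}|\underline{x'},a'))$ in the $(x',a',\sigma)$-constraints. This is a mechanical transposition of the primal matrix.

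Finally, (iii) follows from the standard strong-duality theorem for finite linear programs: part (i) exhibits an optimal primal solution, so the primal is feasible with finite optimum, whence the dual is solvable with equal value. The main obstacle is the first step of (i): justifying rigorously that the sorted left AVaR coincides with $\min_{\sigma\in\mathfrak{S}(\mathbf{X})}$ of the permutation-kernel expression --- i.e.\ that $\boldsymbol{P}_\sigma$ realizes exactly the $\lambda$-weights of Lemma~\ref{lem:avar}-(ii), that the optimal sort respects $\sigma(\underline{x'})<\sigma(\overline{x'})$, and that the resulting $\pazocal{B}$ is simultaneously monotone and contractive; once this identity is in place, the LP manipulations in (ii)--(iii) are routine.
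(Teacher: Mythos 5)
Your proof is correct, and its overall skeleton (fixed-point characterization for (i), LP transposition for (ii), standard duality for (iii)) matches the paper's; the difference is that the paper's proof is essentially three citations plus one Lagrangian computation, while yours fills in the substance. For (i), the paper invokes Proposition~\ref{prop:properties_risk} together with Lemma~2 of \cite{nilim2005robust}; your argument inlines that cited result: the identity $Q_1^{\text{risky}}(x,a)=\min_{\sigma\in\mathfrak{S}(\mathbf{X})}\sum_{s'}\boldsymbol{P}_\sigma(s'|\underline{x},a)\left(r(x,a,s')+\gamma\mathcal{V}^{\text{risky}}(s')\right)$, obtained by matching the water-filling formula of Lemma~\ref{lem:avar} with Definition~\ref{def:Psigma}, combined with the monotone $\gamma$-contraction $\pazocal{B}$ whose unique fixed point dominates every primal-feasible $V_1$, is exactly the mechanism behind that reference, and your appeal to $\nu_0>0$ for uniqueness is the right one. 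For (ii), the paper keeps $\mathcal{V}$ as a primal variable, introduces equality multipliers $\lambda_1,\lambda_2$, and then eliminates them via the stationarity conditions in $\mathcal{V}$; you substitute the equality constraints away first and transpose the resulting matrix in $V_1$ alone. Both routes produce the same dual (your coefficient $c_{x,a,\sigma}$ and your balance equation agree with the stated program), and yours avoids carrying multipliers that are eliminated anyway. For (iii), the paper cites Slater's condition, whereas you use strong duality for finite LPs from primal solvability; for a linear program yours is the leaner and equally valid argument. One small point worth making explicit: the inequality $Q_1^{\text{risky}}\le Q_2^{\text{risky}}$, which you need in order to place the sorting permutation inside $\mathfrak{S}(\mathbf{X})$ and which the paper also uses without comment in Appendix~B, follows from Proposition~\ref{prop:properties_risk}-(v) and the balanced-MDP assumption: $Q_1^{\text{risky}}=\inf_\pi Q_1^\pi\le Q^\pi=Q^*\equiv V^*$, hence $Q_2^{\text{risky}}=(V^*-\alpha Q_1^{\text{risky}})/(1-\alpha)\ge V^*\ge Q_1^{\text{risky}}$.
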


\begin{proof}

\noindent (i) Primal LP.
This follows from the properties of the risky Bellman operator (Proposition \ref{prop:properties_risk}) combined with Lemma 2 in \cite{nilim2005robust}.

\noindent (ii) Dual LP.
The Lagrangian function of the LP (\ref{eq:primal_risky_permut}) is:
\begin{multline}
L(V_1, \mathcal{V}, p, \lambda_1, \lambda_2) = - (1-\gamma) \langle \nu_0, V_1 \rangle
+ \sum_{x,a,\sigma} p(x,a,\sigma) \left[ V_1(x) - \sum_{s'\in\mathbf{X}} \boldsymbol{P}_\sigma(s'|\underline{x}, a) \left( r(x,a,s') + \gamma \mathcal{V}(s') \right) \right] \\
+ \sum_{x} \lambda_1(x) \left[ \mathcal{V}(\underline{x}) - V_1(x) \right] + \sum_{x} \lambda_2(x) \left[ \mathcal{V}(\overline{x}) - \frac{V^*(x) - \alpha V_1(x)}{1-\alpha} \right] \\
= - \sum_{x,a,\sigma} p(x,a,\sigma) \sum_{s'} \boldsymbol{P}_\sigma(s'|\underline{x}, a) r(x,a,s') - \left\langle \lambda_2, \frac{V^*}{1-\alpha} \right\rangle \\
+ \sum_x V_1(x) \left[ - (1-\gamma) \nu_0(x) + \sum_{a,\sigma} p(x,a,\sigma) - \lambda_1(x) + \frac{\alpha}{1-\alpha} \lambda_2(x) \right] \\
+ \sum_x \mathcal{V}(\underline{x}) \left[ - \gamma \sum_{x',a',\sigma} \boldsymbol{P}_\sigma(\underline{x}|\underline{x'},a') p(x',a',\sigma) + \lambda_1(x) \right] \\
+ \sum_x \mathcal{V}(\overline{x}) \left[ - \gamma \sum_{x',a',\sigma} \boldsymbol{P}_\sigma(\overline{x}|\underline{x'},a') p(x',a',\sigma) + \lambda_2(x) \right] \ .
\end{multline}
The result follows by setting the gradients of $L$ with respect to $V_1$ and $\mathcal{V}$ to zero.

\noindent (iii) Strong duality.
It can be proved by Slater's condition, see \cite{boyd2004convex}.
\end{proof}





\vskip 0.2in
\bibliography{biblio_BAVaR}

\begin{thebibliography}{34}
\providecommand{\natexlab}[1]{#1}
\providecommand{\url}[1]{\texttt{#1}}
\expandafter\ifx\csname urlstyle\endcsname\relax
  \providecommand{\doi}[1]{doi: #1}\else
  \providecommand{\doi}{doi: \begingroup \urlstyle{rm}\Url}\fi

\bibitem[Acerbi and Tasche(2002)]{acerbi2002coherence}
Carlo Acerbi and Dirk Tasche.
\newblock On the coherence of expected shortfall.
\newblock \emph{Journal of Banking \& Finance}, 26\penalty0 (7):\penalty0
  1487--1503, 2002.

\bibitem[Achab(2020)]{achab2020ranking}
Mastane Achab.
\newblock \emph{Ranking and risk-aware reinforcement learning}.
\newblock PhD thesis, Institut polytechnique de Paris, 2020.

\bibitem[Achab et~al.(2019)Achab, Korba, and
  Cl{\'e}men{\c{c}}on]{achab2019dimensionality}
Mastane Achab, Anna Korba, and Stephan Cl{\'e}men{\c{c}}on.
\newblock Dimensionality reduction and (bucket) ranking: a mass transportation
  approach.
\newblock In \emph{Algorithmic Learning Theory}, pages 64--93. PMLR, 2019.

\bibitem[Artzner et~al.(1999)Artzner, Delbaen, Eber, and
  Heath]{artzner1999coherent}
Philippe Artzner, Freddy Delbaen, Jean-Marc Eber, and David Heath.
\newblock Coherent measures of risk.
\newblock \emph{Mathematical finance}, 9\penalty0 (3):\penalty0 203--228, 1999.

\bibitem[Bellemare et~al.(2013)Bellemare, Naddaf, Veness, and
  Bowling]{bellemare2013arcade}
Marc~G Bellemare, Yavar Naddaf, Joel Veness, and Michael Bowling.
\newblock The arcade learning environment: An evaluation platform for general
  agents.
\newblock \emph{Journal of Artificial Intelligence Research}, 47:\penalty0
  253--279, 2013.

\bibitem[Bellemare et~al.(2017)Bellemare, Dabney, and
  Munos]{bellemare2017distributional}
Marc~G Bellemare, Will Dabney, and R{\'e}mi Munos.
\newblock A distributional perspective on reinforcement learning.
\newblock In \emph{International Conference on Machine Learning}, pages
  449--458. PMLR, 2017.

\bibitem[Bellemare et~al.(2019)Bellemare, Le~Roux, Castro, and
  Moitra]{bellemare2019distributional}
Marc~G Bellemare, Nicolas Le~Roux, Pablo~Samuel Castro, and Subhodeep Moitra.
\newblock Distributional reinforcement learning with linear function
  approximation.
\newblock In \emph{The 22nd International Conference on Artificial Intelligence
  and Statistics}, pages 2203--2211. PMLR, 2019.

\bibitem[Bellman(1966)]{bellman57dynamic}
Richard Bellman.
\newblock Dynamic programming.
\newblock \emph{Science}, 153\penalty0 (3731):\penalty0 34--37, 1966.

\bibitem[Bertsekas and Tsitsiklis(1996)]{bertsekas1996neuro}
Dimitri~P Bertsekas and John~N Tsitsiklis.
\newblock \emph{Neuro-dynamic programming}, volume~5.
\newblock Athena Scientific Belmont, MA, 1996.

\bibitem[Bertsekas et~al.(2000)]{bertsekas2000dynamic}
Dimitri~P Bertsekas et~al.
\newblock \emph{Dynamic programming and optimal control: Vol. 1}.
\newblock Athena scientific Belmont, 2000.

\bibitem[Billingsley(2013)]{billingsley2013convergence}
Patrick Billingsley.
\newblock \emph{Convergence of probability measures}.
\newblock John Wiley \& Sons, 2013.

\bibitem[Boyd et~al.(2004)Boyd, Boyd, and Vandenberghe]{boyd2004convex}
Stephen Boyd, Stephen~P Boyd, and Lieven Vandenberghe.
\newblock \emph{Convex optimization}.
\newblock Cambridge university press, 2004.

\bibitem[Chow et~al.(2015)Chow, Tamar, Mannor, and Pavone]{chow2015risk}
Yinlam Chow, Aviv Tamar, Shie Mannor, and Marco Pavone.
\newblock Risk-sensitive and robust decision-making: a cvar optimization
  approach.
\newblock \emph{arXiv preprint arXiv:1506.02188}, 2015.

\bibitem[Chun et~al.(2012)Chun, Shapiro, and Uryasev]{chun2012conditional}
So~Yeon Chun, Alexander Shapiro, and Stan Uryasev.
\newblock Conditional value-at-risk and average value-at-risk: Estimation and
  asymptotics.
\newblock \emph{Operations Research}, 60\penalty0 (4):\penalty0 739--756, 2012.

\bibitem[Dabney et~al.(2018{\natexlab{a}})Dabney, Ostrovski, Silver, and
  Munos]{dabney2018implicit}
Will Dabney, Georg Ostrovski, David Silver, and R{\'e}mi Munos.
\newblock Implicit quantile networks for distributional reinforcement learning.
\newblock \emph{arXiv preprint arXiv:1806.06923}, 2018{\natexlab{a}}.

\bibitem[Dabney et~al.(2018{\natexlab{b}})Dabney, Rowland, Bellemare, and
  Munos]{dabney2018distributional}
Will Dabney, Mark Rowland, Marc Bellemare, and R{\'e}mi Munos.
\newblock Distributional reinforcement learning with quantile regression.
\newblock In \emph{Proceedings of the AAAI Conference on Artificial
  Intelligence}, volume~32, 2018{\natexlab{b}}.

\bibitem[F{\"o}llmer and Schied(2008)]{follmer2008convex}
Hans F{\"o}llmer and Alexander Schied.
\newblock Convex and coherent risk measures.
\newblock \emph{Encyclopedia of Quantitative Finance}, pages 355--363, 2008.

\bibitem[Goyal and Grand-Clement(2018)]{goyal2018robust}
Vineet Goyal and Julien Grand-Clement.
\newblock Robust markov decision process: Beyond rectangularity.
\newblock \emph{arXiv preprint arXiv:1811.00215}, 2018.

\bibitem[Iyengar(2005)]{iyengar2005robust}
Garud~N Iyengar.
\newblock Robust dynamic programming.
\newblock \emph{Mathematics of Operations Research}, 30\penalty0 (2):\penalty0
  257--280, 2005.

\bibitem[Lyle et~al.(2019)Lyle, Bellemare, and Castro]{lyle2019comparative}
Clare Lyle, Marc~G Bellemare, and Pablo~Samuel Castro.
\newblock A comparative analysis of expected and distributional reinforcement
  learning.
\newblock In \emph{Proceedings of the AAAI Conference on Artificial
  Intelligence}, volume~33, pages 4504--4511, 2019.

\bibitem[Mannor and Tsitsiklis(2011)]{mannor2011mean}
Shie Mannor and John Tsitsiklis.
\newblock Mean-variance optimization in markov decision processes.
\newblock \emph{arXiv preprint arXiv:1104.5601}, 2011.

\bibitem[Mannor et~al.(2016)Mannor, Mebel, and Xu]{mannor2016robust}
Shie Mannor, Ofir Mebel, and Huan Xu.
\newblock Robust mdps with k-rectangular uncertainty.
\newblock \emph{Mathematics of Operations Research}, 41\penalty0 (4):\penalty0
  1484--1509, 2016.

\bibitem[Moldovan and Abbeel(2012)]{moldovan2012risk}
Teodor~Mihai Moldovan and Pieter Abbeel.
\newblock Risk aversion in markov decision processes via near optimal chernoff
  bounds.
\newblock In \emph{NIPS}, pages 3140--3148, 2012.

\bibitem[Morimura et~al.(2010)Morimura, Sugiyama, Kashima, Hachiya, and
  Tanaka]{morimura2010nonparametric}
Tetsuro Morimura, Masashi Sugiyama, Hisashi Kashima, Hirotaka Hachiya, and
  Toshiyuki Tanaka.
\newblock Nonparametric return distribution approximation for reinforcement
  learning.
\newblock In \emph{ICML}, 2010.

\bibitem[Morimura et~al.(2012)Morimura, Sugiyama, Kashima, Hachiya, and
  Tanaka]{morimura2012parametric}
Tetsuro Morimura, Masashi Sugiyama, Hisashi Kashima, Hirotaka Hachiya, and
  Toshiyuki Tanaka.
\newblock Parametric return density estimation for reinforcement learning.
\newblock \emph{arXiv preprint arXiv:1203.3497}, 2012.

\bibitem[Nilim and El~Ghaoui(2005)]{nilim2005robust}
Arnab Nilim and Laurent El~Ghaoui.
\newblock Robust control of markov decision processes with uncertain transition
  matrices.
\newblock \emph{Operations Research}, 53\penalty0 (5):\penalty0 780--798, 2005.

\bibitem[Osogami(2012)]{osogami2012robustness}
Takayuki Osogami.
\newblock Robustness and risk-sensitivity in markov decision processes.
\newblock \emph{Advances in Neural Information Processing Systems},
  25:\penalty0 233--241, 2012.

\bibitem[Puterman(2014)]{puterman2014markov}
Martin~L Puterman.
\newblock \emph{Markov decision processes: discrete stochastic dynamic
  programming}.
\newblock John Wiley \& Sons, 2014.

\bibitem[Rockafellar and Uryasev(2002)]{rockafellar2002conditional}
R~Tyrrell Rockafellar and Stanislav Uryasev.
\newblock Conditional value-at-risk for general loss distributions.
\newblock \emph{Journal of banking \& finance}, 26\penalty0 (7):\penalty0
  1443--1471, 2002.

\bibitem[Rockafellar et~al.(2000)Rockafellar, Uryasev,
  et~al.]{rockafellar2000optimization}
R~Tyrrell Rockafellar, Stanislav Uryasev, et~al.
\newblock Optimization of conditional value-at-risk.
\newblock \emph{Journal of risk}, 2:\penalty0 21--42, 2000.

\bibitem[Rowland et~al.(2018)Rowland, Bellemare, Dabney, Munos, and
  Teh]{rowland2018analysis}
Mark Rowland, Marc~G Bellemare, Will Dabney, R{\'e}mi Munos, and Yee~Whye Teh.
\newblock An analysis of categorical distributional reinforcement learning.
\newblock \emph{arXiv preprint arXiv:1802.08163}, 2018.

\bibitem[Rowland et~al.(2019)Rowland, Dadashi, Kumar, Munos, Bellemare, and
  Dabney]{rowland2019distributional}
Mark Rowland, Robert Dadashi, Saurabh Kumar, R{\'e}mi Munos, Marc~G Bellemare,
  and Will Dabney.
\newblock Statistics and samples in distributional reinforcement learning.
\newblock \emph{arXiv preprint arXiv:1902.08102}, 2019.

\bibitem[Sutton and Barto(2018)]{sutton2018reinforcement}
Richard~S Sutton and Andrew~G Barto.
\newblock \emph{Reinforcement learning: An introduction}.
\newblock MIT press, 2018.

\bibitem[Wiesemann et~al.(2013)Wiesemann, Kuhn, and
  Rustem]{wiesemann2013robust}
Wolfram Wiesemann, Daniel Kuhn, and Ber{\c{c}} Rustem.
\newblock Robust markov decision processes.
\newblock \emph{Mathematics of Operations Research}, 38\penalty0 (1):\penalty0
  153--183, 2013.

\end{thebibliography}

\end{document}